\documentclass[11pt]{article}

\usepackage{fullpage}
\usepackage[small]{caption}
\usepackage{fancyhdr}
\usepackage{natbib}
\usepackage{titling}

\usepackage{amssymb,amsmath,amsfonts,amsthm,algorithm,algpseudocode}
\usepackage{graphicx}
\graphicspath{{./}}
\usepackage{tikz}
\usetikzlibrary{fadings, arrows.meta, calc, shapes.geometric}

\newtheorem{theorem}{Theorem}
\newtheorem{proposition}{Proposition}
\newtheorem{lemma}{Lemma}
\newtheorem{corollary}{Corollary}
\theoremstyle{definition}
\newtheorem{definition}{Definition}
\theoremstyle{remark}
\newtheorem{remark}{Remark}

\DeclareMathOperator*{\argmax}{arg\,max}

\usepackage{xcolor}
\usepackage[hidelinks]{hyperref}

\title{Algorithmic Impact Reveals\\the Hidden Social Choice Structure of Alignment}

\newcommand{\authorbox}[1]{\makebox[16em][c]{\begin{tabular}[t]{c}#1\end{tabular}}}
\renewcommand{\And}{\hspace{4em}}
\newcommand{\AND}{\\ \noalign{\vspace{1.25em}}}

\author{%
  \authorbox{%
    \textbf{Zachary Wojtowicz}\\
    MIT \\
    Cambridge, MA 02139 \\
    {\texttt{zachwoj@mit.edu}}%
  }
  \And
  \authorbox{%
    \textbf{Michelle Si}\\
    Harvard University \\
    Cambridge, MA 02138 \\
    {\texttt{msi@g.harvard.edu}}%
  }
  \AND
  \authorbox{%
    \textbf{Finale Doshi-Velez} \\
    Harvard University \\
    Cambridge, MA 02138 \\
    {\texttt{finale@seas.harvard.edu}}%
  }
  \And
  \authorbox{%
    \textbf{Ariel Procaccia} \\
    Harvard University \\
    Cambridge, MA 02138 \\
    {\texttt{arielpro@seas.harvard.edu}}%
  }
}
\date{}

\begin{document}

\maketitle

\begin{abstract}

  When an AI algorithm makes decisions that affect more than one person, aligning it becomes a problem of social choice: how should people's divergent preferences about system behavior be reconciled and aggregated into a single coherent model?
  The standard approach to aligning frontier AI models---reinforcement learning from human feedback---largely sidesteps this question and has poor social choice guarantees. However, it remains unclear what alternative should replace it. 
  We show that, by focusing directly on an algorithm's welfare consequences, the alignment problem can be reformulated as linear optimization over a convex \textit{impact space}, which makes it amenable to the standard toolkit of welfare economics and mechanism design. 
  This reformulation clarifies how alignment protocols translate into welfare consequences and, conversely, how a social planner's desired constraints on welfare consequences can be translated back into alignment protocols. 
  We apply this transformation to show that voting-by-issues and random-dictatorship mechanisms are strategyproof and unanimous.
  Demonstrating the reverse direction, we also apply the impact representation to derive a family of alignment protocols that maximize utilitarian social welfare subject to various social desiderata, such as bounds on individual or group harm.
  We illustrate the welfare implications of these alignment protocols empirically using real human preferences over kidney allocation, charitable food distribution, LLM responses, and trolley problems.
  \footnote{All code can be found at \url{https://github.com/michelleeesi/hiddenstructure}.}

\end{abstract}

\section{Introduction}
\label{sec:intro}

Alignment is typically framed as the problem of ensuring that an AI system respects human preferences. 
However, AI systems increasingly take actions that affect more than one person, and some disagreement about how these systems should behave is inevitable. 
In such cases, alignment becomes a problem of \emph{social choice}: how should an AI model combine the potentially conflicting preferences of many individuals to select among alternatives? 

Recently, \emph{reinforcement learning from human feedback (RLHF)} has become the standard approach to aligning AI systems~\citep{bai2022training}. 
In this method, human annotators provide structured feedback on model responses, typically in the form of pairwise comparisons, which are then pooled together and used to train a model, either directly or through a proxy reward model. 
Although widely used, this approach does not directly engage with the social choice problem intrinsic to alignment, and recent work has shown that RLHF produces models that have a variety of potentially undesirable properties, such as the down-weighting of minority groups~\citep{ge2024axioms,golz2025distortion,shirali2025direct,chakraborty2024maxmin, chidambaram2024direct, siththaranjan2023distributional, halpern2025pairwise}. 

These limitations have prompted recent calls to elicit preferences at a more granular level, such as the group or individual, rather than pooling all feedback anonymously~\citep{park2024rlhf, shirali2025direct, li2024personalized, poddar2024personalizing, dai2024mapping}. However, this leaves open the question: how should individual preferences, once collected, be combined? Social choice theory was developed precisely to guide such decisions~\citep{conitzer2024social}. Yet applying classical results from the field to alignment has been complicated by the structure of modern AI systems, in which alternatives are high-dimensional model parameterizations rather than discrete options. The key insight of our paper is that this complexity is an artifact of the parameterization, not of the underlying alignment problem itself:
\begin{quote}
    \emph{A decision-making algorithm can be summarized by a single impact vector that captures how its choices affect individual welfare during deployment. Once reformulated in impact space, aligning a model to maximize utilitarian social welfare reduces to optimizing a linear function over a convex polytope.}
\end{quote}
Our main theorem establishes this correspondence. 
Specifically, we give a constructive procedure for both reformulating an alignment problem as an instance of \emph{linear social choice}~\citep{ge2024axioms} and mapping the solution back to a deployable model parameter.

\textbf{Contributions:} We show that directly considering the distributional welfare impact of an AI model provides clean insights into the social choice properties of alignment protocols. We introduce our main result in Section~\ref{sec:framework}, which shows that maximizing social welfare in the linear social choice setting is a linear optimization problem over a convex polytope that we call the impact space (Theorem~\ref{thm:welfare-geometry}). In Section~\ref{sec:strategy}, we show that in impact space strategyproof alignment mechanisms inherit the classical menu structure of \textit{option-set} mechanisms (Lemma~\ref{thm:option-set}, after \citealp{barbera_peleg_1990}), and we show that aggregating each deployment query by a monotone vote yields the strategyproof \textit{voting-by-issues} rules (Theorem~\ref{thm:voting-by-issues}), with \textit{random dictatorships} as a key example. We also establish that random dictatorships can be implemented, in expected impact, by a single model parameter (Theorem~\ref{thm:single-sp}). In Section~\ref{sec:constrained-welfare}, we use our framework to characterize a family of mechanisms that maximize welfare subject to constraints on individual and group outcomes (Theorem~\ref{thm:constrained-linear-social-choice}), such as bounds on harm (Corollary~\ref{cor:bounded-harm}) and public-spirited trade-offs between personal loss and public gain (Corollary~\ref{cor:public-spirit}). These applications highlight how working in impact space makes it easy to build social objectives beyond utilitarian welfare maximization into AI systems. Finally, in Section~\ref{sec:empirical} we illustrate the welfare consequences of these alignment protocols using real human preferences from four domains: kidney allocation, charitable food distribution, LLM responses, and trolley problems.

\section{Related Work}\label{sec:related}

Our work builds on a recent literature that studies the distributional welfare implications and other social properties of RLHF, particularly for populations with diverse preferences.  

\textbf{Characterizing RLHF Distortion.} Understanding the shortcomings of RLHF has become an area of heightened interest \citep{siththaranjan2023distributional, shirali2025direct, golz2025distortion}.
\cite{golz2025distortion} measure how suboptimal (relative to a utilitarian optimum) an alignment method can be in aggregate, while we focus on what generates that suboptimality on a participant-to-participant (or group-to-group) basis. \cite{shirali2025direct} show that pooled RLHF differs from using the average of individual choice probabilities by a variance term informed by preference dispersion.

\textbf{Strategyproof Alignment.} A separate line of work studies strategic behavior in preference collection \citep{sun2024mechanism, kleinebuening2025strategyproof}. \cite{kleinebuening2025strategyproof} show that standard RLHF procedures can be manipulated by strategic feedback providers and introduce the Pessimistic Median of MLEs algorithm, which is approximately strategyproof. We study a much more restricted domain (linear rewards) that gives simple results: the strategyproof alignment mechanisms are option-set mechanisms, with the random dictatorship as a focal member, and a random dictatorship need not be implemented by literally sampling one person's personalized model at deployment time---a single preference parameter $\theta^{\mathrm{sp}}_\lambda$ reproduces its expected impact.

\textbf{Alternative Alignment Protocols.} Other papers propose algorithmic alternatives for heterogeneous preferences~\citep{chakraborty2024maxmin, chidambaram2024direct, park2024rlhf}, from MaxMin-RLHF to personalization. These papers study ways to model or optimize over latent preference types given some desiderata, but our approach is more general and geometric: we show how different social objectives correspond to different optimization problems over that same set---the impact space. This lets us compare utilitarian alignment, strategyproof alignment, harm-bounded alignment, and public-spirit constraints within one common representation. Our work is most closely related to the linear social choice framework of \citet{ge2024axioms}. Theorem~\ref{thm:welfare-geometry} formally establishes that, when utilities are linearly representable, alignment is an instance of linear social choice.

\section{Preliminaries}
\label{sec:prelim}

\subsection{Choice in the Linear Utility Setting}

We begin with the standard model of stochastic choice assumed by \emph{Direct Preference Optimization} \citep{rafailov2023direct} and related methods. Let $X$ be a finite set of contexts, $Y$ a finite set of actions, and $N$ the number of agents. Agent $n$ has utility $u_n:X\times Y\to\mathbb R$. For a binary choice problem $(x,y,y')$, we assume stochastic choice follows a Bradley--Terry--Luce model
$$
    p_n(y\succ y'\mid x,y,y')
    =
    \sigma\!\left(u_n(y\mid x)-u_n(y'\mid x)\right),
\qquad
    \sigma(a)=\frac{\exp(a)}{1+\exp(a)}.$$ Each context--action pair is embedded as $\phi(x,y)\in\mathbb R^k$. Our central assumption is that each agent's utility can be expressed linearly in this feature space: $
    u_n(y\mid x)=\phi(x,y)^\top\theta_n$ for $
    \theta_n\in\mathbb R^k .$
For social welfare weights $w\in\Delta_N$, define the utilitarian preference parameter $\bar\theta_w=\sum_n w_n\theta_n$. By linearity, the $w$-weighted utilitarian aggregate is represented by the same feature map:
$$
    \bar u_w(y\mid x)
    =
    \sum_n w_n u_n(y\mid x)
    =
    \phi(x,y)^\top \bar\theta_w .
$$
Restricting attention to weighted sums of individual utilities is itself justified by the aggregation theorem of \citet{harsanyi1955cardinal}, which shows that a social preference satisfying the expected-utility axioms over a convex set of prospects, together with Pareto indifference, must take exactly this form.

Under linear utilities, each pairwise comparison depends only on the feature difference
$
    \alpha(x,y,y')=\phi(x,y)-\phi(x,y') .
$
We write $\alpha_z\in\mathbb R^k$ for the vector associated with a choice problem $z\in Z=X\times Y\times Y$. Let $q_{\mathrm{train}} \in \Delta([N] \times Z)$ be the distribution of training query-agent pairs, and $q_{\mathrm{dep}} \in \Delta(Z)$ be the distribution of deployment problems. 

\subsection{Deployment Welfare}

A central objective for alignment is ensuring that model behaviors improve aggregate social welfare. We measure the welfare agent $n$ receives from a model deployed at parameter $\theta$ as the amount of utility it generates relative to the baseline of choosing randomly between the two options,

\begin{equation*}
    U_n(\theta)
    =
    \mathbb E_{z\sim q_{\mathrm{dep}}}
    \left[
        \alpha_z^\top\theta_n
        \left(
            \sigma(\alpha_z^\top\theta)-\frac{1}{2}
        \right)
    \right].
\end{equation*}

The utilitarian social welfare of a deployed model is then the sum of the individual welfares, $U(\theta)=\sum_{n=1}^N w_n U_n(\theta)$. 


\section{Framework: Alignment as Linear Optimization in Impact Space}
\label{sec:framework}

As reviewed in Section~\ref{sec:related}, recent literature has catalogued many ways that anonymous RLHF can fail to represent heterogeneous populations, violating basic welfare criteria such as Pareto  efficiency~\citep{golz2025distortion, shirali2025direct, ge2024axioms, 
chakraborty2024maxmin, park2024rlhf}. 
A natural remedy is to estimate  preferences at the individual or group level and then combine these estimates into a single deployed model. 
But this immediately raises the social choice questions that anonymous RLHF sidesteps: which aggregation rule should be used? What welfare guarantees does it provide? Can agents manipulate it? 
Answering these questions requires a framework in which the welfare consequences of different alignment protocols can be directly compared. 
However, even if the goal of alignment is to combine the preferences of many individuals, working directly in model-parameter space obscures the comparison, because welfare is a nonlinear function of the model's parameters.

\emph{The key insight of our framework is that this nonlinearity can be avoided if we consider the set of potential welfare implications directly, instead of the parameterization.} 

Under our linear utility assumptions, a model's welfare effects can be summarized by a single  vector---its \emph{impact}---that lives in the same space as preferences and deployment queries (see Figure \ref{fig:overall}). Welfare is linear in this vector, so the social alignment problem reduces to optimizing a linear function over a convex set: a problem for which we have the tools of welfare economics and mechanism design at our disposal.

\begin{definition}
The \textbf{impact} of a model with parameter $\theta$ is $\psi(\theta)=\mathbb E_{z \sim q_{\mathrm{dep}}}\left[\alpha_z\left(\sigma(\alpha_z^\top \theta)-\tfrac{1}{2}\right)\right]\in\mathbb R^k$.
\end{definition}

The impact vector is a sufficient statistic for welfare.  Since $U_n(\theta) = \mathbb{E}_{z\sim q_{\mathrm{dep}}}[\alpha_z^\top\theta_n (\sigma(\alpha_z^\top\theta) - \frac{1}{2})] = \theta_n^\top\psi(\theta)$, agent $n$'s welfare is the inner product of their preference with the impact 
vector.\footnote{Economically, $\psi$ is analogous to a vector of quantities delivered and $\theta_n$ encodes agent $n$'s marginal valuations.} Utilitarian welfare is therefore also linear:  $U(\psi) = \bar{\theta}_w^\top\psi$. 

\begin{figure}[t]
    \centering
    \usetikzlibrary{fadings}
    \tikzfading[name=mist, top color=transparent!100, bottom color=transparent!0]
    \resizebox{0.585\linewidth}{!}{%
    \begin{tikzpicture}[
        scale=2, >=Latex, font=\scriptsize,
        pref/.style={dashed, line width=0.9pt, -{Latex[length=1.9mm]}},
    ]
    \usetikzlibrary{arrows.meta, calc, shapes.geometric}

    \definecolor{cA}{HTML}{2A78D6}         
    \definecolor{cB}{HTML}{E34948}         
    \definecolor{cU}{HTML}{2E2E2E}         
    \definecolor{psiedge}{RGB}{128,145,172} 
    \definecolor{psifill}{RGB}{236,240,246}

    \path[use as bounding box] (-1.55,-0.27) rectangle (1.65,1.26);
    \clip (-1.55,-0.27) rectangle (1.65,1.30);

    \coordinate (v1) at (-0.70, 0.70);   
    \coordinate (v2) at ( 0.10, 0.82);   
    \coordinate (v3) at ( 1.05, 0.40);   
    \coordinate (v4) at ( 0.70,-0.70);
    \coordinate (v5) at (-0.10,-0.82);
    \coordinate (v6) at (-1.05,-0.40);   
    \coordinate (O)  at ( 0,0);

    \fill[psifill, draw=psiedge, line width=0.7pt]
      (v1)--(v2)--(v3)--(v4)--(v5)--(v6)--cycle;
    \node[psiedge!45!black] at (-0.62,0.16) {$\bar\Psi$};

    \draw[black!85, line width=0.5pt] (-1.45,0)--(1.42,0);
    \draw[black!85, line width=0.5pt] (0,-0.95)--(0,1.25);

    \fill[white, path fading=mist] (-1.55,-0.27) rectangle (1.65,-0.03);

    \draw[black, dashed, line width=0.8pt] (v1) -- (v3);

    \draw[cA, opacity=1, line width=1.0pt] (-0.470,0.822) -- (-0.930,0.578);  
    \draw[cB, opacity=1, line width=1.0pt] ( 1.214,0.198) -- ( 0.886,0.602);  
    \draw[cU, opacity=1, line width=1.0pt] ( 0.348,0.741) -- (-0.148,0.899);  

    \fill[cA] (v1) circle (0.85pt);
    \fill[cB] (v3) circle (0.85pt);
    \node[star, star points=5, star point ratio=2.3, fill=cU, draw=cU,
          minimum size=6pt, inner sep=0pt] at (v2) {};

    \draw[pref, cA] (O) -- (-0.563,1.060);  
    \draw[pref, cB] (O) -- ( 1.213,0.984);  
    \draw[pref, cU] (O) -- ( 0.325,1.022);  

    \node[diamond, fill=cU, draw=cU, minimum size=4pt, inner sep=0pt] at (0.175,0.55) {};

    \fill[cU] (O) circle (0.9pt);
    \node[below left=-1.5pt] at (O) {$O$};

    \node[cA]   at (-0.80,0.83) {$\psi_A^\star$};
    \node[cB]   at ( 1.19,0.46) {$\psi_B^\star$};
    \node[cU]   at ( 0.16,0.98) {$\psi^\star$};
    \node[cA]   at (-0.64,1.15) {$\theta_A$};
    \node[cB]   at ( 1.36,1.04) {$\theta_B$};
    \node[cU]   at ( 0.45,1.06) {$\bar\theta_w$};
    \node[cU]   at ( 0.33,0.43) {$\psi^{\mathrm{sp}}$};

    \end{tikzpicture}%
    }
    \caption{\textbf{The impact-space view of social alignment.} Preferences ($\theta_n$, dashed) and the feasible impact set $\bar\Psi$ live in the same Euclidean space; for finitely many deployment queries, $\bar\Psi$ is a convex, origin-symmetric polytope. Preferences differ in magnitude and need not themselves be feasible impacts: both $\theta_A$ and $\theta_B$ lie outside $\bar\Psi$. Because welfare $U_n(\psi)=\theta_n^\top\psi$ is linear, each agent's ideal impact $\psi_n^\star=\arg\max_{\psi\in\bar\Psi}\theta_n^\top\psi$ is the vertex of $\bar\Psi$ farthest in the direction of $\theta_n$, which in general is \emph{not} collinear with $\theta_n$. The utilitarian optimum $\psi^\star$ maximizes welfare in the weighted-average direction $\bar\theta_w=\sum_n w_n\theta_n$, where the iso-welfare line $\{\psi:\bar\theta_w^\top\psi=\text{const}\}$ is tangent to $\bar\Psi$. The dashed segment $\operatorname{conv}(\psi_A^\star,\psi_B^\star)$ is the set of expected impacts attainable by strategyproof mechanisms (Section~\ref{sec:strategy}), with $\psi^{\mathrm{sp}}$ the equal-weight random dictatorship.}
    \label{fig:overall}
\end{figure}

This lets us rewrite social alignment as a linear optimization over the set of feasible impacts:
%
\begin{equation}
    \label{eq:social-choice-problem}
    \psi^\star
    \in
    \argmax_{\psi \in \bar{\Psi}}
    \sum_{n=1}^N w_n \theta_n^\top \psi,
    \qquad
    \bar\Psi
    =
    \operatorname{closure}\{\psi(\theta):\theta\in\mathbb R^k\}
    \subseteq S_{\mathrm{dep}} .
\end{equation}
Here, $S_{\mathrm{dep}}$ is the subspace spanned by deployment queries, and $\bar\Psi$ is the feasible set of impacts achievable by some model parameter.\footnote{We optimize over the closure because deterministic limits can be approached to arbitrary precision.} The following theorem characterizes the geometry of the impact set.

\begin{theorem}
\label{thm:welfare-geometry}
Let $\Psi=\{\psi(\theta):\theta\in\mathbb R^k\}$ and 
$\bar\Psi=\operatorname{cl}(\Psi)$, and let $S_{\mathrm{dep}}$ be the subspace of $\mathbb R^k$ spanned by the deployment queries. Then:
\begin{enumerate}
    \item $\bar\Psi\subseteq S_{\mathrm{dep}}$ is the origin-symmetric \emph{zonotope} $\bar\Psi=\{\mathbb E_{z\sim q_{\mathrm{dep}}}[t_z\,\alpha_z]:t_z\in[-\tfrac12,\tfrac12]\}$ generated by the deployment queries, and $\Psi=\operatorname{relint}\bar\Psi$, the interior of $\bar\Psi$ relative to $S_{\mathrm{dep}}$. In particular, $\bar\Psi$ is convex and centrally symmetric ($\bar\Psi=-\bar\Psi$), and $\psi$ is antisymmetric ($\psi(-\theta) = -\psi(\theta)$) with $\psi(0) = 0$.
    \item On $S_{\mathrm{dep}}$, the map $\psi$ is a $C^\infty$ diffeomorphism from $S_{\mathrm{dep}}$ onto $\Psi$. Every feasible impact $\psi\in\Psi$ is implemented by a unique parameter in $S_{\mathrm{dep}}$.
    \item $\psi$ is bounded, with 
    $
        \sup_{\theta\in\mathbb R^k}\|\psi(\theta)\|
        \leq
        \frac12\,\mathbb E_{z\sim q_{\mathrm{dep}}}\|\alpha_z\|.
    $
    \item Scaling is monotone in the direction of $\theta$:
    $
        \theta^\top \frac{\partial \psi(\nu\theta)}{\partial \nu}
        =
        \mathbb E_{z\sim q_{\mathrm{dep}}}
        \left[
            (\alpha_z^\top\theta)^2
            v(\nu\alpha_z^\top\theta)
        \right]
        \geq 0,
    $
    where $v(x)=\sigma'(x)$.
    \item The deterministic boundary in direction $\theta$ is
    $
        \lim_{\nu\to\infty}\psi(\nu\theta)
        =
        \frac12
        \mathbb E_{z\sim q_{\mathrm{dep}}}
        \left[
            \alpha_z\operatorname{sign}(\alpha_z^\top\theta)
        \right].
    $
    For generic $\theta$ (i.e., $\alpha_z^\top\theta\neq0$ for all $z\in\operatorname{supp}(q_{\mathrm{dep}})$), this limit is the unique maximizer of $\theta^\top\psi$ over $\bar\Psi$, and it is a vertex.
\end{enumerate}
\end{theorem}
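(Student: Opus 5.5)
The whole argument rests on one observation. With finitely many deployment queries $z_1,\dots,z_m$, weights $q_j>0$, and $\alpha_j=\alpha_{z_j}$, the impact map factors as
\[
\psi(\theta)=\sum_j q_j\,\alpha_j\, g(\alpha_j^\top\theta), \qquad g(a)=\sigma(a)-\tfrac12 .
\]
Here $g$ is odd, strictly increasing, and smooth, with range $(-\tfrac12,\tfrac12)$. Several parts follow at once. Antisymmetry and $\psi(0)=0$ come from the oddness of $g$. The inclusion $\Psi\subseteq S_{\mathrm{dep}}$ holds because $\psi(\theta)$ is a combination of the $\alpha_j$. Part 3 is the triangle inequality together with $|g|<\tfrac12$. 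Part 4 is direct differentiation: $\partial_\nu\psi(\nu\theta)=\sum_j q_j\alpha_j\,v(\nu\alpha_j^\top\theta)\,\alpha_j^\top\theta$, and pairing with $\theta$ gives a sum of nonnegative terms.

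For Part 5, $g(\nu a)\to\tfrac12\operatorname{sign}(a)$, so the limit formula follows. Write $Z$ for the zonotope $\{\sum_j q_j t_j\alpha_j : |t_j|\le\tfrac12\}$. Maximizing $\theta^\top\psi$ over $Z$ decouples into the separate problems $\max_{t_j} t_j\,\alpha_j^\top\theta$. When every $\alpha_j^\top\theta\neq0$, each has the unique solution $t_j=\tfrac12\operatorname{sign}(\alpha_j^\top\theta)$, so the maximizer is unique. A point of a polytope that uniquely maximizes a linear functional is a vertex. This uses $\bar\Psi=Z$, established below.

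For Part 2, I would restrict $\psi$ to $S=S_{\mathrm{dep}}$. The Jacobian there is $J(\theta)=\sum_j q_j v(\alpha_j^\top\theta)\,\alpha_j\alpha_j^\top$. It is symmetric and positive definite on $S$, because $\sum_j q_j v(\cdot)(\alpha_j^\top h)^2=0$ forces $h\perp$ every $\alpha_j$, i.e.\ $h\perp S$. So $\psi|_S$ is the gradient of the strictly convex smooth potential $F(\theta)=\sum_j q_j G(\alpha_j^\top\theta)$, with $G'=g$.

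Injectivity follows from strict monotonicity: $(\psi(\theta)-\psi(\theta'))^\top(\theta-\theta')>0$ for $\theta\ne\theta'$ in $S$. By the inverse function theorem, $\psi|_S$ is then a diffeomorphism onto its open image $\Psi$, and the image is unchanged by dropping the component of $\theta$ orthogonal to $S$. Uniqueness of the implementing parameter in $S$ is exactly the injectivity just shown.

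The main obstacle is identifying the image, $\Psi=\operatorname{relint}Z$. I would argue via convex duality. For $p\in\operatorname{relint}Z$, minimize $F(\theta)-p^\top\theta$ over $S$. A minimizer solves $\psi(\theta)=p$, so it suffices to show this objective is coercive on $S$. Along a ray $\theta=\nu d$ with $d\in S$ a unit vector, the slope of $F$ tends to the support function $h_Z(d)=\tfrac12\sum_j q_j|\alpha_j^\top d|$. Since $p$ is in the relative interior of $Z$, we have $p^\top d<h_Z(d)$ for all such $d$, and by compactness of the unit sphere in $S$ there is a uniform gap. Hence the objective grows at least linearly and a minimizer exists, giving $\operatorname{relint}Z\subseteq\Psi$.

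For the reverse inclusion, $\psi(\theta)$ corresponds to coefficients $t_j=g(\alpha_j^\top\theta)$, all strictly inside $(-\tfrac12,\tfrac12)$. Such a point satisfies $d^\top\psi(\theta)<h_Z(d)$ for every nonzero $d\in S$, since equality would require each $t_j$ with $\alpha_j^\top d\neq0$ to sit at an endpoint. So $\Psi\subseteq\operatorname{relint}Z$.

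Taking closures gives $\bar\Psi=Z$, using that $Z$ is a full-dimensional convex polytope in $S$. Convexity and the symmetry $\bar\Psi=-\bar\Psi$ of Part 1 are then immediate from the zonotope form, which also completes the vertex claim in Part 5.
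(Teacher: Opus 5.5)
Your proposal is correct and follows essentially the paper's route: $\psi$ is the gradient of a potential that is strictly convex on $S_{\mathrm{dep}}$, and the recession slope of that potential is the zonotope's support function. The one difference is that where the paper cites Rockafellar (Theorems 13.3 and 26.5) for the bijection $S_{\mathrm{dep}}\to\operatorname{relint}\bar\Psi$, you prove it by hand, with injectivity from strict monotonicity and surjectivity from minimizing $F(\theta)-p^\top\theta$.

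The ray-wise slope limit needs convexity to become a uniform bound. You can make the coercivity step explicit by taking $G(a)=\log(e^{a/2}+e^{-a/2})\ge |a|/2$. This gives $F(\theta)-p^\top\theta\ge h_Z(\theta)-p^\top\theta\ge\epsilon\|\theta\|$ on $S_{\mathrm{dep}}$, where $\epsilon>0$ is your uniform gap.
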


This structure allows us to restate the utilitarian optimal model as the boundary point of $\bar{\Psi}$ in the direction of the weighted average preference $\bar{\theta}_w$.

\begin{proposition}
    \label{prop:asymptotic-alignment}
    The limit $\lim_{\nu\to\infty}\psi(\nu \, \bar{\theta}_w)$ exists and is a utilitarian-optimal impact: $\lim_{\nu\to\infty}\psi(\nu \, \bar{\theta}_w)\in\argmax_{\psi\in\bar\Psi}\bar\theta_w^\top\psi$. In particular, we may take $\psi^\star=\lim_{\nu\to\infty}\psi(\nu \, \bar{\theta}_w)$, and this maximizer is unique whenever $\alpha_z^\top\bar\theta_w\neq0$ for all $z\in\operatorname{supp}(q_{\mathrm{dep}})$.
\end{proposition}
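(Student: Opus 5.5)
The plan is to use the zonotope description of $\bar\Psi$ from Theorem~\ref{thm:welfare-geometry}, part 1, and reduce everything to a pointwise maximization over the coefficients $t_z$. Since we may assume $q_{\mathrm{dep}}$ has finite support (as Figure~\ref{fig:overall} suggests), all expectations below are finite sums, so limits and maximizations can be taken term by term.

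\textbf{Existence of the limit.} For each $z$ in the support, $\sigma(\nu\,\alpha_z^\top\bar\theta_w)-\tfrac12$ converges as $\nu\to\infty$. The limit is $\tfrac12\operatorname{sign}(\alpha_z^\top\bar\theta_w)$, with the convention $\operatorname{sign}(0)=0$: when $\alpha_z^\top\bar\theta_w=0$ the term equals $\sigma(0)-\tfrac12=0$ for every $\nu$. Summing these finitely many convergent terms weighted by $q_{\mathrm{dep}}$ gives
$$\psi^\infty:=\lim_{\nu\to\infty}\psi(\nu\bar\theta_w)=\tfrac12\,\mathbb E_{z\sim q_{\mathrm{dep}}}\bigl[\alpha_z\operatorname{sign}(\alpha_z^\top\bar\theta_w)\bigr].$$
This matches part 5 of Theorem~\ref{thm:welfare-geometry}, but the computation is stated directly so that it also covers the non-generic case.

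\textbf{Optimality.} Any $\psi\in\bar\Psi$ can be written as $\mathbb E[t_z\alpha_z]$ with $t_z\in[-\tfrac12,\tfrac12]$. Then
$$\bar\theta_w^\top\psi=\mathbb E\bigl[t_z\,\alpha_z^\top\bar\theta_w\bigr]\le\tfrac12\,\mathbb E\bigl|\alpha_z^\top\bar\theta_w\bigr|=\bar\theta_w^\top\psi^\infty,$$
because each term $t_z c_z$, with $c_z=\alpha_z^\top\bar\theta_w$, is at most $\tfrac12|c_z|$. The limit $\psi^\infty$ lies in $\bar\Psi$ (take $t_z=\tfrac12\operatorname{sign}(c_z)$, or note that $\bar\Psi$ is closed and contains every $\psi(\nu\bar\theta_w)$). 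So $\psi^\infty$ attains the upper bound and is therefore a maximizer.

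\textbf{Uniqueness.} Suppose $c_z\neq0$ for every $z$ in the support. Equality in the bound above then forces $t_z=\tfrac12\operatorname{sign}(c_z)$ for every $z$ with positive probability. This step needs care, because a point of the zonotope may have many representations $(t_z)$. The argument still goes through: any maximizer $\psi$ has at least one representation, and every representation of a maximizer must achieve equality. Hence every representation is forced to be the sign vector, and every maximizer equals $\psi^\infty$. Alternatively, this is exactly the genericity clause of Theorem~\ref{thm:welfare-geometry}, part 5, applied with $\theta=\bar\theta_w$.

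The only step needing real care is this representation subtlety in the uniqueness argument. The degenerate case $\bar\theta_w=0$ is harmless: the limit is $0$ and the objective is identically zero, so every point of $\bar\Psi$ is optimal.
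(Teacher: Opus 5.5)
Your proof is correct and follows the paper's own argument: the limit comes from termwise convergence (Theorem~\ref{thm:welfare-geometry}(5)), optimality from the bound $\tfrac12\,\mathbb E|\alpha_z^\top\bar\theta_w|$ (the zonotope's support function), and uniqueness from forcing $t_z=\tfrac12\operatorname{sign}(\alpha_z^\top\bar\theta_w)$ in any representation of a maximizer. The finite-support assumption you invoke holds automatically because $X$ and $Y$ are finite, and the paper's dominated-convergence step would not need it anyway.
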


We can now explicitly describe how scaling affects individual welfare. For agent $n$, $$\frac{\partial U_n(\nu\theta)}{\partial \nu}=\theta_n^\top \frac{\partial \psi(\nu\theta)}{\partial \nu}=\mathbb E_{z\sim q_{\mathrm{dep}}}\left[(\alpha_z^\top\theta_n)(\alpha_z^\top\theta)v(\nu\alpha_z^\top\theta)\right].$$ Thus scaling \textit{helps} agents whose preferences align with the model direction across deployment queries, and \textit{harms} agents whose preferences covary negatively.

More generally, translating the alignment problem into impact space and considering the entire problem from the perspective of welfare invokes \textit{externalities} as the objects we would like to regulate as we design alignment protocols. Our framework gives us the machinery to do so. In the remaining sections of this paper, we discuss how our framework leads to straightforward ways of addressing various types of externalities. 




\section{Strategyproof Alignment in Impact Space}\label{sec:strategy}

When individuals have a say in how an AI system is aligned, they may be tempted to misreport their preferences to steer the outcome in their favor. Even though this problem is not a major issue for chatbots and other present AI platforms, it will likely become more salient as algorithms are deployed in increasingly consequential social domains, and their behavioral policies therefore become the subject of more direct and explicit contention among impacted stakeholders. Misreporting one's true preferences to distort alignment in a self-serving way imposes welfare externalities on others. In economics, the field of mechanism design regulates such externalities by identifying procedures for combining preferences that are \emph{strategyproof}---i.e., such that no agent can benefit from misreporting, so truthful participation is always a weakly dominant strategy.

The impact-space formulation places social alignment within the classical theory of strategyproof mechanism design, enabling us to apply its toolkit directly. Because welfare is linear in the impact vector and the feasible impact set $\bar{\Psi}$ is convex, we identify strategyproof alignment mechanisms in three nested steps. The general family consists of the \emph{option-set} mechanisms of \citet{barbera_peleg_1990}. Restricting to per-query aggregation yields \emph{voting-by-issues} rules. A leading example is the \emph{random dictatorship}, which we show is implementable by a single model parameter (exactly or in the limit) and is the deterministic limit of anonymous RLHF when the labeling pool is common across queries.

Our treatment of strategyproofness complements recent work by \cite{kleinebuening2025strategyproof}, who study strategic behavior in offline RLHF. Their setting is a learning problem: labelers strategically manipulate comparison labels, and they propose an approximately strategyproof learning algorithm. In contrast, we abstract away from statistical estimation and focus directly on the impact that preference reports have on an AI model's behavior and, through it, the distribution of welfare among those impacted. By taking advantage of our linearity assumption, we can characterize a family of simple and intuitive strategyproof mechanisms.






\begin{definition}
     A \textbf{mechanism} is a function $\mu : \mathbb R^{N \times k} \to \Delta(\bar{\Psi})$ that maps a report profile $\hat\theta=(\hat\theta_1,\dots,\hat\theta_N)$ to a probability distribution over the space of impact vectors $\bar{\Psi}$.
\end{definition}

Throughout this section we restrict attention to \emph{generic} profiles and reports---those with $\alpha_z^\top\theta_n\neq0$ for every $z\in\operatorname{supp}(q_{\mathrm{dep}})$ and every $n$---so that vote signs and ideal impacts $\psi_n^\star=\argmax_{\psi\in\bar\Psi}\theta_n^\top\psi$ are single-valued (Theorem~\ref{thm:welfare-geometry}). Equivalently, one may fix any tie-breaking convention on the measure-zero set of non-generic reports. 

Intuitively, we want a mechanism to satisfy two basic requirements. The first, as we have discussed, is \emph{strategyproofness}: an agent should not be able to obtain a better outcome, according to their true preferences, by misreporting those preferences. 

\begin{definition}
    A mechanism is \textbf{strategyproof} if, for all preference profiles $\theta \in \mathbb R^{N \times k}$ and $n \in [N]$, all $\theta_n'$,
    \begin{equation*}
        \mathbb E\big[\theta_n^\top \mu(\theta)\big] \geq \mathbb E\big[\theta_n^\top \mu(\theta_n',\theta_{-n})\big],
    \end{equation*}
    where $(\theta_{n}',\theta_{-n})$ denotes the profile obtained by replacing agent $n$'s true preferences with $\theta_n'$. 
\end{definition}

The second is \emph{unanimity}: If all agents have exactly the same preferences, then there is no social conflict to resolve: the mechanism should simply choose the impact vector that is best according to that shared preference. 
%

\begin{definition}
    A mechanism $\mu$ is \textbf{unanimous} if, for any profile $\theta$ such that $\theta_n=\bar{\theta}$ for all $n \in [N]$, $\mu(\theta)$ is the point mass on $\argmax_{\psi \in \bar{\Psi}} \bar{\theta}^\top \psi$ (a singleton, since $\bar\theta$ is generic).
\end{definition}

We point out a powerful consequence of the linearity of $\theta_n^\top\psi$: that a stochastic mechanism can be fully characterized by its \emph{expected} impact vector at each profile.
\begin{lemma}\label{lem:mean-sufficiency}
    Let $\bar\mu(\theta)=\mathbb E_{\psi\sim\mu(\theta)}[\psi]\in\bar\Psi$ be the expected impact. Then $\mathbb E[\theta_n^\top\mu(\theta)]=\theta_n^\top\bar\mu(\theta)$, so strategyproofness and unanimity depend on $\mu$ only through the expected-impact map $\bar\mu:\mathbb R^{N\times k}\to\bar\Psi$.
\end{lemma}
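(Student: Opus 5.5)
The plan has three steps: first show that the barycenter $\bar\mu(\theta)$ is well defined and lies in $\bar\Psi$, then pass the expectation through the linear functional $\psi\mapsto\theta_n^\top\psi$, and finally check that both definitions mention $\mu$ only through such expectations.

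\textbf{Well-definedness and membership in $\bar\Psi$.} By Theorem~\ref{thm:welfare-geometry}, $\bar\Psi$ is a zonotope, so it is compact and convex. In particular it is bounded, with $\|\psi\|\le\frac12\mathbb E_{z\sim q_{\mathrm{dep}}}\|\alpha_z\|$ for every $\psi\in\bar\Psi$ by part~3 extended to the closure. Hence every probability measure on $\bar\Psi$ has a finite first moment, and $\bar\mu(\theta)=\mathbb E_{\psi\sim\mu(\theta)}[\psi]$ exists. To see that $\bar\mu(\theta)\in\bar\Psi$, I would argue by separation, since $\bar\Psi$ is a closed convex set. Suppose $m=\bar\mu(\theta)\notin\bar\Psi$. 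Then there is a vector $a$ with $a^\top m>\sup_{\psi\in\bar\Psi}a^\top\psi$. Integrating the inequality $a^\top\psi\le\sup_{\psi'\in\bar\Psi}a^\top\psi'$ against $\mu(\theta)$ gives $a^\top m\le\sup_{\psi\in\bar\Psi}a^\top\psi$, a contradiction. This step is where compactness and convexity of $\bar\Psi$ are genuinely used, and it is the only step needing any care.

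\textbf{Linearity of expectation.} Since $\psi\mapsto\theta_n^\top\psi$ is linear and integrable on the bounded set $\bar\Psi$, we have $\mathbb E_{\psi\sim\mu(\theta)}[\theta_n^\top\psi]=\theta_n^\top\mathbb E_{\psi\sim\mu(\theta)}[\psi]=\theta_n^\top\bar\mu(\theta)$. Coordinatewise this is just linearity of the integral.

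\textbf{Dependence only on $\bar\mu$.} For strategyproofness, apply the identity above at the profile $\theta$ and at the deviation profile $(\theta_n',\theta_{-n})$. The defining inequality then becomes $\theta_n^\top\bar\mu(\theta)\ge\theta_n^\top\bar\mu(\theta_n',\theta_{-n})$, which involves $\mu$ only through $\bar\mu$.

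For unanimity, let $\theta_n=\bar\theta$ for all $n$ with $\bar\theta$ generic. By Theorem~\ref{thm:welfare-geometry}(5), $\bar\theta^\top\psi$ has a unique maximizer $\psi^\star$ over $\bar\Psi$. I claim $\mu(\theta)=\delta_{\psi^\star}$ if and only if $\bar\mu(\theta)=\psi^\star$. The forward direction is immediate. For the converse, note that $\bar\theta^\top\psi\le\bar\theta^\top\psi^\star$ for every $\psi\in\bar\Psi$. Equality of expectations, $\bar\theta^\top\bar\mu(\theta)=\bar\theta^\top\psi^\star$, forces $\bar\theta^\top\psi=\bar\theta^\top\psi^\star$ for $\mu(\theta)$-almost every $\psi$. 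Uniqueness of the maximizer then gives $\psi=\psi^\star$ almost surely, so $\mu(\theta)$ is the point mass at $\psi^\star$. Thus unanimity is equivalent to the condition $\bar\mu(\theta)=\psi^\star$, which depends on $\mu$ only through $\bar\mu$.
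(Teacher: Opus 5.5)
Your proof is correct and follows the paper's argument: pass the expectation through the linear functional $\psi\mapsto\theta_n^\top\psi$, read strategyproofness as an inequality between such expectations, and show that under unanimity a distribution whose mean is the unique maximizer must be the point mass. You also verify that the barycenter lies in $\bar\Psi$, which the paper takes for granted. Your point-mass step argues directly from the uniqueness of the maximizer, where the paper cites the general fact about extreme points; this is a more self-contained proof of the same special case.
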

This enables us to characterize classes of alignment protocols satisfying strategyproofness and unanimity while restricting attention to deterministic expected-impact maps.

\subsection{The General Family: Option-Set Mechanisms}

The impact space reformulation enables us to recognize social alignment as an instance of strategyproof social choice, which can be characterized generally using the option-set principle \citep{barbera_peleg_1990}. 
For a report profile, let $$O_n(\theta_{-n})=\{\bar\mu(\theta_n',\theta_{-n}):\theta_n'\in\mathbb R^k\}\subseteq\bar\Psi$$ be agent $n$'s \emph{option set}: the impacts it can induce by varying its own report while the others' reports are held fixed.

\begin{lemma}\label{thm:option-set}
    A mechanism $\mu$ is strategyproof if and only if there exist menus $M_n(\theta_{-n})\subseteq\bar\Psi$, each depending only on the other agents' reports, such that, at every profile, the mechanism deploys each agent's most-preferred impact from its menu:
    \begin{equation}
        \bar\mu(\theta)\in\argmax_{\psi\in M_n(\theta_{-n})}\theta_n^\top\psi\qquad\forall n.
    \end{equation}
    In particular, we can set $M_n(\theta_{-n})=O_n(\theta_{-n})$ to be the option set itself.
\end{lemma}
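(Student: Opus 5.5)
By Lemma~\ref{lem:mean-sufficiency}, the expected utility of agent $n$ under $\mu$ at any profile is $\theta_n^\top\bar\mu(\theta)$. Strategyproofness is therefore a statement purely about the deterministic map $\bar\mu$: for all $\theta$, $n$, and $\theta_n'$,
\[
\theta_n^\top\bar\mu(\theta)\ \geq\ \theta_n^\top\bar\mu(\theta_n',\theta_{-n}).
\]
I would prove both directions of Lemma~\ref{thm:option-set} at the level of $\bar\mu$, using only this inequality and the definition of the option set. Convexity of $\bar\Psi$ from Theorem~\ref{thm:welfare-geometry} guarantees $\bar\mu(\theta)\in\bar\Psi$, so every option set is a subset of $\bar\Psi$.

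\emph{Necessity.} Suppose $\mu$ is strategyproof, and set $M_n(\theta_{-n})=O_n(\theta_{-n})$. This menu depends only on $\theta_{-n}$ by construction. Taking $\theta_n'=\theta_n$ shows $\bar\mu(\theta)\in O_n(\theta_{-n})$. Every other element of the menu has the form $\bar\mu(\theta_n',\theta_{-n})$, and by the displayed inequality it yields at most $\theta_n^\top\bar\mu(\theta)$. Hence $\bar\mu(\theta)$ attains the maximum of $\theta_n^\top\psi$ over $O_n(\theta_{-n})$. This also establishes the final sentence of the lemma.

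\emph{Sufficiency.} Suppose menus $M_n(\theta_{-n})$ exist with $\bar\mu(\theta)\in\argmax_{\psi\in M_n(\theta_{-n})}\theta_n^\top\psi$ at every profile. Fix $n$, $\theta$ and $\theta_n'$, and apply the hypothesis at the deviated profile $(\theta_n',\theta_{-n})$. Because the menu depends only on $\theta_{-n}$, it is unchanged, so $\bar\mu(\theta_n',\theta_{-n})\in M_n(\theta_{-n})$. The hypothesis at the true profile says $\bar\mu(\theta)$ maximizes $\theta_n^\top\psi$ over that same menu, so $\theta_n^\top\bar\mu(\theta)\ge\theta_n^\top\bar\mu(\theta_n',\theta_{-n})$. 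This is strategyproofness, again via Lemma~\ref{lem:mean-sufficiency}.

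The only step needing care, and the one I would check most carefully, is that the deviated outcome belongs to the menu. This is exactly where independence of $M_n$ from $\theta_n$ is used. Everything else is bookkeeping: the genericity convention affects only which maximizer is selected, not the inequalities, and compactness of $\bar\Psi$ is not needed because the argmax is only asserted to contain $\bar\mu(\theta)$.
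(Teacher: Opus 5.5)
Your proposal is correct and follows essentially the same route as the paper. Both reduce to $\bar\mu$ via Lemma~\ref{lem:mean-sufficiency}, take $M_n=O_n$ for necessity, and for sufficiency use the hypothesis at the deviated profile $(\theta_n',\theta_{-n})$, where independence of the menu from $\theta_n$ places the deviated outcome in the same menu that $\bar\mu(\theta)$ maximizes over.
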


The lemma holds for any convex feasible set. It says that a strategyproof alignment protocol is equivalent to one that posts a menu of achievable model behaviors to each participant, which they cannot themselves enlarge, and deploys their favorite. 
The menu form is exact but implicit: an explicit characterization of all strategyproof mechanisms for expected-utility agents is a long-standing open problem even in the classical case of lotteries over finitely many alternatives \citep{barbera2011survey}. 
We therefore proceed by explicitly identifying structured subfamilies, each a member of the menu family above.

\subsection{Reduction 1: Voting by Deployment Query}

As established by Theorem~\ref{thm:welfare-geometry}, the impact set is a zonotope generated by the deployment queries. 
By decomposing the expectation over deployment queries into its individual terms, we can construct a strategyproof mechanism that aggregates the population's preferences query by query. 
To ensure strategyproofness, the mechanism must ensure that agents cannot benefit by exaggerating the magnitude of their reports, which can be accomplished by reducing preference information to directional votes on each query. Write $s_{z,n}=\operatorname{sign}(\alpha_z^\top\theta_n)\in\{\pm1\}$ for how agent $n$ would decide query $z$.\footnote{For $(z,n)$ such that $\alpha_z^\top\theta_n = 0$, an arbitrary tie-breaking rule can be used, which does not affect welfare and therefore does not affect strategyproofness.}

\begin{definition}\label{def:voting-by-issues}
    A \textbf{voting-by-issues} mechanism is specified by, for each query $z$, an aggregator $f_z:\{\pm1\}^N\to[-1,1]$ that is nondecreasing in each argument with $f_z(1,\dots,1)=1$ and $f_z(-1,\dots,-1)=-1$. Its expected impact is
    \begin{equation}
        \bar\mu(\theta)=\tfrac12\,\mathbb E_{z\sim q_{\mathrm{dep}}}\big[f_z(s_{z,1},\dots,s_{z,N})\,\alpha_z\big]
    \end{equation}
\end{definition}

\begin{theorem}\label{thm:voting-by-issues}
    Every voting-by-issues mechanism is strategyproof and unanimous. 
\end{theorem}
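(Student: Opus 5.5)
By Lemma~\ref{lem:mean-sufficiency}, it suffices to work with the expected-impact map $\bar\mu$. The plan is to verify strategyproofness query by query and then check unanimity directly.

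First, I will confirm that $\bar\mu(\theta)$ is a legitimate point of $\bar\Psi$. Since $f_z\in[-1,1]$, the coefficient $t_z=\tfrac12 f_z(s_{z,1},\dots,s_{z,N})$ lies in $[-\tfrac12,\tfrac12]$. Hence $\bar\mu(\theta)=\mathbb E_{z}[t_z\alpha_z]$ lies in the zonotope of Theorem~\ref{thm:welfare-geometry}, part 1. For implementation as a lottery, one can realize $\bar\mu(\theta)$ as a mixture of vertices of $\bar\Psi$, or simply note that the mechanism is defined through its mean.

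\textbf{Strategyproofness.} Fix agent $n$, true preference $\theta_n$, others' reports $\theta_{-n}$, and any misreport $\theta_n'$. Agent $n$'s welfare decomposes linearly across queries:
\[
\theta_n^\top\bar\mu(\theta_n',\theta_{-n})=\tfrac12\,\mathbb E_{z\sim q_{\mathrm{dep}}}\big[f_z(s'_{z,n},s_{z,-n})\,(\alpha_z^\top\theta_n)\big].
\]
Here $s'_{z,n}=\operatorname{sign}(\alpha_z^\top\theta_n')$, and the others' votes $s_{z,-n}$ do not depend on $n$'s report. The key step is a pointwise comparison for each query $z$.

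If $\alpha_z^\top\theta_n>0$, the truthful vote is $s_{z,n}=+1$. Monotonicity of $f_z$ gives $f_z(+1,s_{z,-n})\ge f_z(s'_{z,n},s_{z,-n})$ for either value of $s'_{z,n}$. Multiplying by the positive weight $\alpha_z^\top\theta_n$ preserves this inequality.

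If $\alpha_z^\top\theta_n<0$, the truthful vote is $-1$, which minimizes $f_z$. Multiplying by the negative weight again makes the truthful term at least as large. In the non-generic tie case $\alpha_z^\top\theta_n=0$, the term vanishes regardless of the vote.

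Taking the expectation over $z$ shows that truthful reporting weakly dominates any misreport. The structural point is that a report influences the outcome only through its vector of signs, which is why magnitude exaggeration is useless.

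\textbf{Unanimity.} If every agent reports $\bar\theta$ with $\bar\theta$ generic, then on each query all votes equal $s_z=\operatorname{sign}(\alpha_z^\top\bar\theta)$. The boundary conditions on $f_z$ then give $f_z(s_z,\dots,s_z)=s_z$. Therefore
\[
\bar\mu=\tfrac12\,\mathbb E_z\big[\operatorname{sign}(\alpha_z^\top\bar\theta)\,\alpha_z\big],
\]
which by Theorem~\ref{thm:welfare-geometry}, part 5, is the unique maximizer of $\bar\theta^\top\psi$ over $\bar\Psi$.

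Since this maximizer is unique, the distribution must be the point mass on it. Any lottery with this mean attains the maximum value, and because the maximizer is a unique vertex, every point in the lottery's support must equal it.

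\textbf{Main obstacle.} The only delicate point is this last step: showing that the lottery collapses to a point mass rather than merely having the right mean. It follows from uniqueness of the maximizer combined with linearity, by the argument just given. Everything else is a direct term-by-term monotonicity check.
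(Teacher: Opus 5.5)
Your proof is correct and essentially the same as the paper's. Both arguments observe that the truthful sign vote maximizes each term $f_z(s_z)(\alpha_z^\top\theta_n)$ by monotonicity of $f_z$, so it maximizes the sum, and both obtain unanimity from $f_z(s_z,\dots,s_z)=s_z$, which yields the vertex $\psi^\star(\bar\theta)$. Your direct point-mass argument via uniqueness of the maximizer is equivalent to the extreme-point step the paper invokes through Lemma~\ref{lem:mean-sufficiency}.
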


Voting-by-issues rules are the impact-space analogue of \emph{generalized median voter schemes} \citep{barbera1993generalized}: each deployment query is settled by a monotone aggregation of the participants' preferred sides. 
Our setting differs from those schemes in that mechanisms are stochastic, outcomes range over the impact space, and preferences are linear in the same space. The fact that the impact set is a zonotope generated by the queries, and therefore exhibits central symmetry, makes this per-query aggregation feasible. 

\subsection{Reduction 2: Random Dictatorships}

\begin{definition}\label{def:random-dictatorship}
    A mechanism is a \textbf{random dictatorship} with weights $\lambda \in \Delta_N$ if $\mu(\theta) = \sum_{n=1}^N \lambda_n \delta(\psi_n^\star)$. Under Lemma~\ref{lem:mean-sufficiency}, this is impact-equivalent to $\bar\mu(\theta)=\sum_{n=1}^N\lambda_n\psi_n^\star$.
\end{definition}

\begin{proposition}\label{prop:random-dictatorship}
A random dictatorship is impact-equivalent to the voting-by-issues mechanism whose aggregators are the vote averages $f_z(s_{z,\cdot})=\sum_n\lambda_n s_{z,n}$. It is therefore strategyproof and unanimous.
\end{proposition}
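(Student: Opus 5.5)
The plan is to compute the expected impact of the random dictatorship explicitly with Theorem~\ref{thm:welfare-geometry}, match it term by term to Definition~\ref{def:voting-by-issues}, and then inherit strategyproofness and unanimity from Theorem~\ref{thm:voting-by-issues} via Lemma~\ref{lem:mean-sufficiency}.

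\textbf{Step 1: each dictator's ideal impact.} Fix a generic profile. Part 5 of Theorem~\ref{thm:welfare-geometry} says that for generic $\theta_n$ the unique maximizer of $\theta_n^\top\psi$ over $\bar\Psi$ is
\[
\psi_n^\star=\tfrac12\,\mathbb E_{z\sim q_{\mathrm{dep}}}\big[\alpha_z\operatorname{sign}(\alpha_z^\top\theta_n)\big]=\tfrac12\,\mathbb E_{z\sim q_{\mathrm{dep}}}\big[s_{z,n}\,\alpha_z\big].
\]
So each dictator's ideal impact is the zonotope vertex obtained by taking the endpoint $t_z=\tfrac12 s_{z,n}$ in every generator.

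\textbf{Step 2: identify the aggregators.} By Definition~\ref{def:random-dictatorship}, the expected impact is $\bar\mu(\theta)=\sum_n\lambda_n\psi_n^\star$. Because $\operatorname{supp}(q_{\mathrm{dep}})$ is finite, we can swap the finite sum and the expectation:
\[
\bar\mu(\theta)=\tfrac12\,\mathbb E_{z\sim q_{\mathrm{dep}}}\Big[\Big(\textstyle\sum_n\lambda_n s_{z,n}\Big)\alpha_z\Big].
\]
This is exactly the voting-by-issues form with $f_z(s)=\sum_n\lambda_n s_n$.

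It remains to check that $f_z$ satisfies Definition~\ref{def:voting-by-issues}:
\begin{itemize}
\item it maps $\{\pm1\}^N$ into $[-1,1]$, because $\lambda\in\Delta_N$ makes it a convex combination of $\pm1$;
\item it is nondecreasing in each argument, because $\lambda_n\ge0$;
\item it satisfies $f_z(1,\dots,1)=\sum_n\lambda_n=1$ and likewise $f_z(-1,\dots,-1)=-1$.
\end{itemize}

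\textbf{Step 3: transfer the properties.} By Lemma~\ref{lem:mean-sufficiency}, strategyproofness and unanimity depend only on $\bar\mu$. Two mechanisms with the same expected-impact map therefore share both properties, and Theorem~\ref{thm:voting-by-issues} gives both for the random dictatorship.

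As a sanity check on unanimity: if all $\theta_n=\bar\theta$, then every $\psi_n^\star$ equals the unique maximizer, so $\mu(\theta)$ is already the point mass on that maximizer.

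\textbf{Expected obstacle.} The only delicate point is the non-generic case, where $\psi_n^\star$ or $s_{z,n}$ may be ambiguous. I would handle it by noting that the section restricts to generic reports, or fixes a tie-breaking convention. If ties are broken consistently, so that $\psi_n^\star$ is defined with the same sign convention as $s_{z,n}$, the identity in Step 2 continues to hold.
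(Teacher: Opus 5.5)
Your proposal is correct and follows essentially the same route as the paper. Both arguments identify $\psi_n^\star=\tfrac12\,\mathbb E_{z\sim q_{\mathrm{dep}}}[s_{z,n}\alpha_z]$ using Theorem~\ref{thm:welfare-geometry}(5), exchange the finite sum over agents with the expectation to obtain the vote-average aggregator, check monotonicity and $f_z(\pm\mathbf 1)=\pm1$, and then invoke Theorem~\ref{thm:voting-by-issues}; your explicit appeal to Lemma~\ref{lem:mean-sufficiency} and your range check $f_z\in[-1,1]$ simply spell out steps the paper leaves implicit.
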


Classical results force strategyproofness to coincide with random dictatorship alone, but only on richer preference domains: a universal domain over the outcomes \citep{gibbard1977}, or strictly convex single-peaked preferences over a convex set \citep{dutta_peters_sen_2002}. The assumption that preferences can be represented linearly in the model's feature space is not without loss, as it restricts the power of more general preference profiles to constrain the space of mechanisms. This yields a richer strategyproof class, with random dictatorship as a focal example.

As defined, a random dictatorship draws a single individual, with probability $\lambda_n$, and deploys their ideal impact. By Lemma~\ref{lem:mean-sufficiency} it is impact-equivalent to the more natural protocol of choosing an individual at random on a query-by-query basis and deciding each query the way that individual would. A naive implementation of either would fit a separate optimal model for each individual (e.g., a collection of person-specific LoRA adapters) and randomly select among them at deployment.

We now show something more powerful: this apparently impractical lottery can be compiled into a \emph{single} static parameter. It can be implemented exactly when its target impact lies in the relative interior of $\bar\Psi$, and as a limit of single parameters otherwise. 

\begin{theorem}\label{thm:single-sp}
    The random dictatorship with weights $\lambda\in\Delta_N$ has target impact $\bar\psi_\lambda=\sum_{n=1}^N\lambda_n\psi_n^\star\in\bar\Psi$. This impact is attained by a single model parameter, which then reproduces the random dictatorship's welfare for every agent, if and only if $\bar\psi_\lambda\in\Psi$, in which case the parameter is $\theta^{\mathrm{sp}}_\lambda=\psi^{-1}(\bar\psi_\lambda)$. 
    When $\bar\psi_\lambda$ lies on the boundary of $\bar\Psi$, it is instead the limit of the impacts of a sequence of single parameters.
\end{theorem}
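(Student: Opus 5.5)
The proof is essentially an assembly of Theorem~\ref{thm:welfare-geometry} with Lemma~\ref{lem:mean-sufficiency}; we take the three claims in order.

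\textbf{Membership.} First we check that $\bar\psi_\lambda\in\bar\Psi$. For generic $\theta_n$, part~5 of Theorem~\ref{thm:welfare-geometry} gives
\[
\psi_n^\star=\tfrac12\,\mathbb E_{z\sim q_{\mathrm{dep}}}\bigl[\alpha_z\operatorname{sign}(\alpha_z^\top\theta_n)\bigr]\in\bar\Psi .
\]
Part~1 says $\bar\Psi$ is convex, so the convex combination $\bar\psi_\lambda=\sum_n\lambda_n\psi_n^\star$ also lies in $\bar\Psi$. Equivalently, it is the zonotope point with coefficients
\[
t_z=\tfrac12\sum_n\lambda_n s_{z,n}\in[-\tfrac12,\tfrac12].
\]

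\textbf{The ``if and only if''.}
\begin{itemize}
\item ($\Leftarrow$) Suppose $\bar\psi_\lambda\in\Psi$. By part~2, $\psi$ restricted to $S_{\mathrm{dep}}$ is a diffeomorphism onto $\Psi$, so $\theta^{\mathrm{sp}}_\lambda=\psi^{-1}(\bar\psi_\lambda)\in S_{\mathrm{dep}}$ is well defined and unique in $S_{\mathrm{dep}}$.
\item Welfare reproduction: for every $n$, $U_n(\theta^{\mathrm{sp}}_\lambda)=\theta_n^\top\psi(\theta^{\mathrm{sp}}_\lambda)=\theta_n^\top\bar\psi_\lambda$. By Lemma~\ref{lem:mean-sufficiency} and Definition~\ref{def:random-dictatorship}, this is exactly agent $n$'s expected welfare under the random dictatorship.
\item ($\Rightarrow$) Suppose some single parameter $\theta$ attains the impact, i.e.\ $\psi(\theta)=\bar\psi_\lambda$. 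Then $\bar\psi_\lambda\in\Psi$ by the definition of $\Psi$.
\end{itemize}
One subtlety: a parameter $\theta\notin S_{\mathrm{dep}}$ has the same impact as its projection onto $S_{\mathrm{dep}}$, since $\alpha_z^\top\theta$ only sees that component. This is why uniqueness is claimed only within $S_{\mathrm{dep}}$.

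\textbf{Boundary case.} If $\bar\psi_\lambda\in\bar\Psi\setminus\Psi$, we use $\bar\Psi=\operatorname{cl}(\Psi)$ from part~1. This gives a sequence $\psi_j\in\Psi$ with $\psi_j\to\bar\psi_\lambda$, and setting $\theta_j=\psi^{-1}(\psi_j)$ yields single parameters with $\psi(\theta_j)\to\bar\psi_\lambda$. A concrete choice is $\psi_j=(1-1/j)\bar\psi_\lambda$. This lies in $\operatorname{relint}\bar\Psi=\Psi$ because $\bar\Psi$ is convex, centrally symmetric, and has the origin in its relative interior, so shrinking toward $0$ enters the relative interior. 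Since each $U_n$ is linear in the impact, the welfares also converge to the random-dictatorship welfares.

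\textbf{Main obstacle.} I expect the only real subtlety to be the characterization $\Psi=\operatorname{relint}\bar\Psi$, which we take from part~1 of Theorem~\ref{thm:welfare-geometry}. Given it, the boundary of $\bar\Psi$ (relative to $S_{\mathrm{dep}}$) is exactly the set of non-attainable feasible impacts, and the dichotomy in the statement is immediate. The boundary case does occur: for instance, when all agents vote identically on every query, $\bar\psi_\lambda$ is a vertex.
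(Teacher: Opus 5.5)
Your proposal is correct and follows essentially the same route as the paper. Membership follows from convexity of $\bar\Psi$, and the equivalence and uniqueness from $\Psi=\operatorname{relint}\bar\Psi$ together with the bijection of $\psi$ from $S_{\mathrm{dep}}$ onto $\Psi$. Welfare reproduction follows from Lemma~\ref{lem:mean-sufficiency}, and the boundary case uses the shrunken points $(1-\varepsilon)\bar\psi_\lambda$, which lie in $\Psi$ because $0=\psi(0)\in\operatorname{relint}\bar\Psi$. Your extra remarks (that parameters outside $S_{\mathrm{dep}}$ share the impact of their projection, and that welfares converge by linearity) are correct additions, not a different argument.
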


Even though $\theta^{\mathrm{sp}}_\lambda$ is one fixed parameterization, its stochastic responses reproduce, in expectation, the same impact as randomizing over agents' ideal models. 
Mathematically, it follows from the invertibility of $\psi$; conceptually, it reflects that model stochasticity is itself a form of compromise between conflicting individuals (see Figure~\ref{fig:constrainedwelfare}). Recall $\psi_n^\star = \argmax_{\psi \in \bar\Psi} \theta_n^\top \psi$ is agent $n$'s ideal impact.





Having established that a random dictatorship is strategyproof and can be compiled into a single model parameter, we now ask how much doing so constrains the achievable welfare. First, note that we can select a set of weights $\lambda$ to achieve any $\psi \in \Psi^{\mathrm{sp}} = \operatorname{Hull}(\{\psi_n^\star\}_{n=1}^N)$. 
Intuitively, each $\psi_n^\star$ is the support point of $\bar\Psi$ in the direction of $\theta_n$. If agent preferences are spread over all directions, then their convex hull $\Psi^{\mathrm{sp}}$ will approximate $\bar\Psi$. Welfare will be low when social welfare maximization selects a ``compromise'' impact vector that is dissimilar to all individual vectors. 

\begin{proposition}
    \label{prop:unbounded-sp-distortion}
    Let $\bar{\Psi}\subset\mathbb R^k$ be any compact, convex, origin-symmetric set
    with $\dim(\bar{\Psi}) \geq 2$, suppose the welfare weights $w\in\Delta_N$ are not a point mass, and write $U^\star(\theta)=\max_{\psi\in\bar{\Psi}}\bar{\theta}_w^\top\psi$ for the optimal utilitarian welfare. For any weights $\lambda \in \Delta_N$ and any $\delta>0$, there exists a preference profile $\theta$ with $U^\star(\theta)>0$ at which the random dictatorship $\mu_\lambda^{\textrm{sp}}$ attains at most a $\delta$ fraction of the optimum,
    $\mathbb{E}[U(\mu_\lambda^{\textrm{sp}}(\theta))] \leq \delta\, U^\star(\theta)$. Hence, for non-degenerate welfare weights, the worst-case distortion $U^\star(\theta)/\mathbb{E}[U(\mu_\lambda^{\textrm{sp}}(\theta))]$ of every random dictatorship is unbounded.\footnote{Some restriction on $w$ is necessary: if $w=\delta_i$, then $U^\star(\theta)=\max_{\psi\in\bar\Psi}\theta_i^\top\psi=\theta_i^\top\psi_i^\star$, while origin symmetry gives $\theta_i^\top\psi\geq-U^\star(\theta)$ for all $\psi\in\bar\Psi$; hence every profile satisfies $\mathbb E[U(\mu_\lambda^{\mathrm{sp}}(\theta))]=\sum_n\lambda_n\theta_i^\top\psi_n^\star\geq(2\lambda_i-1)\,U^\star(\theta)$, so for $\lambda_i>\tfrac12$ the distortion is bounded.}
\end{proposition}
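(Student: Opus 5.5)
The plan is to exploit the fact that the random dictatorship only ever deploys \emph{individual} support points $\psi_n^\star$, while the utilitarian optimum is the support point of $\bar\Psi$ in the direction $\bar\theta_w$. I will build a profile in which every agent's preference is dominated by a huge component along a single direction $\pm u$, with the components cancelling in the $w$-weighted average. The average $\bar\theta_w=d$ is then a fixed vector, so $U^\star=h(d):=\max_{\psi\in\bar\Psi}d^\top\psi>0$ stays fixed, while every dictator's ideal impact sits near a point on which $d^\top\psi\approx0$.

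\textbf{Step 1 (geometry).} Work in the affine hull $S$ of $\bar\Psi$, which is a linear subspace by origin symmetry, has $\dim S\geq2$, and contains $\bar\Psi$ with nonempty relative interior. Fix any nonzero $d\in S$, chosen generically, so that $h(d)>0$. I claim there is a unit direction $u\in S$ and a boundary point $x\in\bar\Psi$ with the following properties: $x$ is the unique maximizer of $u^\top\psi$ over $\bar\Psi$, and $d^\top x=0$. By symmetry, $-x$ is then the unique maximizer in direction $-u$, and also $d^\top(-x)=0$.

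To find $(u,x)$, I will move $u$ along a great circle in a 2-plane of $S$ containing $d$, from $d/\|d\|$ to $-d/\|d\|$. Along this path the support set $F(u)$ is an upper-semicontinuous, convex-valued correspondence. The value $d^\top F(u)$ starts at $h(d)>0$ and ends at $-h(d)$, so by an intermediate-value argument for such correspondences some $F(u)$ contains a point with $d^\top x=0$.

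The main obstacle is uniqueness: $F(u)$ might be a nontrivial face rather than a single exposed point. To handle this, I first try to perturb $u$ slightly, since exposed directions are dense. This moves the exposed point only slightly and keeps $|d^\top x|\leq\eta$ for any prescribed $\eta>0$, which is all we actually need. In the zonotope case the genericity conditions of Theorem~\ref{thm:welfare-geometry} already give uniqueness.

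\textbf{Step 2 (profile).} Since $w$ is not a point mass, pick $i\neq j$ with $w_i,w_j>0$. For a large parameter $M$, set
\[
\theta_i=\tfrac{M}{w_i}u+d,\qquad \theta_j=-\tfrac{M}{w_j}u,\qquad \theta_n=\pm M u \ \text{for } n\neq i,j,
\]
and then compensate the remaining weighted mass in $\theta_i$ so that $\sum_n w_n\theta_n=d$ exactly. As $M\to\infty$, each normalized $\theta_n/\|\theta_n\|$ tends to $\pm u$, and $U^\star(\theta)=h(d)$ is independent of $M$. If required, add an arbitrarily small perturbation to make all reports generic.

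\textbf{Step 3 (limit).} Because $\pm x$ are the unique maximizers in directions $\pm u$, the maximizer correspondence is upper hemicontinuous (Berge). Hence $\psi_n^\star\to\pm x$, so $d^\top\psi_n^\star\to d^\top(\pm x)$, whose absolute value is at most $\eta$. Therefore
\[
\mathbb E[U(\mu^{\mathrm{sp}}_\lambda(\theta))]=\sum_n\lambda_n d^\top\psi_n^\star\leq \eta+o(1),
\]
uniformly in $\lambda\in\Delta_N$. Choosing $\eta<\delta h(d)/2$ and $M$ large gives the bound $\leq\delta U^\star(\theta)$. Unboundedness of the distortion follows when the random dictatorship's welfare is positive. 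If it is nonpositive, the ratio is already infinite or undefined-in-favour, and to make it a large positive ratio we can tilt slightly toward $d$.
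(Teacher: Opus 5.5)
There is a genuine gap in Step~1, and it bites in exactly the case the paper cares about: polytopes, in particular the zonotopes of Theorem~\ref{thm:welfare-geometry}.

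You fix $d$ first, generically. You then look for a point $x$ that is the unique maximizer in some direction $u$ with $d^\top x=0$, or at least $|d^\top x|\le\eta$ for arbitrarily small $\eta$. For polytopes this fails. Take $\bar\Psi=[-1,1]^2$ (a zonotope) and $d=(1,\tfrac12)$, so $h(d)=\tfrac32$:
\begin{itemize}
\item The only unique maximizers are the four vertices, with $d$-values $\pm\tfrac32,\pm\tfrac12$. Hence $|d^\top x|\ge\tfrac12=\tfrac13 h(d)$ for every admissible $x$, and your Step~3 bound can never certify a fraction below $\tfrac13$.
\item The failure is not only in the bound. With $\lambda$ a point mass, the dictator's welfare itself lies in $\{\pm\tfrac12,\pm\tfrac32\}$ while $U^\star=\tfrac32$.
\item Your intermediate-value argument does find a $u$ with $0\in d^\top F(u)$. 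But no vertex lies on $d^\perp$, so that $F(u)$ is necessarily a nontrivial face. In the example, $u=(0,1)$, $F(u)$ is the top edge, and $x=(-\tfrac12,1)$. So $u$ is a \emph{non}-generic direction, and the appeal to the genericity conditions of Theorem~\ref{thm:welfare-geometry} does not apply.
\item The perturbation repair also fails. Upper hemicontinuity only places the maximizer for a nearby exposed direction $u'$ close to the \emph{face} $F(u)$, not close to the particular point $x$ inside it. For a polytope that maximizer is a vertex of $F(u)$, whose $d$-value is bounded away from $0$ by a constant independent of $\eta$. In the example it is $(1,1)$ or $(-1,1)$.
\end{itemize}
Your route does work verbatim when $\bar\Psi$ is strictly convex.

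The missing idea is to reverse the order of choices, which is what the paper does. First pick a direction $u\in S$ whose maximizer $x$ over $\bar\Psi$ is unique, for example a point of differentiability of the support function. Then, using $\dim S\ge 2$, pick a nonzero $d\in S$ with $d\perp x$. Origin symmetry puts $0$ in the relative interior, so $h(d)>0$, while $d^\top(\pm x)=0$ holds exactly. Neither the intermediate-value step nor the perturbation is then needed.

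With this choice your Steps~2--3 coincide with the paper's proof after rescaling $\varepsilon=1/M$, $d_1=u$, $v=x$, $d_2=d$. Those steps are the large cancelling components $\pm Mu$, the fixed residual welfare direction $d$, and Berge's theorem giving $\psi_n^\star\to\pm x$ for every selection.

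Two smaller points:
\begin{itemize}
\item As written, your Step~2 gives $\sum_n w_n\theta_n=w_i d+\cdots$ rather than $d$. The choice $\theta_i=Mu+d$ and $\theta_n=-\tfrac{w_i}{1-w_i}Mu+d$ for all $n\neq i$ gives $\bar\theta_w=d$ exactly. It needs only one index with $0<w_i<1$ and tolerates zero weights.
\item Your final paragraph about making the ratio positive is unnecessary, since the statement only asks for $\mathbb E[U]\le\delta U^\star$.
\end{itemize}
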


Intuitively, the proof shows that this worst case arises when an individual or small group has a very strong preference along some dimension that the overwhelming majority has a very weak preference in the opposite direction. In such cases, a random dictatorship will implement the weak preferences of the majority, which causes outsized harm to the minority. In the applied domains that we study, this structure does not generally arise and the strategyproof mechanism's welfare sub-optimality is bounded (see Figure \ref{fig:expectedwelfare} in Appendix \ref{app:empirics}).

These results also shed new light on the standard practice of RLHF. Anonymous RLHF fits a single parameter $\hat{\theta}^{\mathrm{RLHF}}$ to the pooled labels, ignoring who produced them. Our unifying result is that its deployed impact is exactly that of a simple stochastic protocol---answer each query by sampling a labeler and following their choice---and therefore depends on the training data only through the population's per-query label frequencies.

\begin{proposition}\label{prop:rlhf-impact}
    Suppose $q_{\mathrm{train}}$ and $q_{\mathrm{dep}}$ share the same query marginal, and let $\hat{\theta}^{\mathrm{RLHF}}$ be any stationary point of the anonymous RLHF objective. 
    Writing $\mu_z=\mathbb E_{n\sim q_{\mathrm{train}}(\cdot\mid z)}[\mathbb E[c\mid n,z]]$ for the population's label frequency on query $z$, $$\psi(\hat{\theta}^{\mathrm{RLHF}})=\mathbb E_{z\sim q_{\mathrm{dep}}}\big[(\mu_z-\tfrac12)\,\alpha_z\big].$$ 
    That is, the deployed model is impact-equivalent to the random-labeler protocol that answers each deployment query $z$ by sampling an agent $n\sim q_{\mathrm{train}}(\cdot\mid z)$ and a response $c\sim p(\cdot\mid n,z)$.
\end{proposition}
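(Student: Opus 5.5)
The plan is to work with the first-order conditions of the anonymous RLHF objective and to exploit that the query marginals of training and deployment coincide. Write the pooled objective as the expected Bradley--Terry log-likelihood
\[
L(\theta)=\mathbb E_{(n,z)\sim q_{\mathrm{train}}}\,\mathbb E_{c\sim p(\cdot\mid n,z)}\big[c\log\sigma(\alpha_z^\top\theta)+(1-c)\log(1-\sigma(\alpha_z^\top\theta))\big],
\]
with $c\in\{0,1\}$ indicating that the first response was chosen. Because $\frac{d}{da}\log\sigma(a)=1-\sigma(a)$ and $\frac{d}{da}\log(1-\sigma(a))=-\sigma(a)$, the gradient is
\[
\nabla L(\theta)=\mathbb E_{(n,z)}\big[(\mathbb E[c\mid n,z]-\sigma(\alpha_z^\top\theta))\,\alpha_z\big].
\]
I would then condition on $z$: writing $q(z)$ for the common query marginal and averaging over $n\sim q_{\mathrm{train}}(\cdot\mid z)$ gives
\[
\nabla L(\theta)=\mathbb E_{z\sim q}\big[(\mu_z-\sigma(\alpha_z^\top\theta))\,\alpha_z\big].
\]
At a stationary point $\hat\theta^{\mathrm{RLHF}}$ this vanishes. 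I subtract $\tfrac12$ inside the bracket and rearrange to obtain
\[
\mathbb E_{z\sim q}\big[(\sigma(\alpha_z^\top\hat\theta)-\tfrac12)\alpha_z\big]=\mathbb E_{z\sim q}\big[(\mu_z-\tfrac12)\alpha_z\big].
\]
Since $q=q_{\mathrm{dep}}$ on queries, the left side is exactly $\psi(\hat\theta^{\mathrm{RLHF}})$ by the definition of impact, which gives the displayed formula.

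The impact-equivalence claim then follows by computing the expected impact of the random-labeler protocol on query $z$. That protocol selects the first response with probability $\mu_z$, so it yields the impact contribution $(\mu_z-\tfrac12)\alpha_z$, using the same "relative to a coin flip" normalization as $U_n$. By Lemma~\ref{lem:mean-sufficiency} (linearity), every agent's welfare under the protocol then coincides with their welfare under $\hat\theta^{\mathrm{RLHF}}$.

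The main subtlety is the phrase ``any stationary point''. The objective is concave, but it need not be strictly concave, and a maximizer may fail to exist; for example, if some $\mu_z\in\{0,1\}$ the supremum is approached only at infinity. The argument above needs only $\nabla L=0$, so it applies to every stationary point that exists, and non-uniqueness of the parameter is harmless because the impact is pinned down regardless. I would add a remark that in the limiting case the same identity holds for the limit of the impacts of a maximizing sequence, consistent with Theorem~\ref{thm:single-sp}. A second small point is that the formula shows the deployed impact lies in $\bar\Psi$ directly: each $\mu_z-\tfrac12\in[-\tfrac12,\tfrac12]$, matching the zonotope description in Theorem~\ref{thm:welfare-geometry}.
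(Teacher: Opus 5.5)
Your proof is correct and follows the paper's argument essentially verbatim. You write the stationarity condition of the pooled cross-entropy as $\mathbb E_z[(\mu_z-\sigma(\alpha_z^\top\hat\theta^{\mathrm{RLHF}}))\alpha_z]=0$, use the shared query marginal to read it under $q_{\mathrm{dep}}$, subtract $\tfrac12\mathbb E[\alpha_z]$, and obtain the random-labeler protocol's welfare by linearity, just as the paper does. One small correction to your side remark, which does not affect the proof: having $\mu_z\in\{0,1\}$ for some $z$ does not by itself rule out a stationary point. By Theorem~\ref{thm:welfare-geometry}, a stationary point exists exactly when $\mathbb E_{z\sim q_{\mathrm{dep}}}[(\mu_z-\tfrac12)\alpha_z]$ lies in $\Psi=\operatorname{relint}\bar\Psi$.
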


The identity holds because the RLHF first-order condition matches the first moment of $\sigma(\alpha_z^\top\hat{\theta}^{\mathrm{RLHF}})$ against $\alpha_z$ to that of $\mu_z$, and the impact $\psi$ is exactly that moment. In the deterministic limit of labeler accuracy, this turns into a per-query vote count, and the impact into that of a voting-by-issues rule.

\begin{corollary}\label{cor:rlhf-convergence}
    Suppose, as in Proposition~\ref{prop:rlhf-impact}, that $q_{\mathrm{train}}$ and $q_{\mathrm{dep}}$ share the same query marginal. 
    In the deterministic labeling limit $\mathbb E[c\mid n,z]\to\mathbf 1\{\alpha_z^\top\theta_n>0\}$, anonymous RLHF converges to the voting-by-issues mechanism with per-query aggregators $f_z(s_{z,\cdot})=\sum_n q_{\mathrm{train}}(n\mid z)\,s_{z,n}$, and is therefore strategyproof and unanimous. 
    When the labeling weights are query-independent, $q_{\mathrm{train}}(n\mid z)=\lambda_n$, it is the random dictatorship with weights $\lambda$.
\end{corollary}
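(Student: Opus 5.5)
The plan is to obtain the corollary directly from Proposition~\ref{prop:rlhf-impact} by passing to the limit in its impact formula, and then to recognize the limit as a voting-by-issues mechanism in the sense of Definition~\ref{def:voting-by-issues}.

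\textbf{Step 1 (limit of label frequencies).} Proposition~\ref{prop:rlhf-impact} gives $\psi(\hat\theta^{\mathrm{RLHF}})=\mathbb E_{z\sim q_{\mathrm{dep}}}[(\mu_z-\tfrac12)\alpha_z]$ with $\mu_z=\sum_n q_{\mathrm{train}}(n\mid z)\,\mathbb E[c\mid n,z]$. In the deterministic limit $\mathbb E[c\mid n,z]\to\mathbf 1\{\alpha_z^\top\theta_n>0\}=\tfrac12(1+s_{z,n})$ (using genericity). Since $q_{\mathrm{train}}(\cdot\mid z)$ sums to one, we get
\[
\mu_z-\tfrac12\;\to\;\tfrac12\sum_n q_{\mathrm{train}}(n\mid z)\,s_{z,n}.
\]
Because $q_{\mathrm{dep}}$ is supported on finitely many queries ($X,Y$ are finite), the expectation is a finite sum. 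The limit therefore passes through it, and
\[
\psi(\hat\theta^{\mathrm{RLHF}})\to\tfrac12\,\mathbb E_{z}\big[f_z(s_{z,\cdot})\,\alpha_z\big],\qquad f_z(s)=\sum_n q_{\mathrm{train}}(n\mid z)\,s_n .
\]
The impact formula holds for any stationary point at each stage of the limit, so non-uniqueness of $\hat\theta^{\mathrm{RLHF}}$ causes no problem. The convergence is of impacts, which is the sense in which the corollary's ``converges'' should be read.

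\textbf{Step 2 (verify the voting-by-issues conditions).} Each $f_z$ maps $\{\pm1\}^N$ into $[-1,1]$ because it is a convex combination of $\pm1$. It is nondecreasing in each coordinate because its weights are nonnegative. It satisfies $f_z(1,\dots,1)=1$ and $f_z(-1,\dots,-1)=-1$. Theorem~\ref{thm:voting-by-issues} then yields strategyproofness and unanimity. For the query-independent case $q_{\mathrm{train}}(n\mid z)=\lambda_n$, the aggregators become $\sum_n\lambda_n s_{z,n}$, and Proposition~\ref{prop:random-dictatorship} identifies this as the random dictatorship with weights $\lambda$.

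\textbf{Main obstacle.} The only delicate point is interpreting the limit. The proof must state clearly that the statement concerns the limiting expected-impact map as a mechanism of reports, and that the labels in the limit are generated by the reported preferences. I would handle this by treating the limiting labels as $\mathbf 1\{\alpha_z^\top\hat\theta_n>0\}$, so that the limiting map is a function of the report profile. I would also note that a query with $q_{\mathrm{train}}(\cdot\mid z)$ undefined has zero mass under $q_{\mathrm{dep}}$, since the two distributions share the same query marginal, so such queries do not affect the impact.
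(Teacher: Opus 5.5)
Your proposal is correct and follows the paper's proof essentially line for line. Both pass to the deterministic limit in the impact identity of Proposition~\ref{prop:rlhf-impact}, rewrite $\mu_z-\tfrac12$ as $\tfrac12 f_z(s_{z,\cdot})$ using $\sum_n q_{\mathrm{train}}(n\mid z)=1$, and then invoke Theorem~\ref{thm:voting-by-issues} and Proposition~\ref{prop:random-dictatorship}. The remaining differences are cosmetic: you exchange limit and expectation by finiteness of the query set where the paper cites dominated convergence, and you make explicit, correctly, that genericity is needed for the indicator-to-sign identity and that the limiting labels are generated by the reported preferences.
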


This highlights a connection between RLHF and the strategyproof voting-by-issues family, reaching the random-dictatorship point when the labeling pool is the same across queries. 

\section{Linear Constraints Can Enforce Externality Bounds and Trade-offs}
\label{sec:constrained-welfare}



In practice, alignment designers have objectives beyond strictly maximizing utilitarian welfare or ensuring strategyproofness. A designer may want to pursue high total welfare while guaranteeing that no individual or group is harmed beyond some threshold or, more generally, to guarantee normative commitments about how the benefits and costs of alignment are distributed.

The impact-space formulation makes this task straightforward. Because welfare is linear in the impact vector, any welfare guarantee expressible as a linear inequality becomes a halfspace constraint on $\psi$. As we show below, a floor on individual welfare, a bound on the ratio of personal harm to social benefit, or a constraint that no individual imposes externalities above some level can all be expressed in this way. 

When augmented with such a linear constraint, the designer's problem becomes:
\begin{equation}
    \psi^{\mathrm{wel}}
    \in
    \argmax_{\psi\in\bar{\Psi}}
    \bar{\theta}_w^\top\psi
    \quad
    \mathrm{s.t.}
    \quad
    \gamma_n^\top\psi\geq c_n
    \ \forall\, n,
\end{equation}
for constraint directions $\gamma_n \in \mathbb{R}^k$ and thresholds $c_n \in \mathbb{R}$. Since each constraint is a halfspace and $\bar{\Psi}$ is convex, closed, and bounded, this is a linear optimization with linear constraints whenever the feasible set is nonempty. Different choices of $\gamma_n$ and $c_n$ encode different commitments. We develop three natural families below—absolute welfare floors, public-spirit constraints, and participation-externality bounds---and show that all three admit closed-form characterizations.

\begin{theorem}
\label{thm:constrained-linear-social-choice}
If $\bar{\Psi}\cap\bigcap_n\{\psi:\gamma_n^\top\psi\geq c_n\}$ is nonempty, then an optimum $\psi^{\mathrm{wel}}$ exists. Moreover, there are multipliers $\eta_n\geq0$ such that
$$
    \psi^{\mathrm{wel}}
    \in
    \argmax_{\psi\in\bar{\Psi}}
    \left(
        \bar{\theta}_w+\sum_{n=1}^N \eta_n\gamma_n
    \right)^\top\psi,
$$
with complementary slackness $\eta_n(\gamma_n^\top\psi^{\mathrm{wel}}-c_n)=0$ for all $n$. Equivalently, only the binding constraints $\mathcal B=\{n:\eta_n>0\}$ tilt the objective, giving effective direction $\bar{\theta}_w+\sum_{n\in\mathcal B}\eta_n\gamma_n$. The optimal utilitarian value is weakly decreasing and concave in the thresholds $c$.
\end{theorem}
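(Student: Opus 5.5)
Existence comes first. By Theorem~\ref{thm:welfare-geometry}, $\bar\Psi$ is a zonotope, hence a compact convex polytope in $S_{\mathrm{dep}}$. Each set $\{\psi:\gamma_n^\top\psi\geq c_n\}$ is a closed halfspace. The feasible set $F=\bar\Psi\cap\bigcap_n\{\gamma_n^\top\psi\geq c_n\}$ is therefore a nonempty compact convex polyhedron. The linear objective $\bar\theta_w^\top\psi$ is continuous, so it attains its maximum on $F$ by Weierstrass, and this maximum is $\psi^{\mathrm{wel}}$.

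For the multipliers I will use Lagrangian duality for linear programs rather than Slater's condition. Slater may fail here: the constraints could pin $F$ to a single boundary point of $\bar\Psi$. Since $\bar\Psi$ is a polytope, I write it as $\{\psi: A\psi\le b\}$ within $S_{\mathrm{dep}}$; the affine-hull equalities can be folded in as pairs of inequalities or handled by restricting to $S_{\mathrm{dep}}$. The problem then becomes a finite LP
\[
\max \bar\theta_w^\top\psi \quad\text{s.t.}\quad A\psi\le b,\ \gamma_n^\top\psi\ge c_n .
\]
Since the LP is feasible and bounded, strong LP duality (equivalently, the KKT conditions, which hold for linear constraints without any qualification) gives multipliers $\eta_n\ge0$ for the $\gamma$-constraints and $\rho\ge0$ for the facet constraints. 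These satisfy stationarity $\bar\theta_w+\sum_n\eta_n\gamma_n=A^\top\rho$ together with complementary slackness for both families.

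I then split the Lagrangian partially. Keep $\bar\Psi$ as the domain and dualize only the $\gamma$-constraints. For any $\psi\in\bar\Psi$, the vector $\bar\theta_w+\sum_n\eta_n\gamma_n=A^\top\rho$ satisfies
\[
(A^\top\rho)^\top\psi=\rho^\top A\psi\le\rho^\top b .
\]
Complementary slackness for $\rho$ gives $\rho^\top A\psi^{\mathrm{wel}}=\rho^\top b$. Hence $\psi^{\mathrm{wel}}$ maximizes the tilted objective $\bigl(\bar\theta_w+\sum_n\eta_n\gamma_n\bigr)^\top\psi$ over $\bar\Psi$. Complementary slackness $\eta_n(\gamma_n^\top\psi^{\mathrm{wel}}-c_n)=0$ comes straight from the KKT conditions. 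It implies that only indices in $\mathcal B$ contribute, which is the restated "effective direction" form.

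Let $V(c)$ denote the optimal value as a function of the threshold vector $c$. Raising any $c_n$ shrinks $F$, so $V$ is weakly decreasing. For concavity, take feasible $c,c'$ with optimizers $\psi,\psi'$ and $t\in[0,1]$. The point $t\psi+(1-t)\psi'$ lies in $\bar\Psi$ by convexity and satisfies the constraints at $tc+(1-t)c'$ by linearity. Therefore
\[
V\bigl(tc+(1-t)c'\bigr)\ge tV(c)+(1-t)V(c'),
\]
on the convex domain where $F\neq\emptyset$.

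The main obstacle is justifying the multipliers without a constraint qualification. Using the polytope structure of $\bar\Psi$ is what makes this work. For a general convex $\bar\Psi$, the multipliers can fail to exist in degenerate cases.
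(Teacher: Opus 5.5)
Your proof is correct and follows the same route as the paper: existence from compactness of the feasible set, the zonotope (hence polytope) structure of $\bar\Psi$ to turn the problem into a finite LP so that LP duality supplies the multipliers without any constraint qualification, and the same shrinking-feasible-set and convex-combination arguments for monotonicity and concavity of $V$. The paper only asserts that LP duality gives the tilted maximization over $\bar\Psi$; your facet-multiplier step, using $\rho^\top A\psi\le\rho^\top b$ together with complementary slackness for $\rho$, fills in that step explicitly.
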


\subsection{Example: Bounding Harm}

A natural welfare guarantee is to require every agent's welfare to exceed a floor $b_n$. This is the special case $\gamma_n=\theta_n$ and $c_n=b_n$.

\begin{definition}[Harm-bounded alignment]
For welfare floors $b\in\mathbb R^N$,
$$
    \psi_b^{\mathrm h}
    \in
    \argmax_{\psi\in\bar{\Psi}}
    \bar{\theta}_w^\top\psi
    \quad
    \mathrm{s.t.}
    \quad
    \theta_n^\top\psi\geq b_n
    \ \forall n .
$$
\end{definition}

\begin{corollary}[of Theorem~\ref{thm:constrained-linear-social-choice}]
\label{cor:bounded-harm}
The harm-bounded optimum satisfies
$$
    \psi_b^{\mathrm h}
    \in
    \argmax_{\psi\in\bar{\Psi}}
    \left(
        \sum_{n=1}^N (w_n+\eta_n)\theta_n
    \right)^\top\psi,
$$
where $\eta_n\geq0$ and $\eta_n(\theta_n^\top\psi_b^{\mathrm h}-b_n)=0$ for all $n$.
\end{corollary}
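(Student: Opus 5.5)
This is a direct specialization of Theorem~\ref{thm:constrained-linear-social-choice}, so the plan is to set $\gamma_n=\theta_n$ and $c_n=b_n$ for every $n$ and read off the conclusion. First, I check the hypothesis of the theorem. The corollary implicitly presupposes that $\psi_b^{\mathrm h}$ exists, which requires the constraint set $\bar\Psi\cap\bigcap_n\{\psi:\theta_n^\top\psi\ge b_n\}$ to be nonempty. I will state this assumption explicitly. Under it, Theorem~\ref{thm:constrained-linear-social-choice} gives an optimum $\psi_b^{\mathrm h}$ together with multipliers $\eta_n\ge 0$. These satisfy
\[
\psi_b^{\mathrm h}\in\argmax_{\psi\in\bar\Psi}\Bigl(\bar\theta_w+\sum_n\eta_n\theta_n\Bigr)^\top\psi ,
\]
with complementary slackness $\eta_n(\theta_n^\top\psi_b^{\mathrm h}-b_n)=0$.

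The remaining step is algebraic: substitute $\bar\theta_w=\sum_n w_n\theta_n$ and collect terms to get $\sum_n(w_n+\eta_n)\theta_n$. This is exactly the displayed direction in the corollary.

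The only point needing care is the existence of the multipliers inside the theorem, i.e.\ a constraint qualification. Since $\bar\Psi$ is a polytope (a zonotope by Theorem~\ref{thm:welfare-geometry}) and all constraints are linear, LP duality gives multipliers without any Slater condition. So nothing further is required here beyond citing the theorem.
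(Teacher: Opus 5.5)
Your proposal is correct and matches the paper's proof: substitute $\gamma_n=\theta_n$ and $c_n=b_n$ into Theorem~\ref{thm:constrained-linear-social-choice}, then rewrite $\bar\theta_w+\sum_n\eta_n\theta_n$ as $\sum_n(w_n+\eta_n)\theta_n$, with complementary slackness carried over directly. Your explicit nonemptiness assumption and your remark that LP duality over the zonotope needs no constraint qualification match how the paper justifies the theorem itself.
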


The corollary has a simple interpretation. Instead of maximizing with fixed weights $w_n$, the planner maximizes with effective weights $w_n+\eta_n$. The extra term $\eta_n$ is positive exactly for agents who would fall below their required welfare level without additional protection. Thus the constrained solution can be implemented as an ordinary weighted-welfare maximization problem, but with extra weight placed on the agents that need protection.

Useful choices of welfare floor include: (i) $b_n=0$, so no agent is worse off than under a random model; (ii) $b_n=\theta_n^\top\psi_w^{\mathrm{sp}}$, so every agent weakly prefers the mechanism to the welfare-weighted strategyproof solution; and (iii) $b_n=-\epsilon h_{\bar{\Psi}}(\theta_n)$, so agent $n$ can lose at most an $\epsilon$ fraction of their maximum achievable welfare.\footnote{The same construction applies to groups by replacing $\theta_n$ with a group preference $\bar{\theta}_{\mathcal G}=\sum_{n\in\mathcal G}w_n\theta_n$ and $b_n$ with a group floor.}

\subsection{Example: Public Spirit}
\label{app:publicspirit}
Another example of a welfare constraint is \textit{public spirit}~\citep{flanigan2023distortion}. This constraint allows people to tolerate personal harm in proportion to the social benefit created, where $\gamma \in [0,1]$ is an individual's degree of public spirit.\footnote{This is a special case of Theorem~\ref{thm:constrained-linear-social-choice} with $\gamma_n = \gamma\bar{\theta}_w + (1-\gamma)\theta_n$ and $c_n = (\gamma\bar{\theta}_w + (1-\gamma)\theta_n)^\top\psi_0$, where $\psi_0$ is a reference impact (e.g., the status quo or the random model).}

\begin{definition}[Public-Spirit Alignment]
    For a public-spirit parameter $\gamma \in [0,1]$ and reference impact $\psi_0$:
    \begin{equation*}\label{eq:public-spirit}
        \psi_\gamma^{\mathrm{ps}} \in \argmax_{\psi \in \bar{\Psi}} \quad \bar{\theta}_w^\top \psi \qquad \textrm{s.t.} \quad (\gamma\,\bar{\theta}_w + (1-\gamma)\theta_n)^\top \psi \geq (\gamma\,\bar{\theta}_w + (1-\gamma)\theta_n)^\top \psi_0 \quad \forall\, n.
    \end{equation*}
\end{definition}

The constraint has a direct interpretation, saying that agent $n$ will accept some personal loss only when it is justified by enough social gain:
    $(1-\gamma)\,\theta_n^\top(\psi_0-\psi)
    \leq
    \gamma\,\bar{\theta}_w^\top(\psi-\psi_0).$
When $\gamma=0$, this reduces to an individual rationality constraint: no agent
can be made worse off relative to $\psi_0$. 

\begin{corollary}[of Theorem~\ref{thm:constrained-linear-social-choice}]
\label{cor:public-spirit}
    The public-spirit optimum satisfies
    \begin{equation}
        \psi_\gamma^{\mathrm{ps}}
        \in
        \argmax_{\psi \in \bar{\Psi}}
        \quad
        \left(
            \left(1 + \gamma\sum_{n \in \mathcal{B}} \eta_n\right)\bar{\theta}_w
            +
            (1-\gamma)\sum_{n \in \mathcal{B}}\eta_n\,\theta_n
        \right)^\top \psi,
    \end{equation}
    where $\eta_n \geq 0$ and
    $
        \eta_n
        \left[
            \big(\gamma\bar{\theta}_w + (1-\gamma)\theta_n\big)^\top
            (\psi_\gamma^{\mathrm{ps}}-\psi_0)
        \right]
        =
        0
        \qquad \forall n.
    $
\end{corollary}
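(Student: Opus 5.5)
This corollary is a direct specialization of Theorem~\ref{thm:constrained-linear-social-choice}. I would instantiate that theorem with constraint directions $\gamma_n := \gamma\bar\theta_w + (1-\gamma)\theta_n$ and thresholds $c_n := \gamma_n^\top\psi_0$, exactly as in the footnote defining public-spirit alignment. Here the symbol $\gamma$ is overloaded: $\gamma_n$ denotes the constraint vector and the scalar $\gamma\in[0,1]$ the public-spirit parameter.

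The first step is feasibility, so that the theorem applies. The reference impact $\psi_0$ itself satisfies every constraint with equality, and $\psi_0\in\bar\Psi$ for the intended choices (the random model $\psi_0=\psi(0)=0$, or any feasible status quo). Hence the constrained feasible set is nonempty. Theorem~\ref{thm:constrained-linear-social-choice} then provides an optimum $\psi^{\mathrm{ps}}_\gamma$ and multipliers $\eta_n\ge 0$ such that
\[
\psi^{\mathrm{ps}}_\gamma\in\argmax_{\psi\in\bar\Psi}\Bigl(\bar\theta_w+\sum_n\eta_n\bigl(\gamma\bar\theta_w+(1-\gamma)\theta_n\bigr)\Bigr)^\top\psi .
\]
Complementary slackness reads $\eta_n\bigl(\gamma_n^\top\psi^{\mathrm{ps}}_\gamma-\gamma_n^\top\psi_0\bigr)=0$, which is precisely the stated condition $\eta_n\bigl[(\gamma\bar\theta_w+(1-\gamma)\theta_n)^\top(\psi^{\mathrm{ps}}_\gamma-\psi_0)\bigr]=0$.

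The second step is algebraic regrouping. Since $\eta_n=0$ off $\mathcal B$, the sum may be restricted to $n\in\mathcal B$. Collecting the $\bar\theta_w$ terms gives the coefficient $1+\gamma\sum_{n\in\mathcal B}\eta_n$, and the remainder is $(1-\gamma)\sum_{n\in\mathcal B}\eta_n\theta_n$. This matches the displayed direction.

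No step is a real obstacle. The only point needing care is confirming that $\psi_0\in\bar\Psi$, so that the nonemptiness hypothesis holds; I would state this as an implicit assumption on the reference impact. If a constraint qualification such as Slater's condition is needed for the multipliers, it is already absorbed into the conclusion of Theorem~\ref{thm:constrained-linear-social-choice}, which the corollary inherits.
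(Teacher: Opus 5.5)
Your proposal is correct and follows the paper's proof essentially line for line: instantiate Theorem~\ref{thm:constrained-linear-social-choice} with $\gamma_n=\gamma\bar\theta_w+(1-\gamma)\theta_n$ and $c_n=\gamma_n^\top\psi_0$, read off complementary slackness, and regroup the $\bar\theta_w$ terms over $\mathcal B$. You also check nonemptiness explicitly, noting that $\psi_0\in\bar\Psi$ is itself feasible with every constraint tight; the paper leaves this implicit, and it is the right hypothesis to flag.
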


\begin{figure}[t]
    \centering
    \includegraphics[width=\linewidth]{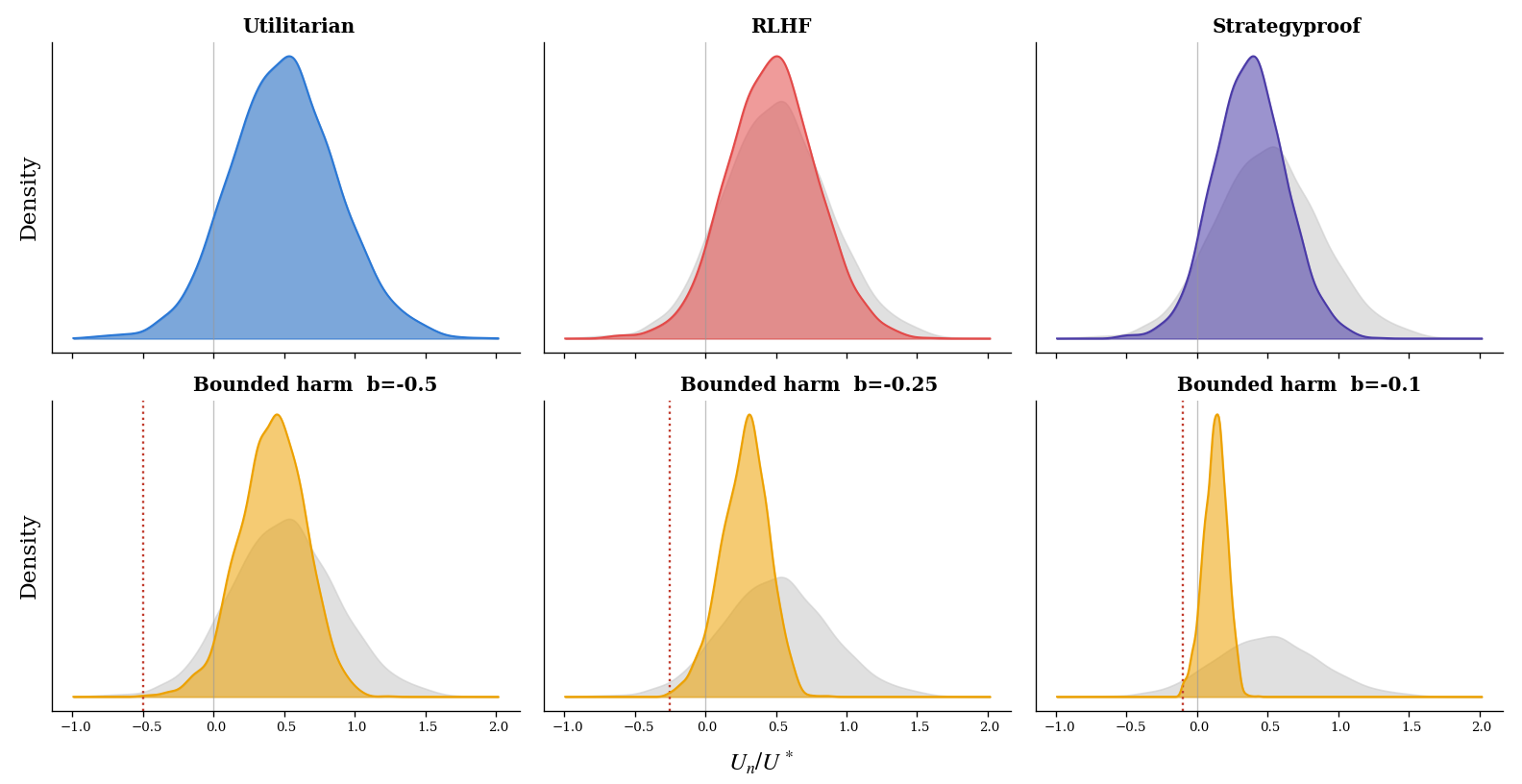}
    \caption{Community Alignment dataset: distribution of individual welfare $U_n/U^\star$ across the 2387 annotators under each mechanism, at high choice precision ($\beta=100$). Top: utilitarian, RLHF, and strategyproof. Bottom: harm-bounded alignment at three welfare floors $b$; the faint gray curve repeats the utilitarian distribution for reference, and the dotted red line marks the floor. Tightening the floor (toward $0$) binds more agents and compresses the welfare distribution, removing the harmed left tail.}
    \label{fig:welfaredist}
\end{figure}

Figure~\ref{fig:welfaredist} shows how each mechanism distributes welfare across individuals at high choice precision. Under the utilitarian, RLHF, and strategyproof mechanisms the distribution is wide and carries mass below zero---agents who are actively harmed. The harm-bounded mechanism removes this left tail: as the welfare floor $b$ tightens toward $0$, more agents' constraints bind and the distribution compresses upward against the floor, at the cost of some reduction in mean welfare. Appendix~\ref{app:welfaresweep} traces the same effect as a function of the choice-precision parameter $\beta$.

\subsection{Example: Bounding Participation Externalities}

The welfare floor (Corollary~\ref{cor:bounded-harm}) protects agents who are \emph{harmed}---it increases their influence over the deployed model. Another potential concern is limiting the harm that any single agent's participation \emph{imposes on everyone else}.  We formally define the participation externality in Appendix \ref{app:fullalignment}. Here we need only the \emph{leave-one-out utilitarian preference}, the welfare-weighted average preference of everyone other than $n$,
\begin{equation}
    \bar{\theta}_{-n} = \frac{1}{1-w_n}\sum_{m \neq n} w_m \theta_m,
\end{equation}
which is well defined whenever $w_n<1$.
%


\begin{definition}[Externality-bounded alignment]
    For a reference impact $\psi_0$ and tolerances $b \in \mathbb{R}_{\geq 0}^N$:
    $
        \psi_b^{\mathrm{ext}}
        \in
        \argmax_{\psi \in \bar{\Psi}}
        \;
        \bar{\theta}_w^\top\psi
        \quad
        \textrm{s.t.}
        \quad
        \bar{\theta}_{-n}^\top\psi
        \geq
        \bar{\theta}_{-n}^\top\psi_0 - b_n
        \quad \forall\, n.
    $
\end{definition}

The constraint says: the aggregate welfare of everyone except $n$, evaluated at the deployed impact, must not fall more than $(1-w_n)\,b_n$ below its value at the reference (recall that $\bar{\theta}_{-n}$ normalizes the leave-one-out weights by $1/(1-w_n)$, which requires $w_n<1$ for all $n$). This bounds the cost that accommodating $n$'s preferences imposes on the rest of the population. When $b_n = 0$, the mechanism cannot reduce the rest of the population's aggregate welfare by including $n$; as $b_n \to \infty$, the constraint becomes vacuous.

\begin{corollary}[of Theorem~\ref{thm:constrained-linear-social-choice}]
    \label{cor:externality-bound}
    The externality-bounded optimum satisfies
    \begin{equation}
        \psi_b^{\mathrm{ext}}
        \in
        \argmax_{\psi \in \bar{\Psi}}
        \;
        \left(
            \left(1 + \sum_{n \in \mathcal{B}}
            \frac{\eta_n}{1 - w_n}\right)\bar{\theta}_w
            \;-\;
            \sum_{n \in \mathcal{B}}
            \frac{w_n\,\eta_n}{1 - w_n}\,\theta_n
        \right)^\top\psi,
    \end{equation}
    where $\eta_n \geq 0$ and 
    $\eta_n(\bar{\theta}_{-n}^\top\psi_b^{\mathrm{ext}}
    - \bar{\theta}_{-n}^\top\psi_0 + b_n) = 0$ for
    all $n$.
\end{corollary}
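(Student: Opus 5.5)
The plan is to specialize Theorem~\ref{thm:constrained-linear-social-choice} to the constraint directions $\gamma_n=\bar\theta_{-n}$ and thresholds $c_n=\bar\theta_{-n}^\top\psi_0-b_n$. The resulting linear tilt will then be rewritten in terms of $\bar\theta_w$ and $\theta_n$ using the leave-one-out identity.

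First I will check that the hypothesis of the theorem holds. Whenever $\psi_0\in\bar\Psi$ (for example the random model $\psi_0=0$, or any status quo impact), the point $\psi_0$ satisfies every constraint, because $b_n\geq0$. So the feasible set is nonempty. Theorem~\ref{thm:constrained-linear-social-choice} then gives an optimum $\psi_b^{\mathrm{ext}}$ and multipliers $\eta_n\geq0$ with
\[
\psi_b^{\mathrm{ext}}\in\argmax_{\psi\in\bar\Psi}\Big(\bar\theta_w+\sum_{n}\eta_n\bar\theta_{-n}\Big)^\top\psi .
\]
The same theorem gives complementary slackness $\eta_n(\bar\theta_{-n}^\top\psi_b^{\mathrm{ext}}-\bar\theta_{-n}^\top\psi_0+b_n)=0$, which is exactly the stated condition. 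By slackness, only the indices $n\in\mathcal B$ contribute to the sum.

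The remaining step is algebraic. Since $\sum_{m\neq n}w_m\theta_m=\bar\theta_w-w_n\theta_n$, we have
\[
\bar\theta_{-n}=\frac{1}{1-w_n}\bar\theta_w-\frac{w_n}{1-w_n}\theta_n,
\]
which is well defined because $w_n<1$. Substituting this into the tilted direction and collecting the coefficient of $\bar\theta_w$ gives $1+\sum_{n\in\mathcal B}\eta_n/(1-w_n)$. The remaining terms are $-\sum_{n\in\mathcal B}\frac{w_n\eta_n}{1-w_n}\theta_n$, which yields the displayed objective.

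No step is a real obstacle. The only care needed is the feasibility assumption: one must either assume $\psi_0\in\bar\Psi$ or take nonemptiness of the constrained set as given, as in the theorem's hypothesis.
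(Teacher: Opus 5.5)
Your proposal is correct and matches the paper's proof: specialize Theorem~\ref{thm:constrained-linear-social-choice} with $\gamma_n=\bar\theta_{-n}$ and $c_n=\bar\theta_{-n}^\top\psi_0-b_n$, then substitute $\bar\theta_{-n}=(\bar\theta_w-w_n\theta_n)/(1-w_n)$ into the tilted direction. Your explicit feasibility check—that $\psi_0\in\bar\Psi$ together with $b_n\ge 0$ satisfies every constraint—is a small addition that the paper leaves implicit.
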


The effective direction has a striking structure that is the \emph{opposite} of the welfare floor. In harm-bounded alignment (Corollary~\ref{cor:bounded-harm}), binding agents receive \emph{additional} weight: $w_n \to w_n + \eta_n$. Here, binding agents have their weight \emph{reduced}: their preferences are subtracted from the effective direction, and the social welfare direction $\bar{\theta}_w$ is amplified. Intuitively, the welfare floor asks ``who is being harmed?'' and gives them more voice; the externality bound asks ``who is causing harm?'' and attenuates their influence.

With equal weights $w_n = 1/N$ and large $N$, the effective direction simplifies to approximately  $(1 + \sum_{n \in \mathcal{B}}\eta_n)\,\bar{\theta}_w  - \frac{1}{N}\sum_{n \in \mathcal{B}}\eta_n\,\theta_n$:  the dominant effect is an amplification of the utilitarian direction, with a small per-agent correction of order $1/N$.

A bounded harm mechanism allows us to regulate individuals' participation externalities (Figure~\ref{fig:rlhfvsbh_peragent} in Appendix~\ref{app:peragentharm}). The bright rows in the bounded harm panel are the agents for whom the constraint was binding---agents whose welfare would otherwise fall below the floor. When such an agent is removed from the dataset, their binding constraint disappears and everyone else benefits from returning closer to the utilitarian regime; hence, their inclusion imposes a drastic negative externality on every other member of the population.


\section{Empirical Illustrations}
\label{sec:empirical}

\paragraph{Setup.} We illustrate the framework on real human pairwise-choice data from four domains: kidney allocation \citep{keswani2024pros}, charitable food distribution \citep{lee2019webuildai}, trolley problems from Moral Machine \citep{awad2018moral}, and preferences over LLM responses from the Community Alignment dataset \citep{zhang2025cultivating}. In each domain, options are described by interpretable features ($k$ ranging from $5$ to $20$) and we fit each participant's linear preference $\theta_n$ with a Bradley--Terry--Luce model on their own choices; a precision parameter $\beta$ scales all preferences, with choices becoming deterministic as $\beta\to\infty$. For Community Alignment, responses are first embedded by a frozen sentence encoder and projected onto a $K=15$-dimensional interpretable basis using the method of \citet{wojtowicz2026weights}, yielding $N=2387$ annotators. Taking the observed queries as the deployment distribution, the impact zonotope $\bar\Psi$---and with it every mechanism in this paper---can be computed exactly. Dataset details and preprocessing are given in Appendix~\ref{app:empirics}.

\paragraph{Mechanisms and measures.} We compare the deployed impacts of the utilitarian optimum (Proposition~\ref{prop:asymptotic-alignment}), anonymous RLHF (Proposition~\ref{prop:rlhf-impact}), the uniform-weight random dictatorship compiled into a single model parameter (Theorem~\ref{thm:single-sp}), and harm-bounded alignment at several welfare floors (Corollary~\ref{cor:bounded-harm}). For each mechanism we compute every agent's welfare $U_n$ relative to the utilitarian optimum $U^\star$, as well as the person-on-person participation externalities $\Gamma_{n,m}$ (Appendix~\ref{app:fullalignment}).


\paragraph{Findings.} On Community Alignment, welfare varies widely across agents under the utilitarian, RLHF, and strategyproof mechanisms, and under all three some agents are actively harmed (Figure~\ref{fig:welfaredist}). Under harm-bounded alignment no agent's welfare falls below the chosen floor, at a cost in mean welfare that grows as the floor tightens. The externality measures show that this cost comes from a small number of agents whose harm constraint binds, and that including these agents is costly to everyone else. 
Across the four datasets, agents disagree about the \emph{direction} of their preferences in Moral Machine and Community Alignment, but mostly about the \emph{magnitude} of their preferences in the kidney and food-rescue domains (Figure~\ref{fig:preferencedist} in Appendix~\ref{app:empirics}). As a result, the strategyproof mechanism loses a fraction of the optimal welfare in these datasets (Figure~\ref{fig:expectedwelfare}), because the adversarial structure behind Proposition~\ref{prop:unbounded-sp-distortion} does not arise in them. At high choice precision, welfare also varies least across agents under this mechanism (Figures~\ref{fig:welfarevar} and~\ref{fig:peragentwelfare}). Under the utilitarian, RLHF, and strategyproof mechanisms, welfare varies more across agents as the choice precision $\beta$ grows and individual choices become more deterministic (Figure~\ref{fig:constrainedwelfare}).

\section{Limitations and Discussion}

\paragraph{Limitations.} Our framework builds on, and therefore relies on, the assumption that preferences are linear in a model's feature space. For LLM alignment, making such a linear representation interpretable is nontrival, which we demonstrate on the Community Alignment dataset by importing the feature extraction pipeline of \citep{wojtowicz2026weights}. Additionally, some assumptions in our results about query distributions may not hold in practice, and the strategyproofness results rely on the central planner broadcasting individual probabilities of being selected ahead of time. The ``optimal'' strategyproof solution may require solving a fixed-point problem where the optimal probabilities are learned over time, a technical problem we propose for future work.

\paragraph{Discussion.} We have shown that linear social choice provides the right setting in which to reformulate alignment as a linear optimization problem. By analyzing welfare not as a second-order consequence of model parameterization but as a first-order object, we are able to develop an impact-space framework that directly informs how we design strategyproof mechanisms as well as mechanisms that implement a variety of welfare-specific desiderata. The information granularity resulting from the impact space analysis leads to a rich set of future questions: can we use this framework to constrain the welfare impacts of one demographic group on another? Can every alignment mechanism (traditional RLHF, DPO, Max-Min, Nash) be characterized by the patterns of externalities that it creates over the population of individuals, and do we have the power to regulate these in a unified way? Prioritizing welfare as the primary object of analysis may also lead to other elegant structures in the analysis of alignment protocols---we believe this viewpoint will only become more important as we begin to align agents and other systems capable of making decisions on behalf of humans, such as negotiation or high-stakes resource allocation.
    %
        
        





\bibliography{references}

\appendix
\section{Proofs for Sections \ref{sec:prelim} and \ref{sec:framework}}

Throughout the proofs, let
$
    \Psi=\{\psi(\theta):\theta\in\mathbb R^k\}
$
and $\bar\Psi=\operatorname{cl}(\Psi)$. 
We optimize over $\bar\Psi$, but $\psi^{-1}$ is used only on $\Psi$, where the impact map is invertible on the deployment subspace.

For any non-empty closed convex set $A$, the support function of $A$ is 
\begin{equation}
    h_A(x) = \sup_{a \in A} x^\top a.
\end{equation} 
%


\begin{proof}[Proof of Theorem~\ref{thm:welfare-geometry}]

    \hspace{0em}
    \begin{enumerate}
        \item Write the impact map in terms of its per-query coordinates
        \begin{equation}
            \psi(\theta)
            =\mathbb E_{z\sim q_{\mathrm{dep}}}\big[t_z(\theta)\,\alpha_z\big],
            \qquad
            t_z(\theta)=\sigma(\alpha_z^\top\theta)-\tfrac12\in(-\tfrac12,\tfrac12),
        \end{equation}
        and let $\mathcal Z=\{\mathbb E_{z\sim q_{\mathrm{dep}}}[t_z\alpha_z]:t_z\in[-\tfrac12,\tfrac12]\}$ be the origin-symmetric zonotope generated by $\{\tfrac12 q_{\mathrm{dep}}(z)\alpha_z\}$, i.e.\ the linear image $M([-\tfrac12,\tfrac12]^{Z})$ of the cube (one coordinate per query $z\in Z$) under $M(t)=\mathbb E_{z\sim q_{\mathrm{dep}}}[t_z\alpha_z]$. 
        Since each $\alpha_z\in S_{\mathrm{dep}}$, both $\Psi$ and $\mathcal Z$ lie in $S_{\mathrm{dep}}$. 
        Because a linear map carries the relative interior of a convex set onto the relative interior of its image, $M$ maps the open cube onto $\operatorname{relint}\mathcal Z$, the interior of $\mathcal Z$ relative to $S_{\mathrm{dep}}$. 
        Note that this is nonempty, since the generators of $\mathcal Z$ span $S_{\mathrm{dep}}$. Moreover, because $t(\theta)$ lies in the open cube, $\Psi\subseteq\operatorname{relint}\mathcal Z$. 
        A zonotope is convex and origin-symmetric. We have $\psi(0)=0$ since $\sigma(0)=\tfrac12$, and $\psi(-\theta)=-\psi(\theta)$ since $\sigma(-x)-\tfrac12=-(\sigma(x)-\tfrac12)$.
        
        \item The map $\psi$ is the gradient of the convex log-partition potential
        \begin{equation}
            \Phi(\theta)=\mathbb E_{z\sim q_{\mathrm{dep}}}\big[\log(1+e^{\alpha_z^\top\theta})-\tfrac12\alpha_z^\top\theta\big],
            \qquad
            \nabla\Phi=\psi,\quad
            \nabla^2\Phi(\theta)=\mathbb E_{z\sim q_{\mathrm{dep}}}\big[v(\alpha_z^\top\theta)\,\alpha_z\alpha_z^\top\big],
        \end{equation}
        with $v=\sigma'>0$. The Hessian is positive definite on $S_{\mathrm{dep}}$, i.e., if $u\in S_{\mathrm{dep}}$ satisfies $u^\top\nabla^2\Phi(\theta)\,u=\mathbb E_{z\sim q_{\mathrm{dep}}}[v(\alpha_z^\top\theta)(\alpha_z^\top u)^2]=0$, then $\alpha_z^\top u=0$ for every $z\in\operatorname{supp}(q_{\mathrm{dep}})$, so $u\perp S_{\mathrm{dep}}$, which together with $u\in S_{\mathrm{dep}}$ forces $u=0$. Hence $\Phi$ restricted to $S_{\mathrm{dep}}$ is finite, $C^\infty$, and strictly convex. Given that it is finite and differentiable everywhere, it is essentially smooth, and is therefore a convex function of Legendre type \citep[\S26]{rockafellar1970convex}.
        We restrict attention to $S_{\mathrm{dep}}$ throughout, as on the ambient space $\Phi$ has flat directions and $\mathcal Z$ has empty interior whenever $S_{\mathrm{dep}}\neq\mathbb R^k$, so all interiors below are relative to $S_{\mathrm{dep}}$. The recession function of $\Phi$ is
        \begin{equation}
            \lim_{\nu\to\infty}\frac{\Phi(\nu u)}{\nu}
            =\mathbb E_{z\sim q_{\mathrm{dep}}}\big[(\alpha_z^\top u)_+-\tfrac12\,\alpha_z^\top u\big]
            =\tfrac12\,\mathbb E_{z\sim q_{\mathrm{dep}}}\big[\,|\alpha_z^\top u|\,\big]
            =h_{\mathcal Z}(u),
        \end{equation}
        the support function of $\mathcal Z$. By \citet[Theorem~13.3]{rockafellar1970convex}, the support function of $\operatorname{dom}\Phi^*$ is the recession function of $\Phi$; hence $\operatorname{cl}(\operatorname{dom}\Phi^*)=\mathcal Z$, and so $\operatorname{relint}(\operatorname{dom}\Phi^*)=\operatorname{relint}\mathcal Z$. 
        By \citet[Theorem~26.5]{rockafellar1970convex}, the gradient map of a Legendre-type function is a bijection from the interior of its domain onto the interior of the domain of its convex conjugate, with continuous inverse $\nabla\Phi^*$; here these interiors are $S_{\mathrm{dep}}$ and $\operatorname{relint}\mathcal Z$, so $\psi=\nabla\Phi$ is a bijection from $S_{\mathrm{dep}}$ onto $\operatorname{relint}\mathcal Z$, and, by the inverse function theorem, a $C^\infty$ diffeomorphism. 
        Therefore $\Psi=\operatorname{relint}\mathcal Z$, so $\bar\Psi=\operatorname{cl}(\operatorname{relint}\mathcal Z)=\mathcal Z$ is the stated zonotope.

        \item From $|t_z(\theta)|\le\tfrac12$, $\|\psi(\theta)\|\le\mathbb E[\|\alpha_z\|\,|t_z(\theta)|]\le\tfrac12\mathbb E[\|\alpha_z\|]$.
        \item Differentiating under the expectation, $\frac{\partial}{\partial\nu}\psi(\nu\theta)=\mathbb E[(\alpha_z^\top\theta)v(\nu\alpha_z^\top\theta)\alpha_z]$, so
        \begin{equation}
            \theta^\top\frac{\partial}{\partial\nu}\psi(\nu\theta)
            =\mathbb E_{z\sim q_{\mathrm{dep}}}\big[(\alpha_z^\top\theta)^2 v(\nu\alpha_z^\top\theta)\big]\ge0.
        \end{equation}
        \item By dominated convergence, $\lim_{\nu\to\infty}t_z(\nu\theta)\to\tfrac12\operatorname{sign}(\alpha_z^\top\theta)$, so
        \begin{equation}
            \lim_{\nu\to\infty}\psi(\nu\theta)
            =\tfrac12\,\mathbb E_{z\sim q_{\mathrm{dep}}}\big[\alpha_z\operatorname{sign}(\alpha_z^\top\theta)\big],
        \end{equation}
        which attains $\max_{\psi\in\bar\Psi}\theta^\top\psi=\tfrac12\,\mathbb E_{z\sim q_{\mathrm{dep}}}|\alpha_z^\top\theta|$. For generic $\theta$ (i.e., $\alpha_z^\top\theta\neq0$ for all $z\in\operatorname{supp}(q_{\mathrm{dep}})$), it is the unique maximizer: any maximizer may be written $\mathbb E_{z\sim q_{\mathrm{dep}}}[t_z\,\alpha_z]$ with $t_z\in[-\tfrac12,\tfrac12]$, and $\theta^\top\psi=\mathbb E_{z\sim q_{\mathrm{dep}}}[t_z\,(\alpha_z^\top\theta)]$ attains this value only if $t_z=\tfrac12\operatorname{sign}(\alpha_z^\top\theta)$ for every $z\in\operatorname{supp}(q_{\mathrm{dep}})$, which determines the point. As the singleton exposed face of the polytope $\bar\Psi$ in direction $\theta$, it is a vertex; when $\theta=\theta_n$, it is agent $n$'s ideal impact $\psi_n^\star$.
    \end{enumerate}
\end{proof}

\begin{proof}[Proof of Proposition~\ref{prop:asymptotic-alignment}]
    By Theorem~\ref{thm:welfare-geometry}.5, the limit exists and equals $p=\tfrac12\,\mathbb E_{z\sim q_{\mathrm{dep}}}[\alpha_z\operatorname{sign}(\alpha_z^\top\bar\theta_w)]$, and $p\in\bar\Psi$ because $\bar\Psi$ is closed and $p$ is a limit of points of $\Psi$. Moreover,
    $
        \bar\theta_w^\top p=\tfrac12\,\mathbb E_{z\sim q_{\mathrm{dep}}}[\,|\alpha_z^\top\bar\theta_w|\,]=h_{\bar\Psi}(\bar\theta_w),
    $
    the support function of the zonotope computed in the proof of Theorem~\ref{thm:welfare-geometry}, so $p$ attains the maximum of $\bar\theta_w^\top\psi$ over $\bar\Psi$, i.e.\ $p\in\argmax_{\psi\in\bar\Psi}\bar\theta_w^\top\psi$. For uniqueness under genericity: any maximizer can be written $\psi=\mathbb E_{z\sim q_{\mathrm{dep}}}[t_z\alpha_z]$ with $t_z\in[-\tfrac12,\tfrac12]$, and $\bar\theta_w^\top\psi=\mathbb E[t_z(\alpha_z^\top\bar\theta_w)]$ attains $\tfrac12\mathbb E|\alpha_z^\top\bar\theta_w|$ only if $t_z=\tfrac12\operatorname{sign}(\alpha_z^\top\bar\theta_w)$ for every $z$ with $\alpha_z^\top\bar\theta_w\neq0$; when $\alpha_z^\top\bar\theta_w\neq0$ for all $z\in\operatorname{supp}(q_{\mathrm{dep}})$, this pins down every representation, so the maximizer equals $p$.
\end{proof}

\section{Proofs for Section \ref{sec:strategy}}

\begin{proof}[Proof of Lemma~\ref{lem:mean-sufficiency}]
    By linearity of $\theta_n^\top(\cdot)$, $\mathbb E[\theta_n^\top\mu(\theta)]=\theta_n^\top\mathbb E_{\psi\sim\mu(\theta)}[\psi]=\theta_n^\top\bar\mu(\theta)$. Strategyproofness is an inequality between such expectations and so depends only on $\bar\mu$. 
    For unanimity, on the generic domain where it is defined, the point $\psi^\star(\bar\theta)=\argmax_{\psi\in\bar\Psi}\bar\theta^\top\psi$ is an exposed vertex---and hence an extreme point---of $\bar\Psi$ (Theorem~\ref{thm:welfare-geometry}). A distribution whose mean is an extreme point is the point mass, so $\mu(\theta,\dots,\theta)=\delta(\psi^\star(\bar\theta))$ iff $\bar\mu(\theta,\dots,\theta)=\psi^\star(\bar\theta)$.
\end{proof}

\begin{proof}[Proof of Lemma~\ref{thm:option-set}]
    By Lemma~\ref{lem:mean-sufficiency} we restrict attention to $\bar\mu$. The argument follows that of \cite{barbera_peleg_1990}.
    \begin{enumerate}
        \item[$\Leftarrow$] Suppose such menus $M_n$ exist. 
        Fix $n$, $\theta_{-n}$, and true type $\theta_n$. 
        For any misreport $\theta_n'$, the mechanism applied at the profile $(\theta_n',\theta_{-n})$ gives $\bar\mu(\theta_n',\theta_{-n})\in M_n(\theta_{-n})$, and applied at $\theta$ it says $\bar\mu(\theta)$ maximizes $\theta_n^\top\psi$ over $M_n(\theta_{-n})$. Hence $\theta_n^\top\bar\mu(\theta)\ge\theta_n^\top\bar\mu(\theta_n',\theta_{-n})$. 
        Thus, it is strategyproof.

        \item[$\Rightarrow$] Suppose $\mu$ is strategyproof and take $M_n(\theta_{-n}) = O_n(\theta_{-n})=\{\bar\mu(\theta_n',\theta_{-n}):\theta_n'\in\mathbb R^k\}$, which depends only on $\theta_{-n}$. 
        Then $\bar\mu(\theta)\in O_n(\theta_{-n})$ trivially, and strategyproofness says $\theta_n^\top\bar\mu(\theta)\ge\theta_n^\top\psi$ for every $\psi\in O_n(\theta_{-n})$, i.e. $\bar\mu(\theta)\in\argmax_{\psi\in O_n(\theta_{-n})}\theta_n^\top\psi$.
    \end{enumerate}

\end{proof}

    %

\begin{proof}[Proof of Theorem~\ref{thm:voting-by-issues}]
    Fix agent $n$ and the others' reports. By Definition~\ref{def:voting-by-issues}, agent $n$'s expected utility is
    \begin{equation}
        \theta_n^\top\bar\mu(\theta)=\tfrac{1}{2}\,\mathbb E_{z\sim q_{\mathrm{dep}}}\big[f_z(s_{z})\,(\alpha_z^\top\theta_n)\big],
    \end{equation}
    a sum over queries in which $n$ controls only its own votes $\{s_{z,n}\}$. 
    For each $z$, this is increasing in $f_z$ when $\alpha_z^\top\theta_n>0$ and decreasing when $\alpha_z^\top\theta_n<0$. 
    Since $f_z$ is nondecreasing in $s_{z,n}$, it is maximized by the truthful vote $s_{z,n}=\operatorname{sign}(\alpha_z^\top\theta_n)$. 
    The truthful report realizes all these votes simultaneously, so it maximizes every term and hence the sum.  
    
    For unanimity, note that identical reports give $s_{z,n} = s_z = \operatorname{sign}(\alpha_z^\top\bar\theta)$, so $f_z=s_z$ and $\bar\mu=\tfrac12\mathbb E_z[s_z\alpha_z]=\psi^\star(\bar\theta)$, which is unanimous by Lemma~\ref{lem:mean-sufficiency}.
\end{proof}

\begin{proof}[Proof of Proposition~\ref{prop:random-dictatorship}]
    The aggregator $f_z(s_{z})=\sum_n\lambda_n s_{z,n}$ is non-decreasing in each vote (given $\lambda_n\ge0$) with $f_z(\pm\mathbf 1)=\pm1$, so it defines a voting-by-issues mechanism, which is strategyproof and unanimous by Theorem~\ref{thm:voting-by-issues}. 
    Its expected impact is
    $
        \tfrac12\mathbb E_z\big[\big(\textstyle\sum_n\lambda_n s_{z,n}\big)\alpha_z\big]
        =\sum_n\lambda_n\cdot\tfrac12\mathbb E_z[s_{z,n}\alpha_z]
        =\sum_n\lambda_n\psi_n^\star,
    $
    the random dictatorship with weights $\lambda$.
\end{proof}

\begin{proof}[Proof of Theorem~\ref{thm:single-sp}]
    Each ideal impact $\psi_n^\star$ lies in $\bar\Psi$, which is convex (Theorem~\ref{thm:welfare-geometry}), so the target impact $\bar\psi_\lambda=\sum_n\lambda_n\psi_n^\star\in\bar\Psi$. 
    By Theorem~\ref{thm:welfare-geometry}(1), the impacts attainable by a single parameter are exactly $\Psi=\{\psi(\theta):\theta\in\mathbb R^k\}$, the relative interior of $\bar\Psi$, and by Theorem~\ref{thm:welfare-geometry}(2), $\psi$ is a bijection from $S_{\mathrm{dep}}$ onto $\Psi$. 
    Hence $\bar\psi_\lambda$ is attained by a single parameter if and only if $\bar\psi_\lambda\in\Psi$, in which case the unique such parameter in $S_{\mathrm{dep}}$ is $\theta^{\mathrm{sp}}_\lambda=\psi^{-1}(\bar\psi_\lambda)$, with impact exactly $\bar\psi_\lambda$.
    This is the mean impact of the random dictatorship (Proposition~\ref{prop:random-dictatorship}). 
    Since welfare depends on a mechanism only through its expected impact (Lemma~\ref{lem:mean-sufficiency}), this single parameter reproduces the random dictatorship's welfare for every agent.

    When $\bar\psi_\lambda$ instead lies on the boundary of $\bar\Psi$, it remains a limit of single-parameter impacts: since $0=\psi(0)\in\Psi=\operatorname{relint}\bar\Psi$ and $\bar\psi_\lambda\in\bar\Psi$, the half-open segment $\big\{(1-\varepsilon)\bar\psi_\lambda: \varepsilon\in(0,1]\big\}$ lies in $\Psi$. 
    Thus $\theta^{\mathrm{sp}}_{\lambda,\varepsilon}:=\psi^{-1}\big((1-\varepsilon)\bar\psi_\lambda\big)$ is well-defined for every $\varepsilon\in(0,1)$, and $\psi\big(\theta^{\mathrm{sp}}_{\lambda,\varepsilon}\big)=(1-\varepsilon)\bar\psi_\lambda\to\bar\psi_\lambda$ as $\varepsilon\to0$.
\end{proof}

\begin{proof}[Proof of Proposition~\ref{prop:unbounded-sp-distortion}]
    \textit{Choice of directions.} Since $\bar\Psi$ is convex and origin-symmetric, its affine hull is the linear span $S=\operatorname{span}(\bar\Psi)$ and $0\in\operatorname{relint}\bar\Psi$; hence $h_{\bar\Psi}(d)>0$ for every nonzero $d\in S$. The support function $h_{\bar\Psi}$ is finite and convex on $S$, hence differentiable at Lebesgue-almost every direction in $S$, and since $\bar\Psi\subseteq S$ it is differentiable at $d\in S$ exactly when $\argmax_{\psi\in\bar\Psi}d^\top\psi$ is a singleton. Fix such a $d_1\in S$ with $d_1\neq0$ and let $v$ denote the unique maximizer in direction $d_1$; then $d_1^\top v=h_{\bar\Psi}(d_1)>0$ and $v\neq0$, and by symmetry $-v$ is the unique maximizer in direction $-d_1$. Because $\dim(\bar{\Psi})\geq2$ and $v\in S$, we may choose $d_2\in S$ with $d_2\perp v$ and $d_2\neq0$, and then $h_{\bar{\Psi}}(d_2)>0$.

    \textit{Profile.} Since $w$ is not a point mass, fix $i$ with $0<w_i<1$. For $\varepsilon > 0$, set
    \begin{equation}
        \theta_n =
        \begin{cases}
            d_1 + \varepsilon\,d_2 & n = i \\[4pt]
            -\dfrac{w_i}{1-w_i}\,d_1 + \varepsilon\,d_2 & n \neq i.
        \end{cases}
    \end{equation}
    (The coefficient for $n\neq i$ is common to all agents, so zero welfare weights are permitted.) The $d_1$ components cancel in the welfare-weighted average:
    \begin{equation}
        \bar{\theta}_w
        = \sum_n w_n\,\theta_n
        = w_i\,d_1-\frac{w_i}{1-w_i}\,(1-w_i)\,d_1+\varepsilon\,d_2
        = \varepsilon\,d_2,
    \end{equation}
    so the first-best welfare is $U^\star = \varepsilon\, h_{\bar{\Psi}}(d_2)>0$.

    \textit{Welfare of the random dictatorship.} As $\varepsilon\downarrow0$, the normalized direction of $\theta_i$ converges to that of $d_1$ and the normalized direction of every $\theta_n$, $n \neq i$, converges to that of $-d_1$. The correspondence $d\mapsto\argmax_{\psi\in\bar\Psi}d^\top\psi$ is upper hemicontinuous (Berge's maximum theorem, using compactness of $\bar\Psi$) and singleton-valued at $\pm d_1$, so \emph{every} selection of maximizers satisfies $\psi_i^\star\to v$ and $\psi_n^\star\to-v$ for $n\neq i$; no continuity or strict-convexity assumption on $\bar\Psi$ is needed, and the argument covers the zonotopes of Theorem~\ref{thm:welfare-geometry}. Therefore
    \begin{equation}
        \psi^{\mathrm{sp}}
        = \sum_n \lambda_n\,\psi_n^\star
        \;\longrightarrow\;
        \lambda_i\,v-(1-\lambda_i)\,v=(2\lambda_i - 1)\,v
        \qquad(\varepsilon\downarrow0),
    \end{equation}
    and, since $d_2^\top v=0$ by construction,
    \begin{equation}
        U^{\mathrm{sp}}
        =\bar{\theta}_w^\top\psi^{\mathrm{sp}}
        = \varepsilon\,d_2^\top\psi^{\mathrm{sp}}
        = \varepsilon\cdot o(1).
    \end{equation}
    Combining with $U^\star=\varepsilon\,h_{\bar{\Psi}}(d_2)>0$, the welfare ratio is $U^{\mathrm{sp}}/U^\star=o(1)$ as $\varepsilon\downarrow0$, which is below $\delta$ once $\varepsilon$ is small enough. The construction places no restriction on $\lambda$.
\end{proof}

\section{Anonymous RLHF Through the Impact Identity}
\label{app:rlhf}

This section collects our results on anonymous RLHF. We first prove the identity and derive its mechanism consequences: the deterministic labeling limit (Corollary~\ref{cor:rlhf-convergence}) and manipulability at finite labeling precision (Corollary~\ref{cor:finite-temp-manipulable}). We then develop the marginal welfare effect of reweighting training sources and the resulting contrast between RLHF stationarity and utilitarian efficiency (Theorem~\ref{thm:stationarity-vs-efficiency}). Finally, we characterize the identification limits imposed by the identity's information bottleneck: which impacts reweighting can and cannot recover (Section~\ref{app:relabel}).

\subsection{The Identity and Its Mechanism Consequences}

\begin{proof}[Proof of Proposition~\ref{prop:rlhf-impact}]
    The anonymous RLHF objective is the cross-entropy loss
    \begin{equation}
        -\,\mathbb E_{(n,z)\sim q_{\mathrm{train}}}\,\mathbb E_{c\mid n,z}\!\left[c\log\sigma(\alpha_z^\top\theta)+(1-c)\log\bigl(1-\sigma(\alpha_z^\top\theta)\bigr)\right],
    \end{equation}
    whose stationarity condition is
    \begin{equation}
        \mathbb E_{z\sim q_{\mathrm{train}}}\!\left[\bigl(\mu_z-\sigma(\alpha_z^\top\hat{\theta}^{\mathrm{RLHF}})\bigr)\,\alpha_z\right]=0,
        \qquad
        \mu_z=\mathbb E_{n\sim q_{\mathrm{train}}(\cdot\mid z)}\bigl[\mathbb E[c\mid n,z]\bigr].
    \end{equation}
    Since $q_{\mathrm{train}}$ and $q_{\mathrm{dep}}$ share the same query marginal, the same moment identity $\mathbb E[\sigma(\alpha_z^\top\hat{\theta}^{\mathrm{RLHF}})\alpha_z]=\mathbb E[\mu_z\alpha_z]$ holds under $q_{\mathrm{dep}}$. Subtracting $\tfrac12\mathbb E_{z\sim q_{\mathrm{dep}}}[\alpha_z]$ from both moments,
    \begin{equation}
        \psi(\hat{\theta}^{\mathrm{RLHF}})
        =\mathbb E_{z\sim q_{\mathrm{dep}}}\!\left[\bigl(\sigma(\alpha_z^\top\hat{\theta}^{\mathrm{RLHF}})-\tfrac12\bigr)\alpha_z\right]
        =\mathbb E_{z\sim q_{\mathrm{dep}}}\!\left[\bigl(\mu_z-\tfrac12\bigr)\alpha_z\right].
    \end{equation}
    The argument uses only first-order stationarity and the shared query marginal. For the equivalence claim, the random-labeler protocol answers query $z$ with choice probability $\mu_z$, so it delivers welfare $\theta_n^\top\mathbb E_{z\sim q_{\mathrm{dep}}}[(\mu_z-\tfrac12)\alpha_z]$ to each agent $n$---by the display above, the same as the deployed model. Note, finally, that the right-hand side of the identity is a function of $\{\mu_z\}$ alone, so the deployed impact depends on the training data only through the per-query label frequencies.
\end{proof}

\begin{proof}[Proof of Corollary~\ref{cor:rlhf-convergence}]
    In the deterministic labeling limit $\mathbb E[c\mid n,z]\to\mathbf 1\{\alpha_z^\top\theta_n>0\}$, so $\mu_z\to\sum_n q_{\mathrm{train}}(n\mid z)\,\mathbf 1\{\alpha_z^\top\theta_n>0\}$. Using $\mathbf 1\{x>0\}-\tfrac12=\tfrac12\operatorname{sign}(x)$ and $\sum_n q_{\mathrm{train}}(n\mid z)=1$,
    \begin{equation}
        \mu_z-\tfrac12=\tfrac12\sum_n q_{\mathrm{train}}(n\mid z)\,\operatorname{sign}(\alpha_z^\top\theta_n)=\tfrac12\, f_z(s_{z,\cdot}),
        \qquad
        f_z(s_{z,\cdot})=\sum_n q_{\mathrm{train}}(n\mid z)\,s_{z,n}.
    \end{equation}
    By Proposition~\ref{prop:rlhf-impact} and dominated convergence ($|(\mu_z-\tfrac12)\alpha_z|\le\tfrac12\|\alpha_z\|$), the deployed impact converges to $\tfrac12\mathbb E_{z\sim q_{\mathrm{dep}}}[f_z(s_{z,\cdot})\alpha_z]$, the impact of the voting-by-issues mechanism of Definition~\ref{def:voting-by-issues} with aggregator $f_z$. Each $f_z$ is a convex combination of the votes, hence nondecreasing in every argument with $f_z(\pm\mathbf 1)=\pm1$, so the mechanism is strategyproof and unanimous by Theorem~\ref{thm:voting-by-issues}. When $q_{\mathrm{train}}(n\mid z)=\lambda_n$ for all $z$, $f_z(s_{z,\cdot})=\sum_n\lambda_n s_{z,n}$ and the impact is $\sum_n\lambda_n\psi_n^\star$, the random dictatorship with weights $\lambda$ (Proposition~\ref{prop:random-dictatorship}).
\end{proof}

At finite labeling precision, the identity instead exposes a manipulation channel: the frequencies $\{\mu_z\}$, and hence the deployed impact, respond continuously to the \emph{intensity} of an agent's reported preferences.

\begin{corollary}[Finite-temperature RLHF is manipulable]
\label{cor:finite-temp-manipulable}
Suppose, as in Corollary~\ref{cor:rlhf-convergence}, that $q_{\mathrm{train}}$ and $q_{\mathrm{dep}}$ share the same query marginal, that the labeling weights are query-independent, $q_{\mathrm{train}}(n\mid z)=\lambda_n$, and that labels follow the BTL model at the \emph{reported} preferences, $\mathbb E[c\mid n,z]=\sigma(\alpha_z^\top\hat\theta_n)$. Then the anonymous RLHF impact is the $\lambda$-average of the reported individual impacts,
$$
    \psi\big(\hat\theta^{\mathrm{RLHF}}\big)=\sum_{n=1}^N\lambda_n\,\psi(\hat\theta_n),
$$
and the induced mechanism is not strategyproof: for every agent $n$ with $\lambda_n>0$ and generic true preference $\theta_n$, the utility of reporting $\nu\theta_n$ is strictly increasing in $\nu$, so any $\nu>1$ strictly improves on truthful reporting and no optimal report exists.
\end{corollary}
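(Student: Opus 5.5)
The plan has two parts: first the identity, then the monotonicity that kills strategyproofness.

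\emph{The identity.} Start from Proposition~\ref{prop:rlhf-impact}. With $q_{\mathrm{train}}(n\mid z)=\lambda_n$ and $\mathbb E[c\mid n,z]=\sigma(\alpha_z^\top\hat\theta_n)$, the label frequency is
$\mu_z=\sum_n\lambda_n\sigma(\alpha_z^\top\hat\theta_n)$. Hence
\[
\mu_z-\tfrac12=\sum_n\lambda_n\bigl(\sigma(\alpha_z^\top\hat\theta_n)-\tfrac12\bigr),
\]
using $\sum_n\lambda_n=1$. Plugging this into the identity of Proposition~\ref{prop:rlhf-impact} and exchanging the finite sum with $\mathbb E_{z\sim q_{\mathrm{dep}}}$ gives $\psi(\hat\theta^{\mathrm{RLHF}})=\sum_n\lambda_n\psi(\hat\theta_n)$, by the definition of $\psi$. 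This step is routine.

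\emph{Utility along scalings.} Fix $\theta_{-n}$ (reported truthfully or arbitrarily). Agent $n$'s true utility when reporting $\nu\theta_n$ is
\[
\theta_n^\top\psi(\hat\theta^{\mathrm{RLHF}})=\lambda_n\,\theta_n^\top\psi(\nu\theta_n)+\text{const},
\]
where the constant does not depend on $\nu$. Its derivative in $\nu$ is $\lambda_n\,\mathbb E_z[(\alpha_z^\top\theta_n)^2v(\nu\alpha_z^\top\theta_n)]$, by Theorem~\ref{thm:welfare-geometry}(4). Genericity gives $\alpha_z^\top\theta_n\neq0$ on the support, and $v>0$ everywhere, so this derivative is strictly positive. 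Hence the utility is strictly increasing in $\nu$, and any $\nu>1$ strictly beats truthful reporting ($\nu=1$).

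\emph{No optimal report.} The utility of any report $\hat\theta_n$ equals $\lambda_n\theta_n^\top\psi(\hat\theta_n)+\text{const}$, with $\psi(\hat\theta_n)\in\Psi=\operatorname{relint}\bar\Psi$. The supremum over reports is $\lambda_n h_{\bar\Psi}(\theta_n)+\text{const}$. It is approached by $\nu\to\infty$ via Theorem~\ref{thm:welfare-geometry}(5), since the limit is the unique maximizer $\psi_n^\star$. That maximizer is a vertex, so it lies on the relative boundary and is not in $\Psi$. No single report attains it, so the supremum is not achieved and no optimal report exists.

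The main care point is this last step: we need that the maximizer of a nonzero linear functional over $\bar\Psi$ is never in the relative interior. For generic $\theta_n$ with $\theta_n\notin S_{\mathrm{dep}}^\perp$, the functional is nonconstant on $S_{\mathrm{dep}}$, so its maximizer lies on the relative boundary, as required.
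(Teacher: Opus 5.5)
Your proposal is correct and follows the paper's proof essentially step for step. It obtains the identity from Proposition~\ref{prop:rlhf-impact} with $\mu_z=\sum_n\lambda_n\sigma(\alpha_z^\top\hat\theta_n)$. It gets strict monotonicity along the ray $\nu\theta_n$ from Theorem~\ref{thm:welfare-geometry}(4), using $v>0$ and genericity. It gets non-attainment because the unique maximizer $\psi_n^\star$ is an exposed vertex and so lies outside $\Psi=\operatorname{relint}\bar\Psi$; your extra condition $\theta_n\notin S_{\mathrm{dep}}^\perp$ is already implied by genericity. The one step the paper includes that you skip is showing that an RLHF stationary point exists at all: since every $\mu_z\in(0,1)$, the cross-entropy objective is strictly convex and coercive on $S_{\mathrm{train}}=S_{\mathrm{dep}}$. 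You need this because Proposition~\ref{prop:rlhf-impact} presupposes a stationary point, and the induced mechanism must be well defined at every report profile.
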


\begin{proof}
Since every $\mu_z=\sum_n\lambda_n\sigma(\alpha_z^\top\hat\theta_n)$ lies in $(0,1)$, the cross-entropy objective is strictly convex and coercive on $S_{\mathrm{train}}=S_{\mathrm{dep}}$ (the supports of the query marginals coincide), so a unique stationary point exists there. By Proposition~\ref{prop:rlhf-impact} and $\sum_n\lambda_n=1$,
$$
    \psi\big(\hat\theta^{\mathrm{RLHF}}\big)
    =\mathbb E_{z\sim q_{\mathrm{dep}}}\big[(\mu_z-\tfrac12)\alpha_z\big]
    =\sum_{n=1}^N\lambda_n\,\mathbb E_{z\sim q_{\mathrm{dep}}}\big[\big(\sigma(\alpha_z^\top\hat\theta_n)-\tfrac12\big)\alpha_z\big]
    =\sum_{n=1}^N\lambda_n\,\psi(\hat\theta_n).
$$
Fix agent $n$ and the others' reports. By the identity above, agent $n$'s deployment welfare from reporting $\theta'$ is $\lambda_n\,\theta_n^\top\psi(\theta')$ plus a term that does not depend on its report, and by Theorem~\ref{thm:welfare-geometry}(4),
$$
    \frac{d}{d\nu}\,\theta_n^\top\psi(\nu\theta_n)
    =\mathbb E_{z\sim q_{\mathrm{dep}}}\big[(\alpha_z^\top\theta_n)^2\,v(\nu\,\alpha_z^\top\theta_n)\big]>0
$$
for generic $\theta_n$. Hence reporting $\nu\theta_n$ with $\nu>1$ strictly improves on the truthful report. The supremum $h_{\bar\Psi}(\theta_n)$ of $\theta_n^\top\psi(\theta')$ over reports is approached along the ray $\nu\theta_n$ as $\nu\to\infty$ but is not attained by any report: for generic $\theta_n$ the unique maximizer $\psi_n^\star$ is an exposed vertex of $\bar\Psi$ and hence lies outside $\Psi=\operatorname{relint}\bar\Psi$.
\end{proof}

\cite{kleinebuening2025strategyproof} show that RLHF with stochastic labelers is not strategyproof: a labeler can manipulate its choice probabilities to steer the learned policy. Corollary~\ref{cor:rlhf-convergence} is complementary: in the deterministic limit, where labelers can only manipulate their choice directions (not intensities), RLHF converges to a strategyproof voting-by-issues rule. The additional manipulation channel available at finite temperature is exactly the nonlinearity that makes the finite-temperature mechanism deviate from this limit.

\subsection{The Differential Companion: Marginal Alignment Externalities}
The identity pins down the deployed impact at a fixed training distribution; its differential companion describes how the impact---and hence welfare---moves as the training distribution is reweighted.
For an RLHF procedure that trains on a query distribution $q_{\mathrm{train}} \in \Delta([N] \times Z)$, we define the \textit{marginal} alignment externality: the first-order welfare effect of infinitesimally upweighting a specific training source $(n,z)$. This is the alignment analogue of the influence function from robust statistics \citep[see][for a review]{hammoudeh2024training}. Here $\hat\theta_{q}$ denotes the anonymous RLHF parameter trained on distribution $q$; recall that $U_m(\theta)=\theta_m^\top\psi(\theta)$ is agent $m$'s deployment welfare.

\begin{definition}[Marginal Alignment Externality]
    \label{def:marginal-externality}
    For a training source $(n,z)$, let $q_\varepsilon = (1-\varepsilon)\,q_{\mathrm{train}} + \varepsilon\,\delta_{(n,z)}$, where $\delta_{(n,z)}$ is a point mass on $(n,z)$. The \textbf{marginal alignment externality} of source $(n,z)$ on agent $m$ is
    \begin{equation}
        \gamma^{m}_{n,z}
        =
        \frac{d}{d\varepsilon}
        U_m(\hat{\theta}_{q_\varepsilon})
        \bigg|_{\varepsilon=0}
        =
        \theta_m^\top \frac{d\,\psi(\hat{\theta}_{q_\varepsilon})}{d\varepsilon}\bigg|_{\varepsilon=0},
    \end{equation}
    the inner product of $m$'s preference with the \textbf{marginal impact shift} from upweighting source $(n,z)$.
\end{definition}

The alignment externality $\Gamma$ (Appendix~\ref{app:fullalignment}) and the marginal externality $\gamma$ are related: the former is the discrete welfare change from participation, the latter the infinitesimal welfare change from reweighting. Both are inner products of preferences with impact shifts---$\Gamma$ uses the finite shift $\Delta_n = \psi - \psi_{-n}$, while $\gamma$ uses the infinitesimal shift $d\psi/d\varepsilon$---but are conceptually distinct: $\Gamma$ asks ``what if agent $n$ had never participated?'' while $\gamma$ asks ``what if we gave source $(n,z)$ slightly more weight?''

\subsection{Anonymous RLHF and the Closed-Form Marginal Externality}

Anonymous RLHF fits a single parameter $\hat\theta^{\mathrm{RLHF}}$ to pooled data, ignoring agent identities at training time:
\begin{equation}
    \hat{\theta}_{q_{\mathrm{train}}}^{\mathrm{RLHF}}
    \in
    \arg\max_{\theta\in\mathbb R^k}\ 
    -\mathbb E_{(n,z)\sim q_{\mathrm{train}}}\,
    \mathbb E_{c\mid n,z}\left[
        -c\log\sigma(\alpha_z^\top\theta)
    -(1-c)\log\big(1-\sigma(\alpha_z^\top\theta)\big)
    \right]
\end{equation}
%
The marginal externality has a closed form, obtained from the influence function of the estimator. Throughout the remainder of this section we assume the Bradley--Terry--Luce labeler model $\mathbb E[c\mid n,z]=\sigma(\alpha_z^\top\theta_n)$ and that the training queries span $\mathbb R^k$, so that $H$ below is positive definite and $\hat\theta_q$ is the unique stationary point of the training objective, depending smoothly on $q$ by the implicit function theorem. (Otherwise, every statement should be read as restricted to $S_{\mathrm{train}}=\operatorname{span}\{\alpha_z:z\in\operatorname{supp}(q_{\mathrm{train}})\}$, with $S_{\mathrm{dep}}\subseteq S_{\mathrm{train}}$ required.) The first-order condition is $g(\hat\theta^{\mathrm{RLHF}};q_{\mathrm{train}})=0$ with $g(\theta;q)=\mathbb E_{(n',z')\sim q}[(\sigma(\alpha_{z'}^\top\theta_{n'})-\sigma(\alpha_{z'}^\top\theta))\,\alpha_{z'}]$. Differentiating $g(\hat\theta_{q_\varepsilon};q_\varepsilon)=0$ at $\varepsilon=0$ and using the base condition $g(\hat\theta^{\mathrm{RLHF}};q_{\mathrm{train}})=0$ to cancel the distributional term gives the influence function
\begin{equation}\label{eq:rlhf-influence}
    \frac{d\,\hat\theta_{q_\varepsilon}}{d\varepsilon}\bigg|_{0}
    =\big(\sigma(\alpha_z^\top\theta_n)-\sigma(\alpha_z^\top\hat\theta^{\mathrm{RLHF}})\big)\,H^{-1}\alpha_z,
    \qquad
    H=\mathbb E_{(n',z')\sim q_{\mathrm{train}}}\big[v(\alpha_{z'}^\top\hat\theta^{\mathrm{RLHF}})\,\alpha_{z'}\alpha_{z'}^\top\big].
\end{equation}
Applying the Jacobian $\nabla\psi(\hat\theta^{\mathrm{RLHF}})=\mathbb E_{z'\sim q_{\mathrm{dep}}}[v(\alpha_{z'}^\top\hat\theta^{\mathrm{RLHF}})\alpha_{z'}\alpha_{z'}^\top]$ (Theorem~\ref{thm:welfare-geometry}) and summing over agents with the welfare weights $w$ yields a closed form for the \emph{social welfare effect} $\gamma^W_{n,z}=\sum_{m}w_m\gamma^m_{n,z}$ of source $(n,z)$:
\begin{equation}
    \label{eq:rlhf-externality}
    \gamma_{n,z}^{W}
    =
    \underbrace{\big(\sigma(\alpha_z^\top\theta_n) - \sigma(\alpha_z^\top\hat{\theta}^{\mathrm{RLHF}})\big)}_{\text{preference deviation on query } z}\,
    \underbrace{\big(\alpha_z^\top \eta\big)}_{\text{welfare sensitivity of query } z}
\end{equation}
where
\begin{equation}
    \eta
    =
    \mathbb E_{(n',z')\sim q_{\mathrm{train}}}\left[
        v(\alpha_{z'}^\top\hat{\theta}^{\mathrm{RLHF}})\,
        \alpha_{z'}\,\alpha_{z'}^\top
    \right]^{-1}\,
    \mathbb E_{z\sim q_{\mathrm{dep}}}\left[
        (\alpha_z^\top\bar{\theta}_w)\,v(\alpha_z^\top\hat{\theta}^{\mathrm{RLHF}})\,\alpha_z
    \right]
\end{equation}
is independent of $(n,z)$ and acts as a sufficient statistic for the welfare implications of parameter changes. Anonymous RLHF distorts welfare along directions that are simultaneously (i)~systematically misrepresented in the training data and (ii)~socially important at deployment.

\subsection{RLHF Stationarity vs.\ Utilitarian Efficiency}

The central result of this section is that RLHF's first-order condition and utilitarian efficiency impose qualitatively different requirements on the marginal externalities. RLHF ensures only that the \textit{social welfare effect} of marginal perturbations cancels on average across training sources; utilitarian efficiency requires that it vanishes for \textit{every} training source individually.

Equivalently, $\gamma^W_{n,z}=\bar{\theta}_w^\top \frac{d\,\psi(\hat{\theta}_{q_\varepsilon})}{d\varepsilon}\big|_{\varepsilon=0}$, the inner product of the utilitarian preference with the marginal impact shift from upweighting source $(n,z)$.

\begin{theorem}[RLHF Stationarity vs.\ Utilitarian Efficiency]\label{thm:stationarity-vs-efficiency}
    
    \hspace{1ex}
    \begin{enumerate}
        \item \textbf{RLHF stationarity (average zero).} At the anonymous RLHF solution, the training-weighted average social welfare effect vanishes:
        \begin{equation}
            \mathbb E_{(n,z)\sim q_{\mathrm{train}}}\left[\gamma_{n,z}^W\right] = 0.
        \end{equation}
        Individual sources may have $\gamma_{n,z}^W>0$ (upweighting would improve welfare) or $\gamma_{n,z}^W<0$, but these cancel on average under $q_{\mathrm{train}}$.

        \item \textbf{Utilitarian efficiency (pointwise zero).} If the training weighting is \emph{welfare-optimal}---if $q_{\mathrm{train}}$ maximizes the deployed utilitarian welfare $\bar{\theta}_w^\top\psi(\hat\theta_q)$ over all reweightings $q$ of its sources---then the social welfare effect vanishes for every source individually:
        \begin{equation}
            \gamma_{n,z}^W = 0 \qquad \forall\,(n,z) \in \mathrm{supp}(q_{\mathrm{train}}).
        \end{equation}
    \end{enumerate}
\end{theorem}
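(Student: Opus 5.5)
The plan is to work from the closed form \eqref{eq:rlhf-externality}, $\gamma^W_{n,z}=\big(\sigma(\alpha_z^\top\theta_n)-\sigma(\alpha_z^\top\hat\theta^{\mathrm{RLHF}})\big)\,(\alpha_z^\top\eta)$, where $\eta$ does not depend on the source $(n,z)$. Both parts then reduce to statements about this product: part~1 to an averaging identity that uses the first-order condition, and part~2 to a first-order optimality condition over the simplex of reweightings.

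\textbf{Part 1.} Take the expectation over $(n,z)\sim q_{\mathrm{train}}$. Because $\eta$ is a fixed vector, linearity gives
$$\mathbb E_{(n,z)\sim q_{\mathrm{train}}}[\gamma^W_{n,z}]=\eta^\top\,\mathbb E_{(n,z)\sim q_{\mathrm{train}}}\big[\big(\sigma(\alpha_z^\top\theta_n)-\sigma(\alpha_z^\top\hat\theta^{\mathrm{RLHF}})\big)\alpha_z\big]=\eta^\top g(\hat\theta^{\mathrm{RLHF}};q_{\mathrm{train}}).$$
The RLHF first-order condition is exactly $g(\hat\theta^{\mathrm{RLHF}};q_{\mathrm{train}})=0$, so the average vanishes. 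An equivalent route: the mixture $\mathbb E_{(n,z)\sim q}[\delta_{(n,z)}]=q$ means perturbing toward $q$ itself is the null perturbation, and the influence function integrated against $q$ is zero.

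\textbf{Part 2.} View $W(q)=\bar\theta_w^\top\psi(\hat\theta_q)$ as a function on the simplex of distributions supported on $\operatorname{supp}(q_{\mathrm{train}})$. By the standing assumptions ($H$ positive definite and the implicit function theorem), $W$ is differentiable there. Its directional derivative toward the vertex $\delta_{(n,z)}$ is exactly $\gamma^W_{n,z}$.
\begin{itemize}
\item If $q_{\mathrm{train}}$ maximizes $W$, every feasible direction has nonpositive derivative, so $\gamma^W_{n,z}\le 0$ for all sources in the support.
\item Each source in the support has positive mass, so we may also move away from $\delta_{(n,z)}$: the path $q_\varepsilon$ with small negative $\varepsilon$ stays in the simplex, since $(1+|\varepsilon|)q_{\mathrm{train}}-|\varepsilon|\delta_{(n,z)}\ge0$ for $|\varepsilon|\le q_{\mathrm{train}}(n,z)/(1-q_{\mathrm{train}}(n,z))$. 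This gives $-\gamma^W_{n,z}\le0$.
\end{itemize}
Hence $\gamma^W_{n,z}=0$ pointwise. As a consistency check, this is compatible with part~1.

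\textbf{Main obstacle.} The delicate step is justifying the two-sided perturbation in part~2. This requires
\begin{itemize}
\item that $\hat\theta_{q_\varepsilon}$ exists and is differentiable for $\varepsilon$ slightly negative, which the implicit function theorem gives since $H\succ0$ at $\varepsilon=0$;
\item that "reweightings of its sources" means distributions on the same support, so the optimum is interior relative to that face.
\end{itemize}
I would state this interpretation explicitly and note that, for a boundary optimum where some source has zero weight, only the inequality $\gamma^W\le0$ would survive.
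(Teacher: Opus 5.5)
Your proposal is correct and follows the paper's proof. Part~1 is exactly the paper's argument: pull the source-independent vector $\eta$ out of the expectation and invoke $g(\hat\theta^{\mathrm{RLHF}};q_{\mathrm{train}})=0$. Part~2 is the same first-order optimality step on the simplex, which the paper phrases via KKT (the partials $\partial_{n,z}W$ equal a common multiplier on the support, so $\gamma^W_{n,z}=\partial_{n,z}W-\mathbb E_{q_{\mathrm{train}}}[\partial W]=0$) and you phrase as a two-sided perturbation along the segment through $q_{\mathrm{train}}$ and $\delta_{(n,z)}$, which avoids having to extend $W$ off the simplex.
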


Anonymous RLHF guarantees only the average condition. Whenever it leaves some source with $\gamma_{n,z}^W\neq0$, upweighting the sources with positive marginal effect (and downweighting those with negative effect) strictly improves deployed welfare to first order.

\begin{proof}[Proof of Theorem~\ref{thm:stationarity-vs-efficiency}]
    \emph{(1).} By the closed form \eqref{eq:rlhf-externality},
    $
        \mathbb E_{(n,z)\sim q_{\mathrm{train}}}[\gamma^W_{n,z}]
        =\big(\mathbb E_{(n,z)\sim q_{\mathrm{train}}}[(\sigma(\alpha_z^\top\theta_n)-\sigma(\alpha_z^\top\hat\theta^{\mathrm{RLHF}}))\,\alpha_z]\big)^\top\eta
        =0,
    $
    because the RLHF first-order condition makes the bracketed vector vanish.

    \emph{(2).} Let $W(q)=U(\hat\theta_q)=\bar\theta_w^\top\psi(\hat\theta_q)$ be the deployed welfare as a function of the training weighting. Since $q_\varepsilon=(1-\varepsilon)q_{\mathrm{train}}+\varepsilon\delta_{(n,z)}$, the definition of $\gamma^W_{n,z}$ is the directional derivative toward the vertex $(n,z)$ of the simplex,
    $
        \gamma^W_{n,z}=\tfrac{d}{d\varepsilon}W(q_\varepsilon)\big|_{0}=\partial_{n,z}W-\mathbb E_{q_{\mathrm{train}}}[\partial W].
    $
    If $q_{\mathrm{train}}$ maximizes $W$ over the simplex of reweightings, the first-order (KKT) condition is that the partials $\partial_{n,z}W$ are equal---to a common multiplier $\lambda$---across $\mathrm{supp}(q_{\mathrm{train}})$; their $q_{\mathrm{train}}$-average is then also $\lambda$, so $\gamma^W_{n,z}=\lambda-\lambda=0$ for every $(n,z)\in\mathrm{supp}(q_{\mathrm{train}})$.
\end{proof}

The welfare consequences of RLHF distortion are fully characterized by the single vector $\Delta\psi \in \mathbb{R}^k$: the aggregate welfare loss is $\bar{\theta}_w^\top \Delta\psi$, and the per-agent redistribution is $(\theta_n^\top \Delta\psi)_{n=1}^N$.

\begin{remark}[Query-level decomposition]
    The impact gap decomposes across deployment queries as
    \begin{equation}
        \bar{\theta}_w^\top \Delta\psi = \mathbb E_{z \sim q_{\mathrm{dep}}}\left[(\bar{\theta}_w^\top \alpha_z)\Big(\tfrac{1}{2}\operatorname{sign}(\alpha_z^\top \bar{\theta}_w) - \big(\sigma(\alpha_z^\top \hat{\theta}^{\mathrm{RLHF}}) - \tfrac{1}{2}\big)\Big)\right]
    \end{equation}
    The distortion on each query $z$ is the difference between the deterministic utilitarian decision and the probabilistic RLHF decision, weighted by the welfare importance of the query.
\end{remark}
\subsection{Recoverability under Anonymous RLHF}
\label{app:relabel}
A natural question is whether reweighting the training distribution can close the gap. By the impact identity (Proposition~\ref{prop:rlhf-impact}), reweighting moves the deployed impact only through the per-query label frequencies $\{\mu_z\}$, and the answer depends on whether the reweighting conditions on individual identities.

Let $C=\operatorname{Conv}(\{\theta_n\}_{n=1}^N)$ be the convex hull of the population preferences, let $S = \operatorname{span}\{\alpha_z : z \in \operatorname{supp}(q_{\mathrm{train}})\}$, and let $P_S$ denote projection onto $S$.

\begin{definition}
    \label{def:recoverable-set}
    A parameter $\theta\in\mathbb R^k$ is \textbf{individually mix-recoverable} if there exists, for each query $z\in Z$, a probability vector $\pi(\cdot\mid z)\in\Delta_N$ such that
    \begin{equation*}
        \sigma(\alpha_z^\top\theta)
        =
        \sum_{n=1}^N \pi(n\mid z)\,\sigma(\alpha_z^\top\theta_n)
        \qquad \text{for every $z\in Z$}
    \end{equation*}
    Let $\Theta^{\mathrm{indiv}}$ denote the set of all such parameters. A parameter is \textbf{anonymously mix-recoverable} if $\pi(n\mid z)=\frac{1}{N}$ for every $n$. Denote the set of such parameters $\Theta^{\mathrm{anon}}$.
\end{definition}

\begin{proposition}
    \label{prop:recoverable-set}
    \hspace{0em}
    \begin{enumerate}
        \item A parameter $\theta \in \Theta^{\mathrm{indiv}}$ if and only if $\alpha_z^\top\theta \in \big[\min_{n}\alpha_z^\top\theta_n,\; \max_{n}\alpha_z^\top\theta_n \big]$ for every $z\in Z$.
        
        \item $P_S \, C \subseteq P_S\,\Theta^{\mathrm{indiv}}$: the convex hull of individual preferences is contained in the individually recoverable set.
    \end{enumerate}
\end{proposition}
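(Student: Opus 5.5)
The argument works query by query, using that $\sigma$ is continuous and strictly increasing.

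For part 1, fix $z$ and set $a_n=\alpha_z^\top\theta_n$. The mixture condition says $\sigma(\alpha_z^\top\theta)$ is a convex combination of the values $\sigma(a_n)$. As $\pi(\cdot\mid z)$ ranges over $\Delta_N$, these combinations fill exactly the interval $[\min_n\sigma(a_n),\max_n\sigma(a_n)]$.

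\emph{Necessity.} A convex combination lies between the minimum and the maximum of the combined values. Since $\sigma$ is strictly increasing, $\min_n\sigma(a_n)=\sigma(\min_n a_n)$ and likewise for the maximum. Applying $\sigma^{-1}$ then gives $\alpha_z^\top\theta\in[\min_n a_n,\max_n a_n]$.

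\emph{Sufficiency.} Suppose $\alpha_z^\top\theta$ lies in that interval. Let $n_-$ and $n_+$ attain the minimum and maximum of the $a_n$.
\begin{itemize}
\item If $a_{n_-}=a_{n_+}$, put all mass on $n_-$.
\item Otherwise, take
\[
\pi(n_+\mid z)=\frac{\sigma(\alpha_z^\top\theta)-\sigma(a_{n_-})}{\sigma(a_{n_+})-\sigma(a_{n_-})},
\]
put the remaining mass on $n_-$, and set all other entries to zero. Monotonicity of $\sigma$ places this ratio in $[0,1]$, and the resulting mixture reproduces $\sigma(\alpha_z^\top\theta)$ exactly.
\end{itemize}
Because the condition in Definition~\ref{def:recoverable-set} is imposed separately for each $z$, with no coupling between the $\pi(\cdot\mid z)$, these per-query choices assemble into a valid family.

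For part 2, take $c=\sum_n\beta_n\theta_n\in C$ with $\beta\in\Delta_N$. For every $z$, $\alpha_z^\top c=\sum_n\beta_n\alpha_z^\top\theta_n$ is a convex combination of the $a_n$, so it lies in $[\min_n a_n,\max_n a_n]$. By part 1, $c\in\Theta^{\mathrm{indiv}}$, hence $P_S c\in P_S\Theta^{\mathrm{indiv}}$. This already gives the stated inclusion; the projection is immaterial here, since $C\subseteq\Theta^{\mathrm{indiv}}$ itself.

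One caveat concerns part 2 for queries $z\in Z$ outside $\operatorname{supp}(q_{\mathrm{train}})$. The paper's $S$ is spanned only by training queries, so the projection may be intended to discard constraints from queries not in the span. Our argument handles every $z\in Z$ directly, so this causes no problem. The only real obstacle is the degenerate case $a_{n_-}=a_{n_+}$ in part 1, which we dealt with separately above.
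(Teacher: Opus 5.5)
Your proposal is correct and follows the paper's proof essentially step for step. Necessity uses the convex-combination bound together with strict monotonicity of $\sigma$. Sufficiency uses a two-point mixture on the minimizing and maximizing agents; your explicit weight on $n_+$ is just the paper's $t_z$ written out, and splitting the degenerate case by values ($a_{n_-}=a_{n_+}$) rather than by indices is the right condition for your formula. Part 2 applies Part 1 to $\alpha_z^\top c$ as a convex combination of the $\alpha_z^\top\theta_n$. Your remark that in fact $C\subseteq\Theta^{\mathrm{indiv}}$, with no projection needed, is exactly what the paper uses later in the proof of Corollary~\ref{cor:recoverable-impacts}.
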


\begin{proof}[Proof of Proposition~\ref{prop:recoverable-set}]
    
    \hspace{0em}
    \begin{enumerate}
        \item[Part 1:]
        \begin{enumerate}
            \item[$\Rightarrow$] Per the definition of $\theta \in \Theta^{\mathrm{indiv}}$, there exists a $\pi$ such that, for all $z$,
            \begin{equation}
                \sigma(\alpha_z^\top\theta) = \sum_{n=1}^N\pi(n|z) \, \sigma(\alpha_z^\top \theta_n)
            \end{equation}
            As a convex combination of scalars, the right-hand side must lie between its extreme points, so that $\sigma(\alpha_z^\top\theta) \in [\min_{n}\sigma(\alpha_z^\top\theta_n),\max_{n}\sigma(\alpha_z^\top\theta_n)]$ for all z.
            But then, by the strict monotonicity of $\sigma$, this implies that $\alpha_z^\top\theta \in [\min_{n}\alpha_z^\top\theta_n,\max_{n}\alpha_z^\top\theta_n]$ for all z.
            
            \item[$\Leftarrow$] Let $z \in Z$ be arbitrary, and suppose that $\sigma(\alpha_z^\top\theta) \in [\min_{n}\sigma(\alpha_z^\top\theta_n),\max_{n}\sigma(\alpha_z^\top\theta_n)]$. Let $\underline{n} = \arg\min_{n}\sigma(\alpha_z^\top\theta_n)$ and $\overline{n} = \arg\max_{n}\sigma(\alpha_z^\top\theta_n)$. Since $\sigma(\alpha_z^\top\theta)$ lies in the closed interval with endpoints $\sigma(\alpha_z^\top\theta_{\underline{n}})$ and $\sigma(\alpha_z^\top\theta_{\overline{n}})$, there exists $t_z \in [0,1]$ such that            %
            \begin{equation}
                \sigma(\alpha_z^\top\theta) = (1-t_z)\,\sigma(\alpha_z^\top\theta_{\underline{n}}) + t_z\,\sigma(\alpha_z^\top\theta_{\overline{n}}).
            \end{equation}
            Set $\pi(\underline{n}\mid z) = 1-t_z$, $\pi(\overline{n}\mid z) = t_z$, and $\pi(n\mid z) = 0$ for every other $n$. If $\underline{n} = \overline{n}$, the interval is a single point, and we set $\pi(\underline{n}\mid z) = 1$ instead. In either case $\pi(\cdot\mid z) \in \Delta_N$, as the definition of $\Theta^{\mathrm{indiv}}$ requires. Applying this construction at each $z \in Z$ yields the result.
        \end{enumerate}
        
        \item[Part 2:] $P_S C\subseteq P_S\Theta^{\mathrm{indiv}}$.
Fix any $\theta\in C$. By definition of $C$, there exists $\lambda\in\Delta_N$ such that
$
    \theta=\sum_{n=1}^N\lambda_n\theta_n.
$
Then, for every $z\in Z$,
$
    \alpha_z^\top\theta
    =
    \sum_{n=1}^N\lambda_n\alpha_z^\top\theta_n,
$
so $\alpha_z^\top\theta$ lies between $\min_n\alpha_z^\top\theta_n$ and $\max_n\alpha_z^\top\theta_n$. By Part 1, $\theta\in\Theta^{\mathrm{indiv}}$, and therefore $P_S C\subseteq P_S\Theta^{\mathrm{indiv}}$.

    \end{enumerate}
\end{proof}

Thus, with individual-level label weights, RLHF can recover any parameter in the convex hull $C$ of the population preferences---in particular the welfare-weighted parameter $\bar\theta_w$ for every $w\in\Delta_N$: if $\theta\in\Theta^{\mathrm{indiv}}$ with mixture weights $\pi$, then setting $q_{\mathrm{train}}(n\mid z)=\pi(n\mid z)$ makes $\theta$ satisfy the stationarity condition of Proposition~\ref{prop:rlhf-impact} exactly, with zero population loss. Anonymous weighting, by contrast, is far more rigid:

\begin{proposition}\label{prop:unif-recoverable-set}
    If $S \neq \{0\}$, then $P_S \, \Theta^{\mathrm{anon}}$ is either empty or a singleton.
\end{proposition}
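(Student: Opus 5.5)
The plan is to show that any two anonymously mix-recoverable parameters have the same projection onto $S$; then $P_S\,\Theta^{\mathrm{anon}}$ has at most one element, hence is empty or a singleton. By Definition~\ref{def:recoverable-set} with $\pi(n\mid z)=\tfrac1N$, $\theta\in\Theta^{\mathrm{anon}}$ if and only if, for every $z\in Z$,
\begin{equation*}
    \sigma(\alpha_z^\top\theta)=\bar m_z,\qquad \bar m_z:=\frac1N\sum_{n=1}^N\sigma(\alpha_z^\top\theta_n).
\end{equation*}
The right-hand side is a fixed number in $(0,1)$ that depends only on the population. This is the key structural point: with anonymous weights there is no freedom in the target label frequency on each query, unlike the individual case of Proposition~\ref{prop:recoverable-set}, where the target ranges over an interval.

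Next, since $\sigma$ is a strictly increasing bijection $\mathbb R\to(0,1)$, each condition pins down the scalar $\alpha_z^\top\theta=\sigma^{-1}(\bar m_z)=:r_z$. So $\Theta^{\mathrm{anon}}$ is exactly the solution set of the linear system $\alpha_z^\top\theta=r_z$, $z\in Z$, which is either empty or an affine subspace. Take $\theta,\theta'\in\Theta^{\mathrm{anon}}$. Then $\alpha_z^\top(\theta-\theta')=0$ for every $z$, and in particular for every $z\in\operatorname{supp}(q_{\mathrm{train}})$. Hence $\theta-\theta'\perp S$, so $P_S\theta=P_S\theta'$. This gives the claim. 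The hypothesis $S\neq\{0\}$ only rules out the trivial case in which $P_S$ is identically zero; the argument does not otherwise need it, beyond making the statement nontrivial.

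The only delicate point is the case distinction, i.e.\ that the system may be inconsistent. For example, two queries with $\alpha_{z'}=2\alpha_z$ force $r_{z'}=2r_z$, which generally fails because $\sigma^{-1}$ of an average of sigmoids is not linear in the $\alpha$'s. In that case $\Theta^{\mathrm{anon}}$, and hence its projection, is empty. I will remark on this but not characterize when nonemptiness holds, since the proposition only asserts the dichotomy. I expect no real obstacle here; the one step requiring care is invoking strict monotonicity of $\sigma$ to pass from the equalities of label frequencies to equalities of the linear forms $\alpha_z^\top\theta$.
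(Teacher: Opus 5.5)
Your proof is correct and follows the paper's argument: both use strict monotonicity of $\sigma$ to reduce anonymous recoverability to the linear system $\alpha_z^\top\theta=\sigma^{-1}\bigl(\tfrac1N\sum_n\sigma(\alpha_z^\top\theta_n)\bigr)$ for all $z\in Z$. Both then note that any two solutions differ by a vector orthogonal to every $\alpha_z$, hence to $S$, so they share a projection. Your side remark that the hypothesis $S\neq\{0\}$ is not actually needed is also right.
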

\begin{proof}[Proof of Proposition~\ref{prop:unif-recoverable-set}]
    When $\theta\in\Theta^{\mathrm{anon}}$, 
    \begin{equation}
    \alpha_z^\top \theta
    =
    \sigma^{-1}\left(\frac{1}{N}\sum_{n=1}^N \sigma(\alpha_z^\top \theta_n)\right)
    \qquad \forall z\in Z
    \label{eq:unif-recovery}
    \end{equation}
    where the right-hand side is determined entirely by the population $\{\theta_n\}_{n=1}^N$ for each $z$.
    If there exists no $\theta$ satisfying this system of linear constraints, then $\Theta^{\mathrm{anon}}=\varnothing$ and hence
    $P_S\,\Theta^{\mathrm{anon}}$ is empty.
    
    Otherwise, suppose $\theta,\theta'\in\Theta^{\mathrm{anon}}$.
    They both satisfy the system of linear constraints, so for every $z\in Z$, $\alpha_z^\top \theta = \alpha_z^\top \theta'$ and hence $\alpha_z^\top(\theta-\theta')=0$ for $z\in Z$. This implies $\theta-\theta'$ is orthogonal to $\mathrm{span}\{\alpha_z:z\in Z\}\supseteq S$, and hence to $S$. Thus all elements of $\Theta^{\mathrm{anon}}$ have the same projection onto $S$. Therefore $P_S\,\Theta^{\mathrm{anon}}$ is at most a singleton.
\end{proof}

Anonymous RLHF is pinned to at most one parameter in the identifiable subspace, regardless of how the query distribution is chosen.

\begin{corollary}[Impact-space consequence]
    \label{cor:recoverable-impacts}
    Let $\Psi^{\mathrm{indiv}} = \{\psi(\theta) : \theta \in \Theta^{\mathrm{indiv}}\}$ and $\Psi^{\mathrm{anon}} = \{\psi(\theta) : \theta \in \Theta^{\mathrm{anon}}\}$. Then $\psi(\bar{\theta}_w) \in \Psi^{\mathrm{indiv}}$ for every $w \in \Delta_N$, but $\Psi^{\mathrm{anon}}$ is at most a singleton. If $\Psi^{\mathrm{anon}} = \{\psi^{\mathrm{anon}}\}$, the welfare gap from the anonymity constraint is $\bar{\theta}_w^\top(\psi^\star - \psi^{\mathrm{anon}})$.
\end{corollary}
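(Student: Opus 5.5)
The corollary has three claims: $\psi(\bar\theta_w)\in\Psi^{\mathrm{indiv}}$ for every $w$, $\Psi^{\mathrm{anon}}$ has at most one point, and a formula for the welfare gap. I will derive each directly from Proposition~\ref{prop:recoverable-set}, Proposition~\ref{prop:unif-recoverable-set}, and the structure of $\psi$ from Theorem~\ref{thm:welfare-geometry}.

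\emph{Membership of $\psi(\bar\theta_w)$.} Fix $w\in\Delta_N$. Then $\bar\theta_w=\sum_n w_n\theta_n$ lies in $C$. Proposition~\ref{prop:recoverable-set}, Part~2, as proved, gives more than a statement about projections. For any $\theta\in C$ and every $z$, the value $\alpha_z^\top\theta$ lies between $\min_n\alpha_z^\top\theta_n$ and $\max_n\alpha_z^\top\theta_n$. Part~1 then yields $\theta\in\Theta^{\mathrm{indiv}}$ itself, not merely its projection. Hence $\bar\theta_w\in\Theta^{\mathrm{indiv}}$, and so $\psi(\bar\theta_w)\in\Psi^{\mathrm{indiv}}$ by definition. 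I would state this strengthening explicitly rather than cite only the projected inclusion.

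\emph{At most one anonymous impact.} The key observation is that $\psi$ depends on $\theta$ only through the scalars $\alpha_z^\top\theta$ for $z\in\operatorname{supp}(q_{\mathrm{dep}})$. So two parameters that agree on every $\alpha_z^\top\theta$ have the same impact. Take $\theta,\theta'\in\Theta^{\mathrm{anon}}$. By \eqref{eq:unif-recovery}, $\alpha_z^\top\theta=\alpha_z^\top\theta'$ for every $z\in Z$, since both equal $\sigma^{-1}\bigl(\tfrac1N\sum_n\sigma(\alpha_z^\top\theta_n)\bigr)$. This holds in particular on $\operatorname{supp}(q_{\mathrm{dep}})\subseteq Z$, so $\psi(\theta)=\psi(\theta')$ and $\Psi^{\mathrm{anon}}$ is empty or a singleton.

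This argument avoids comparing $S=S_{\mathrm{train}}$ with $S_{\mathrm{dep}}$, which is the one subtle point. Invoking Proposition~\ref{prop:unif-recoverable-set} directly would require $S_{\mathrm{dep}}\subseteq S$ to pass from projections onto $S$ to impacts. Going back to the per-query equalities in its proof is cleaner. I also need to confirm that deployment queries lie in $Z$, which holds because $q_{\mathrm{dep}}\in\Delta(Z)$.

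\emph{Welfare gap.} If $\Psi^{\mathrm{anon}}=\{\psi^{\mathrm{anon}}\}$, utilitarian welfare is linear in impact: $U(\psi)=\bar\theta_w^\top\psi$. The gap between the optimum $\psi^\star$ of Proposition~\ref{prop:asymptotic-alignment} and the only anonymously attainable impact is therefore $\bar\theta_w^\top\psi^\star-\bar\theta_w^\top\psi^{\mathrm{anon}}=\bar\theta_w^\top(\psi^\star-\psi^{\mathrm{anon}})$. This quantity is nonnegative because $\psi^{\mathrm{anon}}\in\Psi\subseteq\bar\Psi$ and $\psi^\star$ maximizes $\bar\theta_w^\top\psi$ over $\bar\Psi$.
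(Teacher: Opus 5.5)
Your proposal is correct and follows the paper's proof essentially verbatim. The paper likewise extracts $C\subseteq\Theta^{\mathrm{indiv}}$ from the proof of Proposition~\ref{prop:recoverable-set}(2), and for the anonymous case it goes back to the per-query equalities \eqref{eq:unif-recovery} together with $\operatorname{supp}(q_{\mathrm{dep}})\subseteq Z$, rather than relying on the projected statement of Proposition~\ref{prop:unif-recoverable-set}. Your treatment of the welfare-gap expression as linearity of $U$, nonnegative because $\psi^{\mathrm{anon}}\in\Psi\subseteq\bar\Psi$, fills in a step the paper leaves implicit.
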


\begin{proof}[Proof of Corollary~\ref{cor:recoverable-impacts}]
    The proof of Proposition~\ref{prop:recoverable-set}(2) shows $C\subseteq\Theta^{\mathrm{indiv}}$, and $\bar\theta_w\in C$, so $\psi(\bar\theta_w)\in\Psi^{\mathrm{indiv}}$. For the second claim, any $\theta,\theta'\in\Theta^{\mathrm{anon}}$ satisfy \eqref{eq:unif-recovery}, so $\alpha_z^\top\theta=\alpha_z^\top\theta'$ for every $z\in Z$; since $\psi$ depends on its argument only through $(\alpha_z^\top\theta)_{z\in\operatorname{supp}(q_{\mathrm{dep}})}$ and $\operatorname{supp}(q_{\mathrm{dep}})\subseteq Z$, the map $\psi$ is constant on $\Theta^{\mathrm{anon}}$, and $\Psi^{\mathrm{anon}}$ is at most a singleton.
\end{proof}

Letting $\mu_z = \mathbb E_{n \sim q_{\mathrm{train}}(n\mid z)}[\sigma(\alpha_z^\top\theta_n)]$ denote the population-level choice frequency for $z$, anonymous RLHF effectively computes the $M$-projection of the choice data onto the logistic model family:
\begin{equation*}
    \hat\theta^{\mathrm RLHF}_{q_{\mathrm train}}
    \in
    \arg\min_{\theta\in\mathbb R^k}\ 
    \mathbb E_{(z,n)\sim q_{\mathrm train}}\!\left[
        \mathrm{D}_{\mathrm{KL}}\!\left(
            \mathrm{Bern}(\mu_z)
            \,\big\|\,
            \mathrm{Bern}(\sigma(\alpha_z^\top\theta))
        \right)
    \right]
\end{equation*}

\begin{proposition}\label{prop:anonymous-recoverable-set}
    Suppose every query is represented in training, $\mathrm{supp}(q_{\mathrm{train}})=Z$. If $\hat{\theta}_{q_{\mathrm{train}}}^{\mathrm{RLHF}}$ achieves zero population training loss, then
    $
        P_S\hat{\theta}_{q_{\mathrm{train}}}^{\mathrm{RLHF}}
        \in
        P_S\Theta^{\mathrm{indiv}}.
    $
\end{proposition}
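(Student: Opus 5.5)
Zero population training loss is the KL form of the anonymous RLHF objective displayed just above the statement. Each term $\mathrm{D}_{\mathrm{KL}}(\mathrm{Bern}(\mu_z)\,\|\,\mathrm{Bern}(\sigma(\alpha_z^\top\hat\theta)))$ is nonnegative. Every query carries positive mass, since $\mathrm{supp}(q_{\mathrm{train}})=Z$. So zero total loss forces each term to vanish, which gives $\sigma(\alpha_z^\top\hat\theta^{\mathrm{RLHF}})=\mu_z$ for every $z\in Z$. If the loss is instead read as the raw cross-entropy, I will first subtract the entropy of $\mathrm{Bern}(\mu_z)$, which does not depend on $\theta$; "zero loss" is then taken to mean attaining this KL minimum of zero. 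This normalization step is the only delicate point.

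Next, $\mu_z=\sum_n q_{\mathrm{train}}(n\mid z)\,\sigma(\alpha_z^\top\theta_n)$ is a convex combination of the $\sigma(\alpha_z^\top\theta_n)$. Hence $\hat\theta^{\mathrm{RLHF}}$ satisfies Definition~\ref{def:recoverable-set} directly with $\pi(n\mid z)=q_{\mathrm{train}}(n\mid z)$. This conditional is well defined for every $z$ because of the full-support assumption. Alternatively, one can invoke Proposition~\ref{prop:recoverable-set}(1): $\mu_z$ lies between the minimum and maximum of the $\sigma(\alpha_z^\top\theta_n)$, and strict monotonicity of $\sigma$ places $\alpha_z^\top\hat\theta$ in the required interval. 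Either route gives $\hat\theta^{\mathrm{RLHF}}\in\Theta^{\mathrm{indiv}}$.

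Finally, applying $P_S$ to this membership gives the claim. One can also note that membership in $\Theta^{\mathrm{indiv}}$ depends on $\theta$ only through the values $\alpha_z^\top\theta$. Since each $\alpha_z$ with $z\in\mathrm{supp}(q_{\mathrm{train}})$ lies in $S$, we have $\alpha_z^\top P_S\theta=\alpha_z^\top\theta$, so $P_S\hat\theta$ itself belongs to $\Theta^{\mathrm{indiv}}$ and the set is invariant under this projection.
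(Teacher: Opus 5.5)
Your proposal is correct and follows essentially the same route as the paper. Zero excess (KL) loss under full query support forces $\sigma(\alpha_z^\top\hat\theta^{\mathrm{RLHF}})=\sum_n q_{\mathrm{train}}(n\mid z)\,\sigma(\alpha_z^\top\theta_n)$ for every $z\in Z$, which is the mixture-recovery condition with $\pi(\cdot\mid z)=q_{\mathrm{train}}(\cdot\mid z)$, and projecting onto $S$ gives the claim; the paper likewise reads ``zero loss'' as zero excess loss, and your closing observation that $P_S\hat\theta^{\mathrm{RLHF}}$ itself lies in $\Theta^{\mathrm{indiv}}$ is a valid slight strengthening, since full support puts every $\alpha_z$ in $S$.
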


\begin{proof}[Proof of Proposition~\ref{prop:anonymous-recoverable-set}]
    If $\hat{\theta}^{\mathrm{RLHF}}_{q_{\mathrm{train}}}$ achieves zero excess loss, then every KL term vanishes on the support of $q_{\mathrm{train}}$, hence
    \begin{equation}
        \sigma(\alpha_z^\top \hat{\theta}^{\mathrm{RLHF}}_{q_{\mathrm{train}}})
        =
        \sum_{n=1}^N q_{\mathrm{train}}(n\mid z)\,\sigma(\alpha_z^\top\theta_n)
        \qquad \forall z\in\mathrm{supp}(q_{\mathrm{train}})
        \label{eq:exact-fit-on-support}
    \end{equation}
    Since $\mathrm{supp}(q_{\mathrm{train}})=Z$, this is exactly the mixture-recovery condition of Proposition~\ref{prop:recoverable-set}(1) at every $z\in Z$. As $\hat{\theta}^{\mathrm{RLHF}}_{q_{\mathrm{train}}}$ is only identifiable in the subspace $S$, applying Proposition~\ref{prop:recoverable-set} gives
$
    P_S\hat{\theta}^{\mathrm{RLHF}}_{q_{\mathrm{train}}}
    \in
    P_S\Theta^{\mathrm{indiv}}.
$
\end{proof}

\section{Additional Material for Section \ref{sec:constrained-welfare}}
\subsection{Alignment Externalities}
\label{app:fullalignment}
Define $\Delta\psi = \psi^\star - \psi^{\mathrm{RLHF}}$ as the impact gap. Writing $U^\star=h_{\bar\Psi}(\bar\theta_w)=\bar\theta_w^\top\psi^\star$ for the optimal utilitarian welfare, the welfare loss from RLHF is
$
            U^\star - \bar{\theta}_w^\top\psi^{\mathrm{RLHF}} = \bar{\theta}_w^\top \Delta\psi \;\geq\; 0
$,
        with equality if and only if $\psi^{\mathrm{RLHF}}$ is itself utilitarian-optimal.

Agent $n$'s welfare change from deploying $\psi^{\mathrm{RLHF}}$ instead of $\psi^\star$ is
        $
            \theta_n^\top\psi^{\mathrm{RLHF}} - \theta_n^\top\psi^\star = -\theta_n^\top \Delta\psi.
        $
        Agents whose preferences are aligned with the impact gap ($\theta_n^\top \Delta\psi > 0$) are harmed by RLHF's distortion; agents anti-aligned with $\Delta\psi$ benefit.

\subsubsection{Participation Alignment Externality}
 The participation alignment externality measures the welfare impact of including an agent's data in the training process relative to the counterfactual in which they are excluded. This is the alignment analogue of the VCG pivot from mechanism design: it quantifies how much an agent's participation affects others.

To see this formally, let $\hat{\theta}$ denote the parameter obtained from training on the full population and $\hat{\theta}_{-n}$ the parameter obtained when agent $n$ is excluded (and remaining agents' weights are renormalized). The corresponding deployed impacts are $\psi = \psi(\hat{\theta})$ and $\psi_{-n} = \psi(\hat{\theta}_{-n})$.

\begin{definition}[Participation Alignment Externality]
    \label{def:alignment-externality}
    The participation alignment externality of agent $n$ on agent $m$ is
    \begin{equation}
        \Gamma_{n,m} = U_m(\hat{\theta}) - U_m(\hat{\theta}_{-n}) = \theta_m^\top\!\left(\psi - \psi_{-n}\right)
    \end{equation}
    The second equality follows from $U_m(\theta) = \theta_m^\top \psi(\theta)$. In impact space, the alignment externality is the inner product of $m$'s preference with the impact shift $\Delta_n = \psi - \psi_{-n}$ caused by $n$'s participation.
\end{definition}

The impact shift $\Delta_n \in \mathbb{R}^k$ is a vector that represents the welfare consequences of $n$'s participation: agent $m$ benefits from $n$'s inclusion whenever $\theta_m^\top \Delta_n > 0$ (their preferences are aligned with the impact shift) and is harmed whenever $\theta_m^\top \Delta_n < 0$. Thus, the aggregate externality of $n$ on all other agents is
\begin{equation}
    \Gamma_{n,-n} = \sum_{m \neq n} w_m \, \Gamma_{n,m} = (1-w_n)\,\bar{\theta}_{-n}^\top \Delta_n
\end{equation}
where $\bar{\theta}_{-n} = \frac{1}{1-w_n}\sum_{m \neq n} w_m \theta_m$ is the leave-one-out utilitarian preference. This is positive if the addition of $n$'s data moves the estimated preference parameter in a direction the rest of the population likes and negative when $n$'s participation moves the parameter in a direction the rest of the population dislikes.

\begin{proof}[Proof of Theorem~\ref{thm:constrained-linear-social-choice}]
    Define the Lagrangian
    \begin{equation}
        \mathcal L(\psi,\eta)
        =
        \bar{\theta}_w^\top \psi
        +
        \sum_{n=1}^N
        \eta_n(\gamma_n^\top\psi - c_n),
        \qquad
        \eta_n \geq 0.
    \end{equation}
    Equivalently,
    \begin{equation}
        \mathcal L(\psi,\eta)
        =
        \left(
            \bar{\theta}_w
            +
            \sum_{n=1}^N \eta_n\gamma_n
        \right)^\top \psi
        -
        \sum_{n=1}^N \eta_n c_n .
    \end{equation}
    Since the feasible set is non-empty and compact and the objective is continuous,
    an optimum exists.

    By Theorem~\ref{thm:welfare-geometry}, $\bar{\Psi}$ is a zonotope generated by finitely many segments, hence a polytope, so the feasible set $\bar\Psi\cap\bigcap_n\{\gamma_n^\top\psi\geq c_n\}$ is a polyhedron and the problem is a finite linear program. Linear-programming duality therefore yields---with no constraint qualification required---multipliers $\eta_n \geq 0$ such that the constrained optimum also solves
    \begin{equation}
        \psi^{\mathrm{wel}}
        \in
        \argmax_{\psi \in \bar{\Psi}}
        \left(
            \bar{\theta}_w
            +
            \sum_{n=1}^N \eta_n\gamma_n
        \right)^\top \psi .
    \end{equation}
    Complementary slackness gives
    \begin{equation}
        \eta_n(\gamma_n^\top\psi^{\mathrm{wel}} - c_n)=0
        \qquad \forall n.
    \end{equation}
    Hence only binding constraints enter the effective objective direction, so if
    $\mathcal B=\{n:\eta_n>0\}$, then
    \begin{equation}
        \bar{\theta}_w
        +
        \sum_{n\in\mathcal B}\eta_n\gamma_n
    \end{equation}
    is the effective welfare direction. Finally, let $V(c)$ denote the optimal value as a
    function of the threshold vector $c$. Raising any $c_n$ shrinks the feasible set, so $V$
    is weakly decreasing in each $c_n$. For concavity, let $\psi_1,\psi_2$ be optimal at
    thresholds $c^{(1)},c^{(2)}$ and $t\in[0,1]$; then $t\psi_1+(1-t)\psi_2\in\bar\Psi$ by
    convexity and satisfies $\gamma_n^\top(t\psi_1+(1-t)\psi_2)\geq tc^{(1)}_n+(1-t)c^{(2)}_n$
    for all $n$, so it is feasible at $tc^{(1)}+(1-t)c^{(2)}$ and
    $V(tc^{(1)}+(1-t)c^{(2)})\geq tV(c^{(1)})+(1-t)V(c^{(2)})$. Whenever $V$ is
    differentiable, its derivative satisfies
    \begin{equation}
        \frac{\partial V}{\partial c_n}=-\eta_n .
    \end{equation}
\end{proof}


\begin{proof}[Proof of Corollary~\ref{cor:bounded-harm}]
    Apply Theorem~\ref{thm:constrained-linear-social-choice} with
    $\gamma_n=\theta_n$ and $c_n=b_n$. The effective welfare direction is
    \begin{equation*}
        \bar{\theta}_w+\sum_{n=1}^N\eta_n\theta_n
        =
        \sum_{n=1}^N(w_n+\eta_n)\theta_n .
    \end{equation*}
    Complementary slackness gives
    \begin{equation*}
        \eta_n(\theta_n^\top\psi_b^{\mathrm h}-b_n)=0
        \qquad \forall n.
    \end{equation*}
\end{proof}

\begin{proof}[Proof of Corollary~\ref{cor:public-spirit}]
    Apply Theorem~\ref{thm:constrained-linear-social-choice} with
    \begin{equation*}
        \gamma_n^{\mathrm{ps}}
        =
        \gamma\bar{\theta}_w+(1-\gamma)\theta_n
        \qquad\text{and}\qquad
        c_n
        =
        \big(\gamma\bar{\theta}_w+(1-\gamma)\theta_n\big)^\top\psi_0 .
    \end{equation*}
    The effective welfare direction is
    \begin{align*}
        \bar{\theta}_w
        +
        \sum_{n=1}^N\eta_n
        \big(\gamma\bar{\theta}_w+(1-\gamma)\theta_n\big)
        &=
        \left(1+\gamma\sum_{n=1}^N\eta_n\right)\bar{\theta}_w
        +
        (1-\gamma)\sum_{n=1}^N\eta_n\theta_n .
    \end{align*}
    By complementary slackness, only binding constraints enter this expression, giving
    \begin{equation*}
        \left(1+\gamma\sum_{n\in\mathcal B}\eta_n\right)\bar{\theta}_w
        +
        (1-\gamma)\sum_{n\in\mathcal B}\eta_n\theta_n .
    \end{equation*}
    The complementary slackness condition is
    \begin{equation*}
        \eta_n
        \left[
            \big(\gamma\bar{\theta}_w+(1-\gamma)\theta_n\big)^\top
            (\psi_\gamma^{\mathrm{ps}}-\psi_0)
        \right]
        =
        0
        \qquad \forall n.
    \end{equation*}
\end{proof}

\begin{proof}[Proof of Corollary~\ref{cor:externality-bound}]
    Apply Theorem~\ref{thm:constrained-linear-social-choice} 
    with $\gamma_n = \bar{\theta}_{-n}$. Substituting 
    $\bar{\theta}_{-n} = \frac{1}{1 - w_n}
    (\bar{\theta}_w - w_n\theta_n)$ into the effective 
    direction:
    \begin{align*}
        \bar{\theta}_w 
        + \sum_{n \in \mathcal{B}} \eta_n\,\bar{\theta}_{-n}
        &=
        \bar{\theta}_w 
        + \sum_{n \in \mathcal{B}} 
        \frac{\eta_n}{1 - w_n}
        (\bar{\theta}_w - w_n\,\theta_n)
        \\
        &=
        \left(1 + \sum_{n \in \mathcal{B}} 
        \frac{\eta_n}{1 - w_n}\right)\bar{\theta}_w
        \;-\;
        \sum_{n \in \mathcal{B}} 
        \frac{w_n\,\eta_n}{1 - w_n}\,\theta_n.
    \end{align*}
    Complementary slackness follows from the general theorem.
\end{proof}

The constrained welfare mechanisms introduced in Section \ref{sec:constrained-welfare} can be understood as directly regulating the alignment externality system defined in Appendix~\ref{app:fullalignment}.

\begin{proposition}[Externality Decomposition of Welfare Floors]
    \label{prop:externality-decomposition}
    Let $\psi=\psi(\hat\theta)$ be the impact deployed by the full-population training procedure and
    $\psi_{-n} = \psi(\hat{\theta}_{-n})$ the leave-one-out impact, as in Definition~\ref{def:alignment-externality}.
    Then the welfare floor constraint 
    $\theta_m^\top \psi \geq b_m$ is equivalent to
    \begin{equation}
        \Gamma_{n,m} \;\geq\; b_m - \theta_m^\top \psi_{-n}
        \qquad \forall\, n.
    \end{equation}
    That is, the welfare floor constrains each agent's participation 
    externality on $m$ to be large enough to lift $m$'s welfare above 
    the floor, given $m$'s baseline welfare without $n$.
\end{proposition}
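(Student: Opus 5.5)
The proof is a direct algebraic rewriting based on Definition~\ref{def:alignment-externality}. For any fixed pair $(n,m)$, the definition gives $\Gamma_{n,m}=\theta_m^\top(\psi-\psi_{-n})$. Rearranging yields the identity
\begin{equation*}
    \theta_m^\top\psi=\Gamma_{n,m}+\theta_m^\top\psi_{-n}.
\end{equation*}
This holds for every $n$, because the leave-one-out impact $\psi_{-n}=\psi(\hat\theta_{-n})$ is a fixed vector once $n$ is chosen. It does not depend on the constraint being imposed on the deployed impact $\psi$.

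Next I substitute this identity into the welfare floor. The inequality $\theta_m^\top\psi\geq b_m$ becomes $\Gamma_{n,m}+\theta_m^\top\psi_{-n}\geq b_m$, which is the same as $\Gamma_{n,m}\geq b_m-\theta_m^\top\psi_{-n}$. Each step is an equality or adds the same finite real number $\theta_m^\top\psi_{-n}$ to both sides, so it can be reversed. The two constraints are therefore equivalent for each fixed $n$.

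The quantifier ``$\forall\,n$'' needs a brief separate argument. The left-hand constraint does not involve $n$, and for each $n$ the right-hand inequality is equivalent to it. So the floor holds if and only if the inequality holds for some $n$, if and only if it holds for all $n$. This gives the stated equivalence.

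The main point needing care is that $\psi_{-n}$ is the leave-one-out counterfactual. It is defined by training without agent $n$ and is then held fixed, and it is not constrained by the floor. Only $\psi$ plays the role of the decision variable, so $\theta_m^\top\psi_{-n}$ acts as a constant offset. I would state this explicitly. Nothing else is needed: no properties of $\bar\Psi$ and no earlier theorems enter beyond the definition and linearity of $U_m(\theta)=\theta_m^\top\psi(\theta)$.
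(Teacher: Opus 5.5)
Your proof is correct and follows the paper's argument exactly: it rewrites $\theta_m^\top\psi=\theta_m^\top\psi_{-n}+\Gamma_{n,m}$ from Definition~\ref{def:alignment-externality} and rearranges the floor. Your added remark on the quantifier over $n$ makes explicit a point the paper leaves implicit: the floor does not involve $n$, so equivalence for each $n$ gives equivalence for all $n$, provided at least one agent exists.
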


\begin{proof}
    From $\theta_m^\top \psi = \theta_m^\top \psi_{-n} + \Gamma_{n,m}$, 
    the floor $\theta_m^\top \psi \geq b_m$ is equivalent to 
    $\Gamma_{n,m} \geq b_m - \theta_m^\top \psi_{-n}$.
\end{proof}

Public spirit changes how binding agents pull on the solution. When $\gamma$ is small,
their constraints act mostly like personal welfare protections, tilting the objective
toward their own preferences. When $\gamma$ is large, the same constraints put more
weight on the social objective itself. Thus public spirit makes the constrained solution
look less like individualized compensation and more like utilitarian alignment.


\section{Additional Experiments}
\label{app:empirics}

\subsection{Dataset Information and Experimental Setup}
All experiments were run locally on commodity hardware. An A100 was rented for conducting the linear feature embeddings for the Community Alignment dataset, which took approximately 5 minutes to run. All code can be found at \url{https://github.com/michelleeesi/hiddenstructure}.

\subsubsection{412 Food Rescue}
We use the data from \cite{lee2019webuildai}, which records pairwise choices over food-rescue recipients along $k=7$ features (\textit{size, access, income, poverty, last\_donation, total\_donation, dist}). Each row of the CSV is one binary comparison by one participant: we set $\alpha_z = \phi_A - \phi_B$ and $y_z = \mathbf 1\{\text{chose }A\}$, with 19 participants (different stakeholders for the 412 Food Rescue program) and 45 pairwise responses each. This data is not public and was accessed with permission from the original authors on the WeBuildAI paper.

\subsubsection{Kidney Exchange}
We use the kidney-allocation data from \cite{keswani2024pros, boerstler2024stability}, in which respondents choose between two patients described by $k=5$ features (\textit{elderlyDep, lifeYearsGained, obesity, weeklyWorkhours, yearsWaiting}). We construct $\alpha_z$ from the released columns as the left-option minus right-option feature difference and $y_z$ from the \texttt{chosen} indicator. Each respondent responded to $\sim 400$ pairwise comparisons. The data is publicly available online at \url{https://github.com/vijaykeswani/Preference-Instability/tree/main/Study%201%262%20-%20AIES%202024}.

\subsubsection{Moral Machine}
We use the trolley-problem dataset from~\cite{awad2018moral}, which records, for each of millions of participants, a choice between two groups of characters drawn from 20 character types (\textit{Man, Woman, Pregnant, OldMan, Boy, Girl, Homeless, Criminal, MaleExecutive, FemaleExecutive, MaleAthlete, FemaleAthlete, MaleDoctor, FemaleDoctor, Dog, Cat,} etc.; $k=20$). We process the public data by restricting to annotators with between 100 and 500 responses, and by screening out bots. The Moral Machine dataset is publicly available online at \url{https://osf.io/mxa6z/overview}.

\subsubsection{Community Alignment}
We use the Community Alignment Dataset of~\cite{zhang2025cultivating}, which contains over 200,000 pairwise preference judgments collected from over 3000 annotators across five countries. We apply the natural language preference learning method of \citet{wojtowicz2026weights}. Options are encoded by a frozen sentence encoder and pairwise differences are projected onto a $K=15$-dimensional human-interpretable basis of preference-relevant axes of variation in the choice domain. Per-participant preferences are then estimated under a Bradley-Terry-Luce model on this subspace. We filtered for annotators with $\geq 20$ responses, leaving $N = 2387$ annotators as agents.

\subsection{Welfare Dispersion vs.\ Choice Precision}
\label{app:welfaresweep}

\begin{figure}[H]
    \centering
    \includegraphics[width=\linewidth]{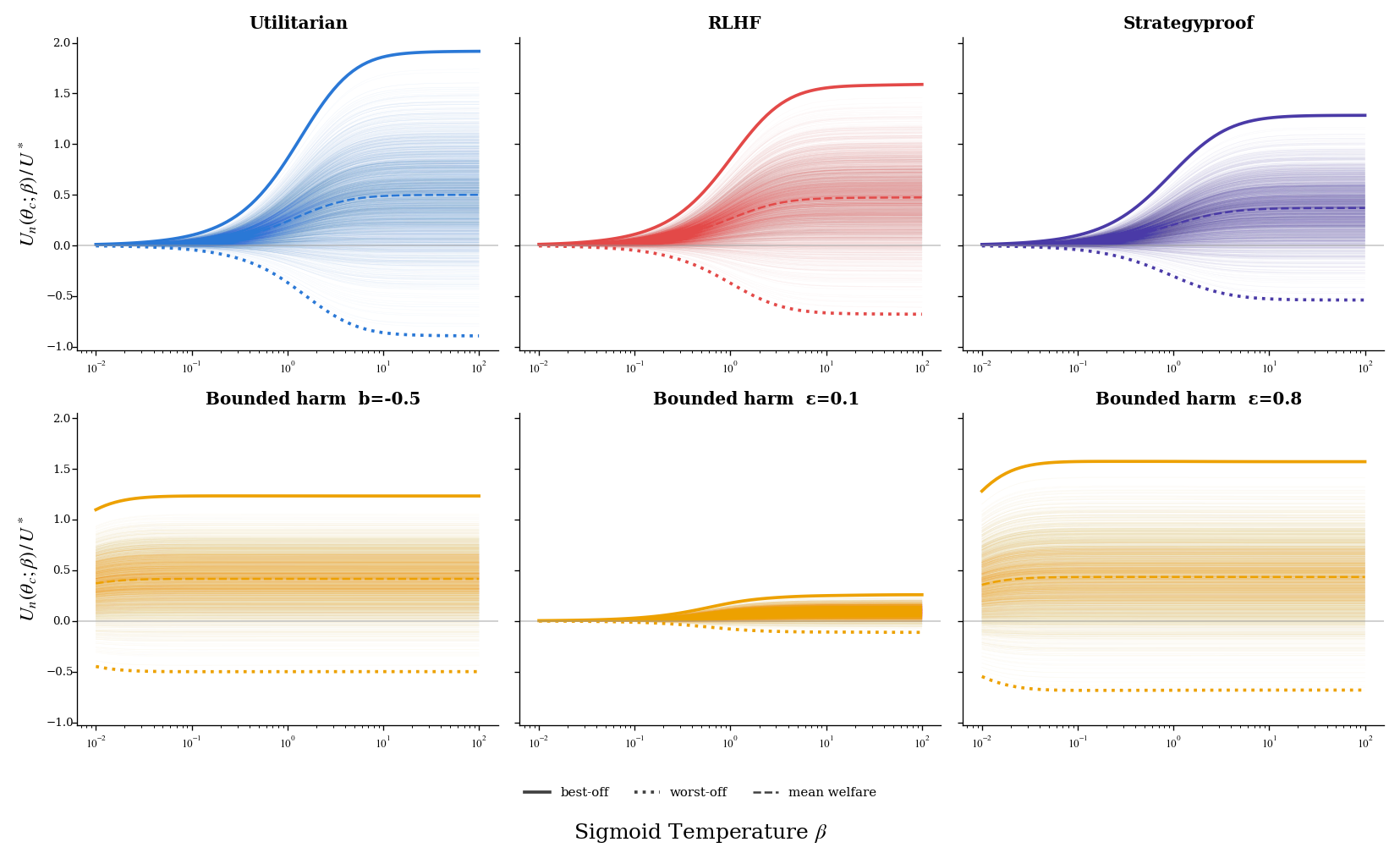}
    \caption{Community Alignment dataset: per-agent welfare $U_n/U^\star$ as a function of the choice-precision parameter $\beta$, for each mechanism. The shaded band is the density of all 2387 agents; solid, dashed, and dotted lines mark the best-off agent, mean welfare, and worst-off agent. As $\beta$ grows, choices become more deterministic, individual preferences become more powerful, and the welfare distribution spreads---widening the gap between best- and worst-off agents. Strategyproof and harm-bounded mechanisms produce lower welfare dispersion, and the bounded-harm constraints are empirically satisfied.}
    \label{fig:constrainedwelfare}
\end{figure}

\subsection{Per-Agent Harm Bound}
\label{app:peragentharm}

\begin{figure}[H]
    \centering
    \includegraphics[width=\linewidth]{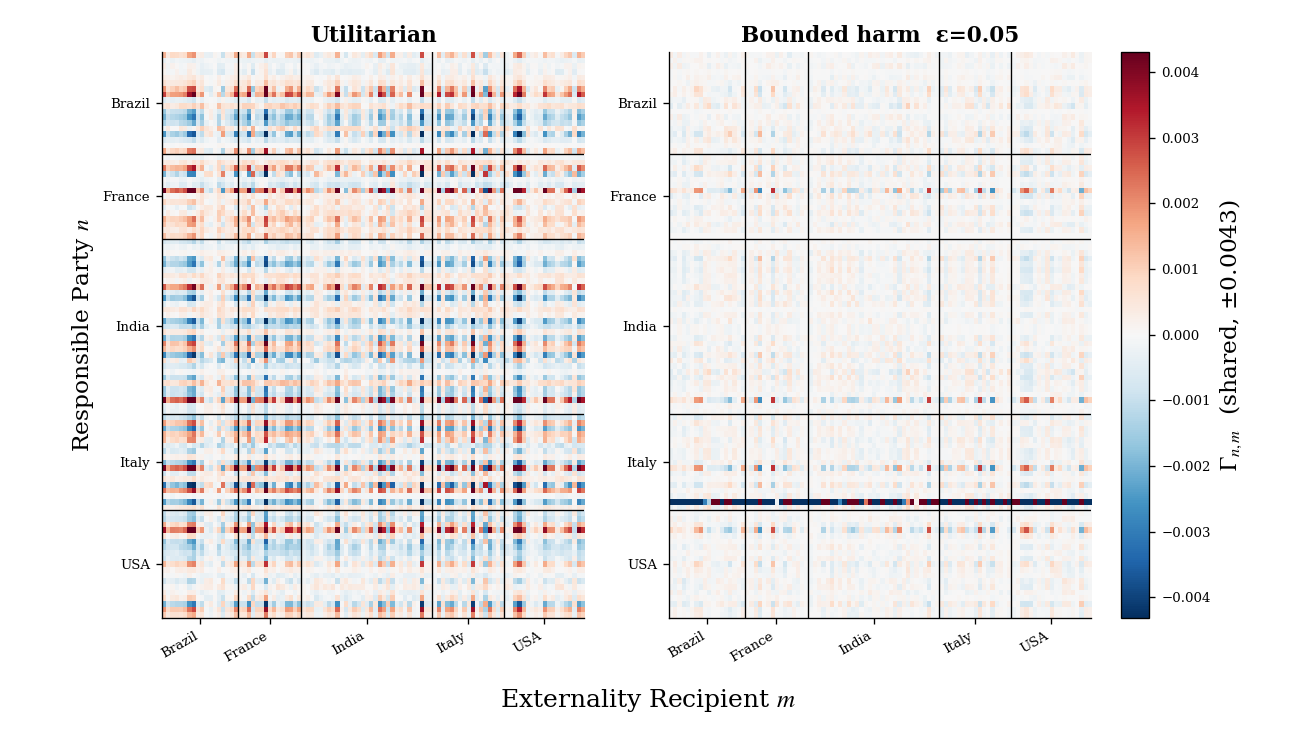}
    \caption{Participation externalities under the \emph{per-agent} harm bound. Here each agent's welfare floor is a fixed fraction of their own maximum achievable welfare, $b_n=-\epsilon\,h_{\bar\Psi}(\theta_n)$ with $\epsilon=0.05$ (lose at most 5\% of one's own best case), rather than a single absolute floor. The left panel is the utilitarian baseline; the right is the harm-bounded model. The qualitative story matches that of the single absolute floor discussed in Section~\ref{sec:constrained-welfare}: a small set of binding agents accounts for almost all of the participation externalities.}
    \label{fig:rlhfvsbh_peragent}
\end{figure}

\subsection{Full Alignment Externalities}
\label{app:fullalignmentgraphic}

\begin{figure}[H]
    \centering
    \includegraphics[width=\linewidth]{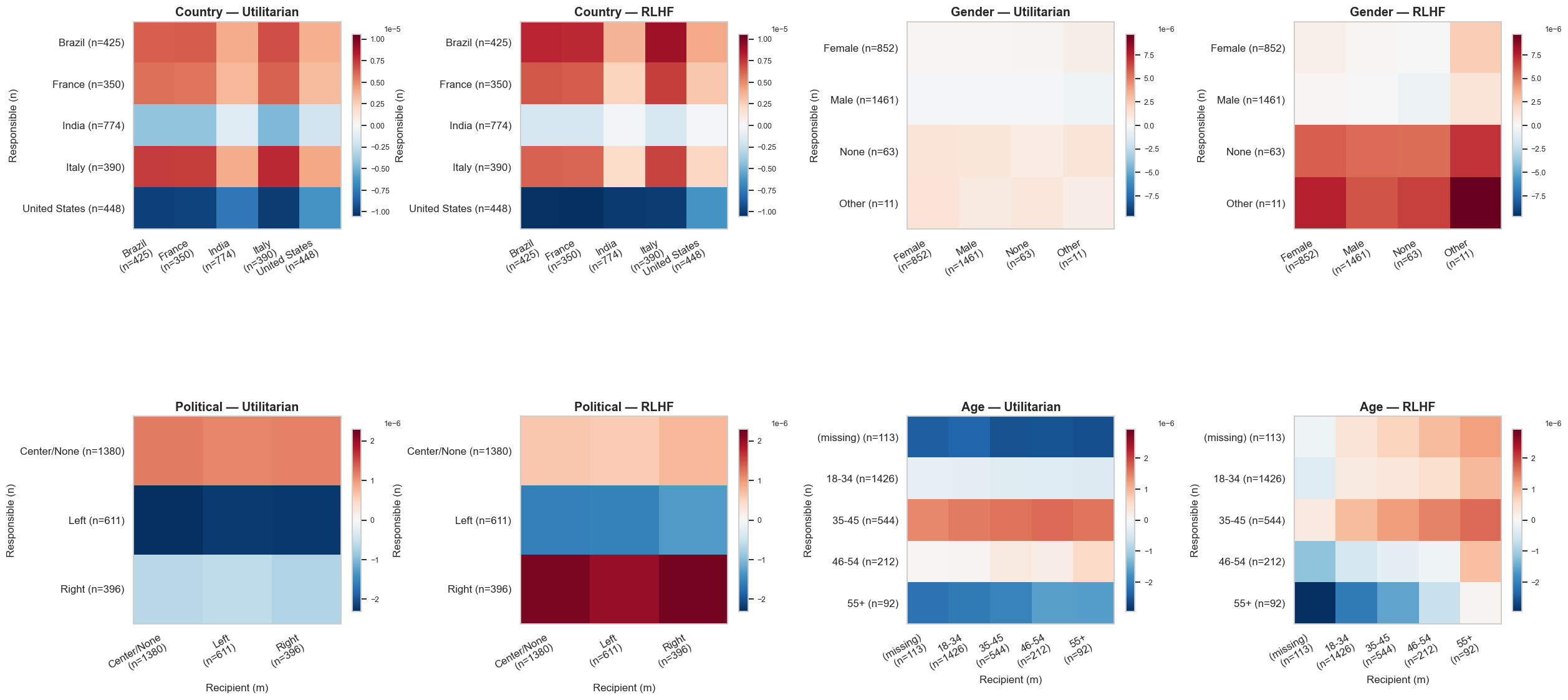}
    \caption{Community Alignment Dataset: Utilitarian vs RLHF  participation alignment externalities by demographic. The figure includes 2387 annotators, with full alignment externalities averaged within each group.}
    \label{fig:avgextdemo}
\end{figure}

\begin{figure}[H]
    \centering
    \includegraphics[width=0.7\linewidth]{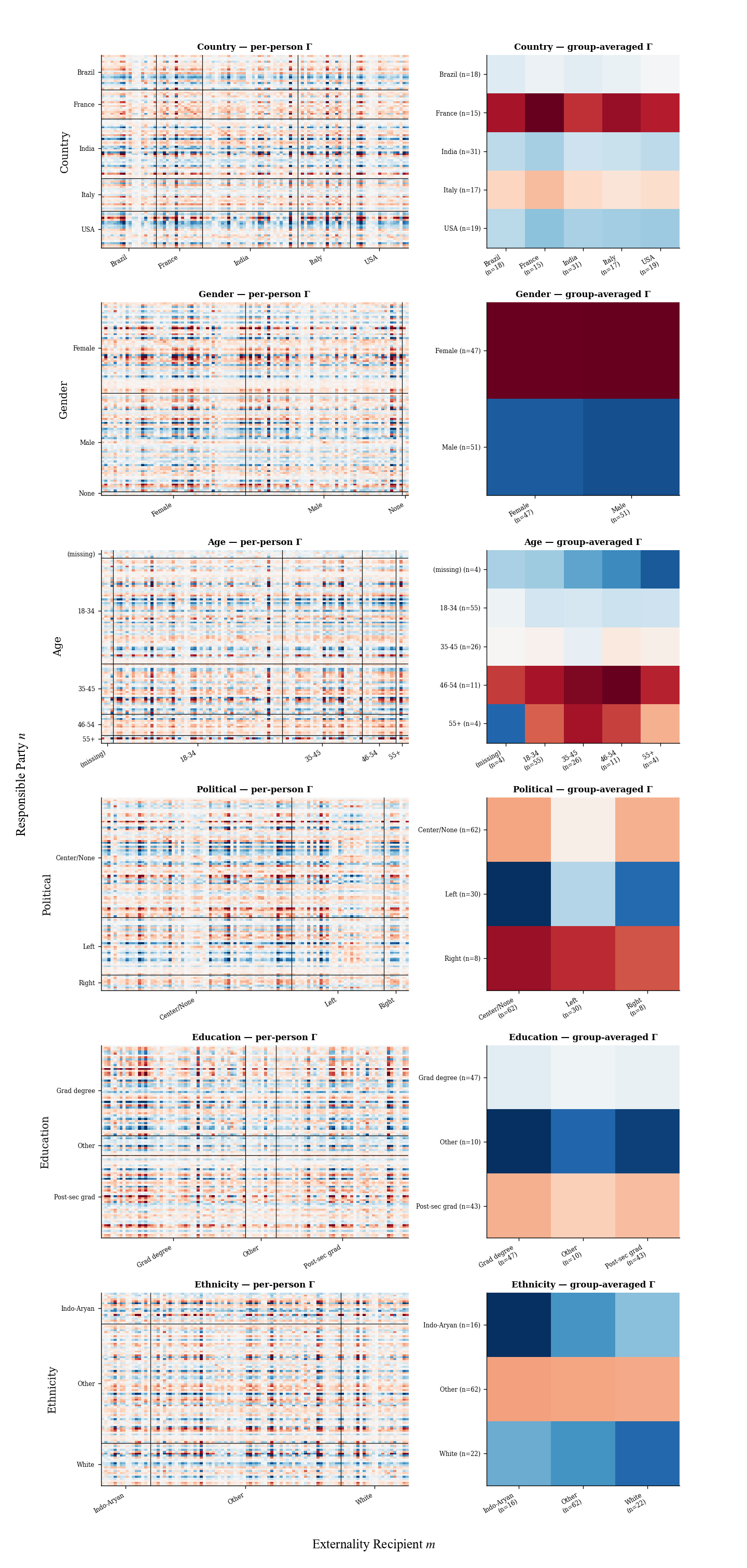}
    \caption{Community Alignment Dataset: 100 person sample with granular person-on-person full externalities compared to the group-averaged full externalities. Our participation externality formalization gives us a microscope to analyze where the per-individual welfare effects are coming from.}
    \label{fig:perperson}
\end{figure}

\subsection{Four Dataset Comparisons}

\begin{figure}[H]
    \centering
    \includegraphics[scale=0.35]{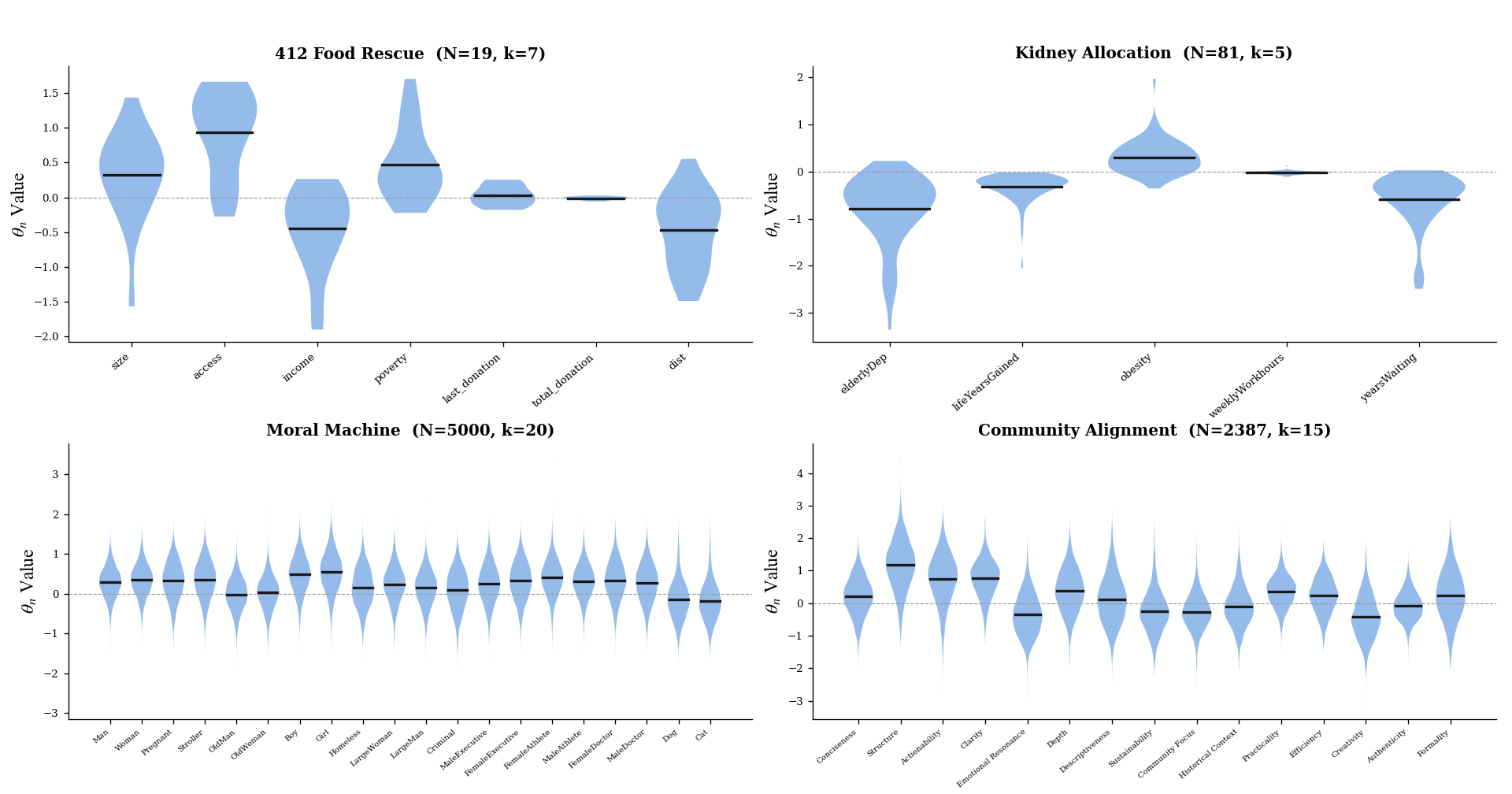}
    \caption{Preference distribution per feature dimension for four datasets. Moral Machine and Community Alignment have high disagreement, with a high number of preferences on either side of 0, while Kidney Allocation and 412 Food Rescue have less disagreement: people disagree on the magnitude of preference, but the direction is more unanimous.}
    \label{fig:preferencedist}
\end{figure}

\begin{figure}
    \centering
    \includegraphics[scale=0.45]{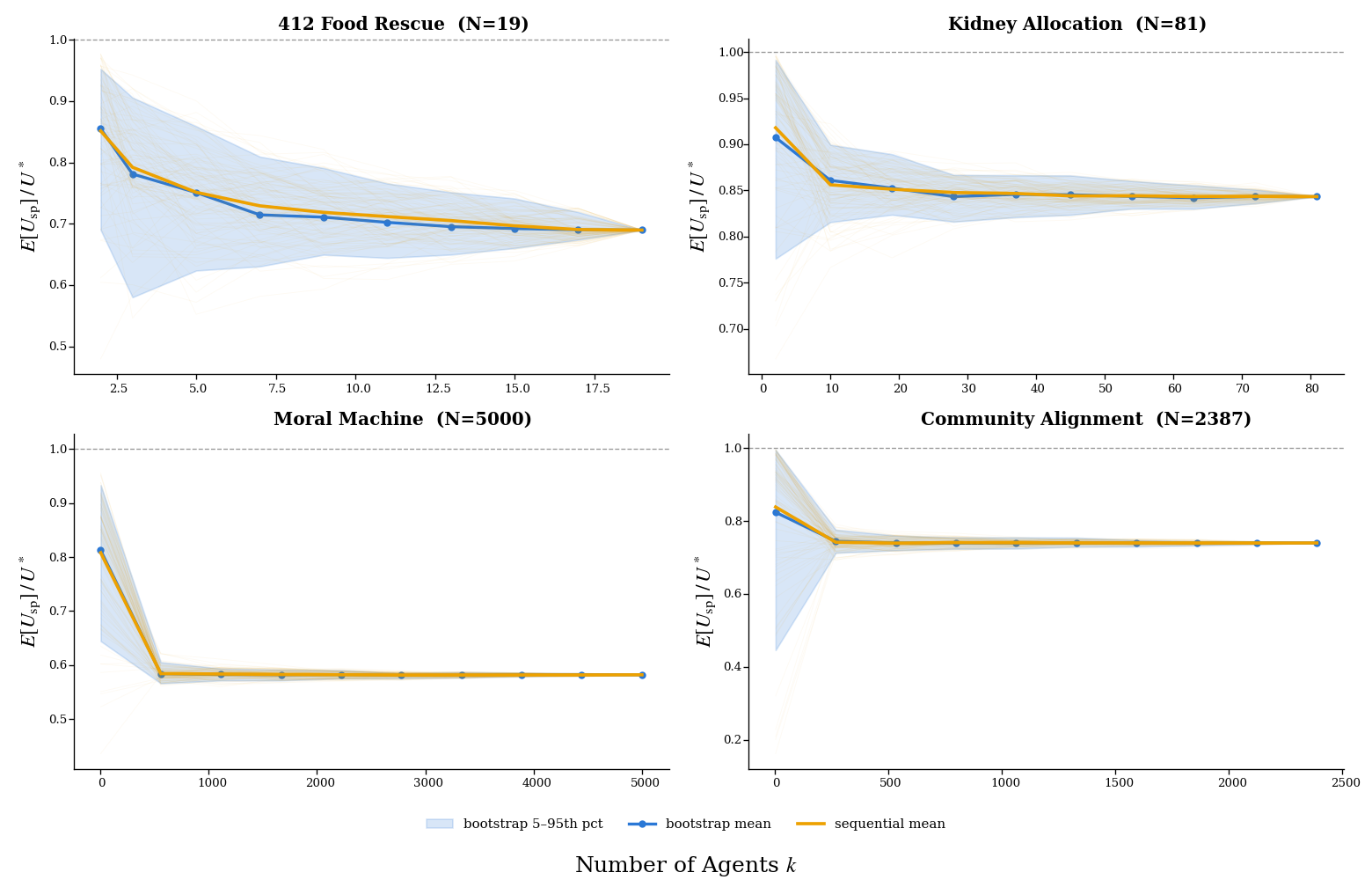}
    \caption{Expected welfare for four datasets. Bootstrap mean and std. deviations were calculated by bootstrapping $l$ agents 100 times and calculating the welfare of the strategyproof mechanism. Sequential addition denotes the procedure where 100 permutations of agents were generated, and welfare was calculated for each coalition created by adding a new agent.}
    \label{fig:expectedwelfare}
\end{figure}

\begin{figure}
    \centering
    \includegraphics[scale=0.45]{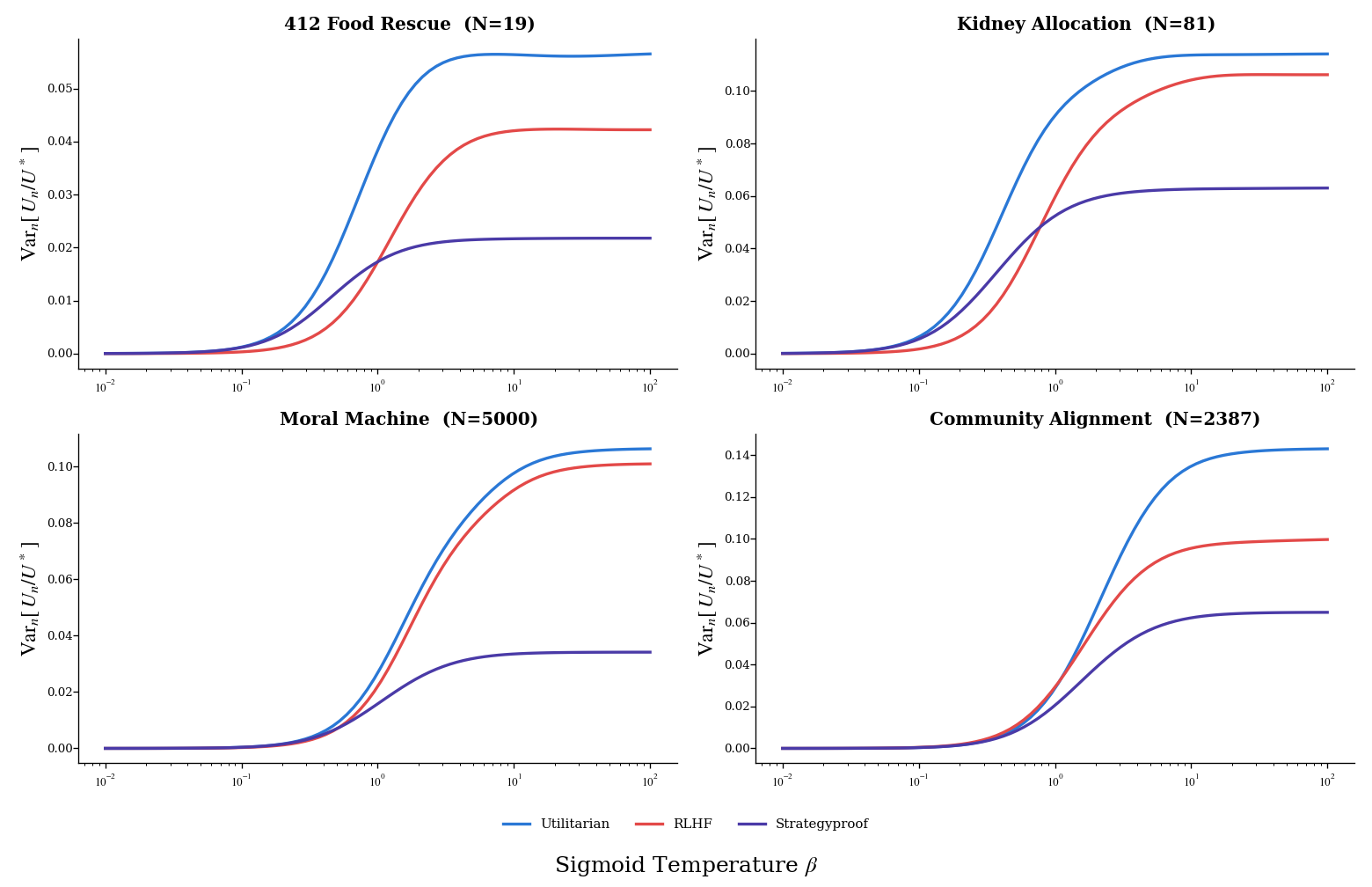}
    \caption{Welfare variance over four datasets. The strategyproof mechanism leads to the least welfare dispersion, while RLHF follows as a far second for larger values of $\beta$.}
    \label{fig:welfarevar}
\end{figure}

\begin{figure}[H]
    \centering
    \includegraphics[scale=0.4]{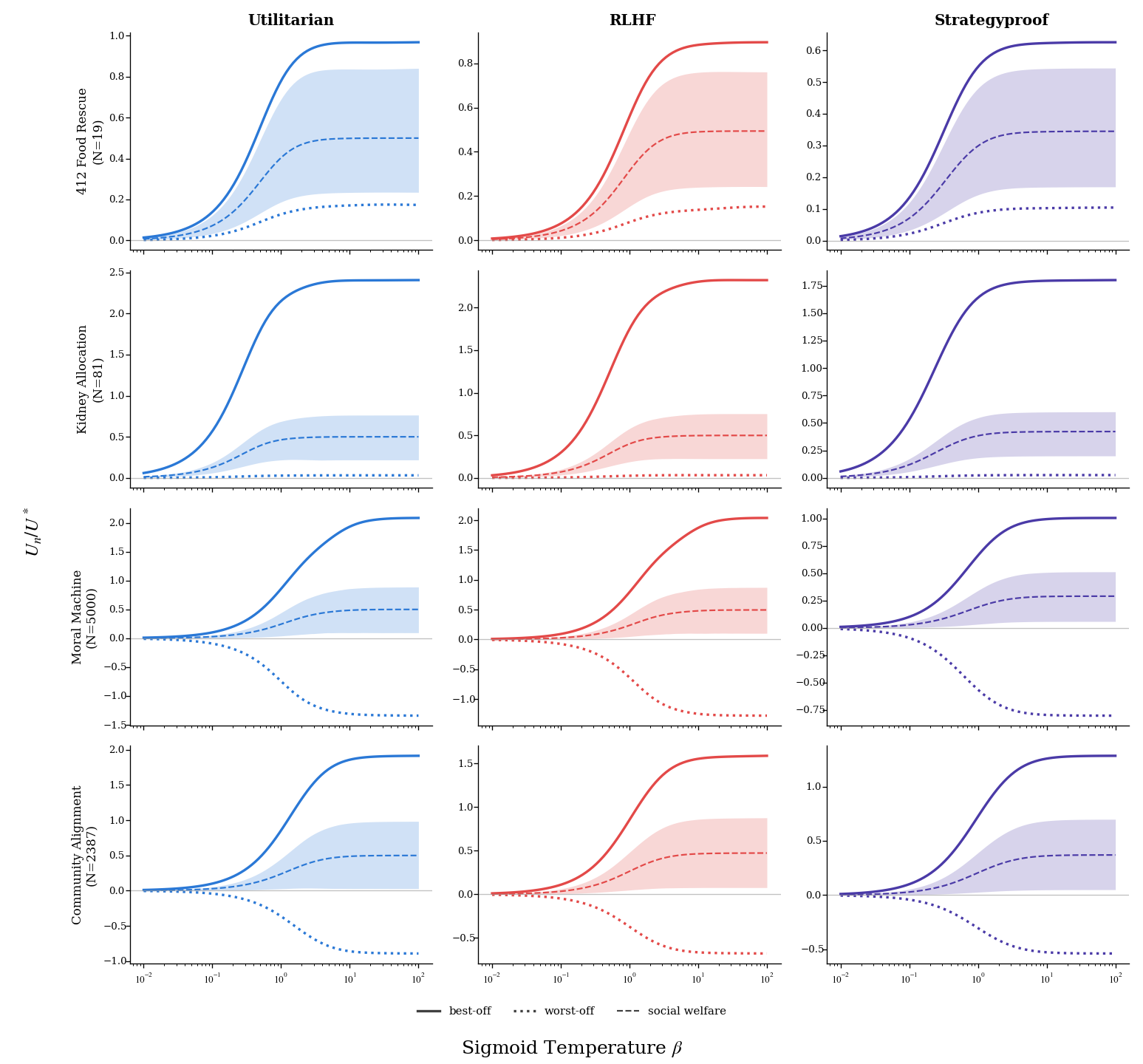}
    \caption{Welfare of the best- and worst-off agents in four datasets for three model parameters. }
    \label{fig:peragentwelfare}
\end{figure}

\bibliographystyle{plainnat}

\end{document}